\documentclass[11pt]{article}
\usepackage[inline]{enumitem}
\usepackage{yhmath}

\usepackage{amsthm,amsmath,amssymb,amsbsy,bbm,mathrsfs,supertabular,
eurosym,graphicx,enumitem,xcolor}
\usepackage{mathtools}

\newtheoremstyle{BBstyle0}  {}{}{\itshape}{}{\bfseries}{}{6pt}{}
\newtheoremstyle{BBstyle1}  {3pt}{3pt}{\rmfamily}{}{\itshape}{: }{3pt}{}
\newtheoremstyle{BBstyle2}  {3pt}{3pt}{\itshape}{}{\bfseries\large}{}{0pt}{}
\newtheoremstyle{BBstyle3}  {}{}{\itshape}{}{\bfseries}{: }{3pt}{}
\newtheoremstyle{BBstyle4}  {}{}{\rmfamily}{}{\bfseries}{}{6pt}{}
  
\usepackage[normalem]{ulem}

\newtheorem{ass}{Assumption}

\theoremstyle{definition}

\usepackage[english]{babel}

\newcommand{\norm}[1]{\left\|{#1}\right\|}
\newcommand{\cro}[1]{\left[{#1}\right]}

\newcommand{\argmin}{\mathop{\rm argmin}}

\newcommand{\Var}{\mathop{\rm Var}\nolimits}

\newcommand{\E}{{\mathbb{E}}}

\newcommand{\N}{{\mathbb{N}}}
\renewcommand{\P}{{\mathbb{P}}}
 
\newcommand{\R}{{\mathbb{R}}}

\DeclareMathAlphabet{\mathscrbf}{OMS}{mdugm}{b}{n}

\newcommand{\cA}{{\mathcal{A}}}
\newcommand{\cB}{{\mathcal{B}}}
\newcommand{\cC}{{\mathcal{C}}}
\newcommand{\cD}{{\mathcal{D}}}
\newcommand{\cE}{{\mathcal{E}}}
\newcommand{\cF}{{\mathcal{F}}}
\newcommand{\cG}{{\mathcal{G}}} 
\newcommand{\cH}{{\mathcal{H}}}

\newcommand{\cJ}{{\mathcal{J}}}
\newcommand{\cK}{{\mathcal{K}}}
\newcommand{\cL}{{\mathcal{L}}} 
\newcommand{\cM}{{\mathcal{M}}}
\newcommand{\cN}{{\mathcal{N}}}
\newcommand{\cO}{{\mathcal{O}}}
\newcommand{\cP}{{\mathcal{P}}}
 
\newcommand{\cR}{{\mathcal{R}}}
 
\newcommand{\cT}{{\mathcal{T}}}

\newcommand{\cX}{{\mathcal{X}}}
 
\newcommand{\cZ}{{\mathcal{Z}}}

\newcommand{\gT}{{\mathbf{T}}}

\newcommand{\gW}{{\mathbf{W}}}

\newcommand{\bs}[1]{\boldsymbol{#1}}

\newlist{lista}{enumerate}{1}
\setlist[lista,1]{label=\alph*),ref=\alph*)}

\newlist{listi}{enumerate}{1}
\setlist[listi,1]{label=(\roman*),ref=(\roman*),align=left}

\newcommand{\1}{1\hskip-2.6pt{\rm l}}

\newcommand{\eps}{{\varepsilon}}

\usepackage[left=2.8cm,right=2.8cm, top=4cm, bottom=4cm]{geometry}
\usepackage[utf8]{inputenc}
\usepackage[T1]{fontenc}
\usepackage{lmodern}
\usepackage{yhmath}
\usepackage{subcaption}
\usepackage{xurl}
\usepackage{amsthm,amsmath,amssymb,amsbsy,bbm,mathrsfs,supertabular,
eurosym,graphicx,xcolor}
\usepackage{bm}
\usepackage{mathtools}
\usepackage{float}
\usepackage{tikz}
\usetikzlibrary{arrows.meta, positioning, calc, fit, backgrounds}
\usepackage{tabularx}
\usepackage{booktabs}
\usepackage{anyfontsize}
\usepackage{pifont}
\usepackage{xcolor}
\newcommand{\xmark}{\ding{55}}
\RequirePackage[colorlinks,citecolor=blue,urlcolor=blue]{hyperref}
\newtheoremstyle{BBstyle0}  {}{}{\itshape}{}{\bfseries}{}{6pt}{}
\newtheoremstyle{BBstyle1}  {3pt}{3pt}{\rmfamily}{}{\itshape}{: }{3pt}{}
\newtheoremstyle{BBstyle2}  {3pt}{3pt}{\itshape}{}{\bfseries\large}{}{0pt}{}
\newtheoremstyle{BBstyle3}  {}{}{\itshape}{}{\bfseries}{: }{3pt}{}
\newtheoremstyle{BBstyle4}  {}{}{\rmfamily}{}{\bfseries}{}{6pt}{}
\usepackage[normalem]{ulem}
\usepackage[authoryear]{natbib}

\newtheorem{Theorem}{Theorem}
\newtheorem{theorem}{Theorem}
\newtheorem{lemma}[Theorem]{Lemma}
\newtheorem{Corollary}[Theorem]{Corollary}

\newtheorem{proposition}[Theorem]{Proposition}

\def\bE{\mathbb E}
\def\R{\mathbb R}
\def\bA{\mathbf A}
\def\bD{\mathbf D}
\def\bX{\mathbf X}
\def\bS{\mathbf S}
\def\bT{\mathbf T}

\def\bW{\mathbf W}
\def\bZ{\mathbf Z}
\def\bY{\mathbf Y}

\title{Semi-Supervised Learning on Graphs\\ using Graph Neural Networks\\[1.5cm]}
\author{
Juntong Chen$^*$\quad 
Claire Donnat$^\dagger$\quad 
Olga Klopp$^\ddagger$\quad
Johannes Schmidt-Hieber$^\S$ \\[0.5cm]
$^*$School of Mathematical Sciences, Xiamen University \\
$^\dagger$Department of Statistics, University of Chicago \\
$^\ddagger$ESSEC Business School \\
$^\S$Department of Applied Mathematics, University of Twente
}
\date{}
\begin{document}
\maketitle

\begin{abstract}
Graph neural networks (GNNs) work remarkably well in semi-supervised node regression, yet a rigorous theory explaining when and why they succeed remains lacking. To address this gap, we study an aggregate-and-readout model that encompasses several common message passing architectures: node features are first propagated over the graph then mapped to responses via a nonlinear function.  For least-squares estimation over GNNs with linear graph convolutions and a deep ReLU readout, we prove a sharp non-asymptotic risk bound that separates approximation, stochastic, and optimization errors. The bound makes explicit how performance scales with the fraction of labeled nodes and graph-induced dependence. Approximation guarantees are further derived for graph-smoothing followed by smooth nonlinear readouts, yielding convergence rates that recover classical nonparametric behavior under full supervision while characterizing performance when labels are scarce. Numerical experiments validate our theory, providing a systematic framework for understanding GNN performance and limitations.
\end{abstract}
\section{Introduction}
Graph Neural Networks (GNNs) have become the default tool for semi-supervised prediction on graphs: given a graph $\mathcal{G}=(\mathcal{V},\mathcal{E})$ on $n=|\mathcal{V}|$ nodes with features $X_i$, we observe a response variable $Y_i$ on a subset of nodes, and aim to predict the rest \citep{ma2019flexible, song2022graph,stojanovic2015semi,zhou2019graph}. A central assumption in this setting is that the graph specifies how information propagates across nodes, which, if efficiently leveraged, can substantially boost prediction. GNNs have achieved strong performance for node-level prediction on interaction graphs such as social or hyperlink networks where they predict outcomes like website traffic, future engagement, or satisfaction from partial labels \citep{berg2017graph,deng2022recommender,ma2011recommender}. They are also increasingly used in spatially resolved omics, where nodes are spots or cells connected by spatial neighborhoods and the goal is to predict expensive assays (e.g., gene or protein measurements) from observed modalities by propagating local context through the graph \citep{han2022semi,jiang2021graph}; we present real-data case studies in Section~\ref{experiment-sec}.

Semi-supervised learning on graphs has a rich and long history in the statistics literature.
Classical graph semi-supervised learning is dominated by (i) Laplacian-based regularization \citep{Belkin} and (ii) label propagation \citep{xiaojin2002learning, zhou2003learning}. While both of these approaches have been extensively studied and benefit from solid theoretical guarantees, they focus on spatial regularization, often ignoring node features. 

By contrast, modern GNNs inject node features into propagation and deliver strong empirical performance across domains. One of the key ingredients of their success lies in the use of message-passing layers that perform a localized averaging of features, effectively acting as a learnable low-pass filter on the graph signal \cite{defferrard2016}. The propagation rule enables the model to learn representations that are smooth across the graph and discriminative in their features. \cite{Max} empirically showed that graph convolutional networks (GCNs), one of the earliest types of GNNs, significantly outperform manifold regularization and transductive SVMs, establishing GNNs as the dominant paradigm for graph semi-supervised learning. While a variety of GNN architectures have been proposed (e.g. \cite{pmlr-v97-wu19e,zhu2020beyond}), they largely share the same spirit: all involve an initial aggregation of node information through a message-passing algorithm before its synthesis into an output via a readout step (see Appendix~\ref{app:related_works} for an extended discussion of related works).

Despite their success in various practical applications, a rigorous statistical foundation for GNNs in the semi-supervised regime remains elusive. This theoretical gap is particularly striking given the paradoxical empirical behavior of these models: while GNNs can achieve stable performance with limited labels \cite{oono2019graph}, they simultaneously exhibit high sensitivity to structural perturbations \cite{geisler2021robustness, zugner2018adversarial}. To address this, we study nonparametric semi-supervised node regression under a compositional data-generating mechanism that leverages message passing. We focus on two key problems: (i) how graph-induced propagation affects the effective complexity of the predictor, and (ii) how well the GNN class approximates the underlying regression function when it admits a propagation–nonlinearity compositional form. The main contributions of this work are threefold:
\begin{enumerate}[label=(\roman*)]
\item  \textbf{A sharp oracle inequality for general estimators in the semi-supervised graph regression setting}  (Theorem \ref{main}) that decomposes prediction error into
optimization error $+$ approximation error $+$ stochastic error. Crucially, our analysis explicitly characterizes how these errors depend on the unmasked proportion and the graph’s topology through a single parameter.
\item \textbf{Approximation theory for message passing:} We analyze the approximation capabilities of GNNs with linear graph convolution layers followed by deep ReLU networks. We show that this architecture can approximate functions formed by the composition of a graph-induced propagation step and a Hölder-smooth synthesizing (readout) function (Lemma \ref{overall-approx}).
    \item \textbf{Rates that expose label-scarcity and graph effects.} Combining (i) and (ii), we derive explicit convergence rates for the least-squares GNN estimator (Theorem \ref{converge-rate}). With a properly chosen architecture (depth and width scaling with graph size), the estimator achieves a convergence rate governed by the smoothness of the underlying regression function and its intrinsic input dimension.
\end{enumerate}
Our analysis builds on the oracle-inequality and approximation-theoretic framework developed for sparse deep ReLU networks in classical nonparametric regression, most notably \cite{Schmidt-Hieber}. The semi-supervised graph setting, however, introduces two core challenges that require novel analytical tools. First, responses are observed only on a random subset of nodes; we incorporate this missingness mechanism directly into the risk decomposition to quantify the effect of limited supervision. Second, graph propagation induces nontrivial statistical dependencies because predictions at each node rely on overlapping neighborhoods; we introduce a bounded receptive field assumption to model this graph-structured dependency and, via graph coloring, develop concentration arguments tailored to localized interactions. Combined with new metric entropy bounds for the graph-convolutional component and approximation guarantees for compositions of graph filters and Hölder-smooth functions, this framework yields explicit non-asymptotic risk bounds and convergence rates for least-squares estimation over GNN classes. Notably, this general rate recovers the optimal minimax rate for standard (non-graph) regression in the special case of full supervision and purely local node responses.

This article is structured as follows. In Section~\ref{statis-setting}, we introduce the semi-supervised regression setting with graph-structured data. We analyze GNN-based estimation in Section~\ref{esti-gnns}, where, under a locality condition, we establish an oracle inequality for any given estimator and explicitly characterize the convergence rate of the least squares estimator in terms of the proportion of labeled nodes and the graph’s receptive field. Numerical experiments on both synthetic and real-world datasets are presented in Section~\ref{experiment-sec}. We conclude in Section~\ref{conclusion-sec} and all the related proofs are provided in the Appendix.

{\textit {Notation}}: We set $\mathbb{N}_0=\{0,1,2,\ldots\},$ $\mathbb{N}=\{1,2,\ldots\}$, $\R_{+}=(0,\infty),$ and $[m] =\{1, \ldots, m\}$. In this paper, vectors and matrices are denoted by bold lowercase and uppercase letters, respectively. In specific contexts, we may use bold lowercase letters for fixed matrices to distinguish them from their random counterparts. For a $d_1 \times d_2$ matrix $\mathbf{M}$, define the entry-wise maximum norm\vspace{5pt}
$$\|\mathbf{M}\|_{\infty}=\max_{i\in[d_1],j\in[d_2]}|\mathbf{M}_{i,j}|,$$
the row-sum norm\vspace{5pt}
$$\|\mathbf{M}\|_{1,\infty}=\max_{i\in[d_1]}\sum_{j\in[d_2]}|\mathbf{M}_{i,j}|,$$
and the Frobenius norm 
$$\|\mathbf{M}\|_{\operatorname{F}}=\sqrt{\sum_{i\in[d_1],j\in[d_2]}\mathbf{M}_{i,j}^2}.$$ We denote the $i$-th row of $\mathbf{M}$ by $\mathbf{M}_{i,\cdot}\,$. For $p$-dimensional row or column vectors $\mathbf{v}=(v_1,\ldots,v_p)$, we define $|\mathbf{v}|_{\infty} = \max_{1 \leq i \leq p} |v_i|$ and $|\mathbf{v}|_{1} = \sum_{i=1}^p |v_i|$. For any $a\in\R$, $\lfloor a \rfloor$ denotes the largest integer strictly less than $a$ and $\lceil a \rceil$ the smallest integer greater than or equal to $a$. We use $\log_2$ for the binary logarithm and $\log$ for the natural logarithm. For a set $\cC$, we denote its cardinality by $|\cC|$. For two nonnegative sequences $(\alpha_n)_n$ and $(\beta_n)_n$, we write $\alpha_n \lesssim \beta_n$ if there exists a constant $c$ such that $\alpha_n \leq c\beta_n$ holds for all $n$, and we write $\alpha_n \asymp \beta_n$ when $\alpha_n \lesssim \beta_n\lesssim\alpha_n$. If the function value $h(\bs{x})$ is  a matrix, we denote the $i$-th row of $h(\bs{x})$ by $(h(\bs{x}))_i$ or, if no ambiguity arises, by $h_i(\bs{x})$. For any two real-valued functions $f, g:\cX\to\R$, their sup-norm distance is defined as $\|f - g\|_{L^\infty(\cX)}=\sup_{{\bs x} \in \cX} \, |f({\bs x}) - g({\bs x})|$. For any two vector-valued functions $f=(f_1,\ldots,f_m)^{\top},\ g=(g_1,\ldots,g_m)^{\top}$ where each component function satisfies $f_i, g_i: \mathcal{X}\to \mathbb{R}$, the sup-norm distance between $f$ and $g$ is defined as
$$\|f - g\|_{L^\infty(\cX)}= \max_{1 \leq i \leq m} \, \sup_{{\bs x} \in \cX} \, \big|f_i({\bs x}) - g_i({\bs x})\big|.$$ When the domain $\mathcal{X}$ is clear from context, we simply write $\|\cdot\|_{\infty}$.

\section{Statistical setting}\label{statis-setting}
Consider a graph $\cG=(\mathcal{V},\mathcal{E})$ with vertex set $\mathcal{V}$ and edge set $\mathcal{E}.$ Let $n = |\mathcal{V}|$ denote the number of vertices. We encode the graph structure through its adjacency matrix $\mathbf{A} = (A_{i,j}) \in \{0,1\}^{n \times n}$, defined by
\begin{equation*}
A_{i,j} = \begin{cases}
1, & \text{if } j \in \mathcal{N}(i), \\
0, & \text{otherwise},
\end{cases}
\end{equation*}
where $\mathcal{N}(i) \subseteq \mathcal{V}$ denotes the set of neighbors of node $i$. We define $\widetilde{\mathbf{A}} = \mathbf{A} + \mathbf{I}_n$ as the adjacency matrix with self-loops, and let $\mathbf{D}$ and $\widetilde{\mathbf{D}}$ be the corresponding diagonal degree matrices of $\mathbf{A}$ and $\widetilde{\mathbf{A}}$, respectively. Each node $i \in\{1,\ldots,n\}$ is assumed to have a feature vector $X_i \in \mathcal{X}$, where $\cX$ is a compact subset of $\mathbb{R}^d$. For simplicity, we assume $\mathcal{X} = [0,1]^d$ throughout the paper. We assume that the feature vectors $X_1, \ldots, X_n$ are independent draws from distributions $P_{X_i}$ on $[0,1]^d$. The feature matrix $\mathbf{X} \in [0,1]^{n \times d}$ contains $X_i^\top$ in its $i$-th row. 

In the semi-supervised framework, we observe a random subset of nodes $\Omega \subseteq \mathcal{V}$. For each node $i \in \Omega$, both the feature vector $X_i$ and the response $Y_i \in \mathbb{R}$ are observed. For nodes in the complement $\Omega^c = \mathcal{V} \setminus \Omega$, only the features are available. We assume nodes are included in $\Omega$ independently with probability $\pi \in (0, 1]$, and denote by $$\omega_i \sim \operatorname{Bernoulli}(\pi)$$ the corresponding Bernoulli indicator. The objective is to predict the response values for all nodes in $\Omega^c$. This means that the (training) dataset is
\begin{align}
    \big\{X_1,\ldots, X_n\big\} \cup \{Y_i\}_{i\in \Omega}.
\end{align}
We consider a node-level regression problem on a graph, where responses associated with individual nodes depend not only on their own features but also on the features of neighboring nodes through the graph structure. To capture such structural dependencies, we adopt a compositional modeling framework that separates feature propagation from local prediction.

Formally, for each node $i \in \{1,\ldots,n\}$, the response $Y_i$ is assumed to follow the statistical model
\begin{equation}\label{model}
Y_i = \varphi^{*}\big(\psi_{\mathbf{A},i}^{*}(\mathbf{X})\big) + \varepsilon_i,
\end{equation}
for independent $\varepsilon_i \sim \mathcal{N}(0,1)$ that are also independent of $\mathbf{X}$. The resulting (overall) regression function is given by
\begin{equation}\label{setting-for}
f^* = (f_1^*, \ldots, f_n^*)^{\top}
\quad\mbox{with}\quad
f_i^* = \varphi^{*} \circ \psi_{\mathbf{A},i}^{*},
\end{equation}
and is assumed to admit a two-stage structure: the response is obtained by applying a shared nonlinear map $\varphi^{*}:\mathbb{R}^d\to\mathbb{R}$
to the propagated features. The high-dimensional inner (regression) function
\[
\psi^*_{\mathbf{A}}: [0,1]^{n \times d} \to [-M,M]^{n \times d},
\]
maps to matrices with entries bounded in absolute value by a chosen constant \( M\geq1 \); the subscript \(\mathbf{A}\) indicates the dependence on the graph adjacency matrix. This function aggregates information across the graph to produce propagated node features. Here, $\psi_{\mathbf{A},i}^* :[0,1]^{n \times d} \to [-M,M]^{1 \times d}$ denotes the restriction of $\psi^*_{\mathbf{A}}$ to its $i$-th row. In many GNN architectures, feature propagation is implemented by applying a linear graph filter to node features. Common filters are polynomials in a graph operator, such as the Laplacian $\mathbf{L} = \mathbf{D} - \mathbf{A}$, the adjacency matrix or normalized variants like $\widetilde{\mathbf{D}}^{-1/2} \widetilde{\mathbf{A}} \widetilde{\mathbf{D}}^{-1/2}$ and $\widetilde{\mathbf{D}}^{-1} \widetilde{\mathbf{A}}$ \cite{chung97,Max,10.5555/3504035.3504468}. Further examples are discussed in \cite[Section~6]{Max} and the survey \cite{wu2019survey}. Motivated by these constructions, we assume that the inner regression function  $\psi^*_{\mathbf{A}}$ is close to a matrix polynomial with respect to a chosen graph propagation operator $\bS_\mathbf{A}$. More precisely, let $\mathcal{P}_k(\beta,\bS_{\mathbf{A}})$ denote the set of functions $\psi:[0,1]^{n \times d} \to [-M,M]^{n \times d}$ of the form
\begin{equation}\label{p-k-s-def}
\psi({\bs{x}}) = \sum_{j=1}^k \theta_j \bS^j_{\mathbf{A}}{\bs{x}}, 
\end{equation}
where $\bS_{\mathbf{A}}$ is an $n \times n$ graph propagation operator that leverages the connectivity encoded in $\mathbf{A}$, and the coefficients $\theta_j$ satisfy $|\theta_j| \leq \beta$ for some constant $\beta > 0$. Typically, $\bS_{\mathbf{A}}$ exhibits a low-dimensional structure, such as row-wise sparsity \cite{hwang2023grow,yan2020hygcn}. The $j$-th power of $\bS_{\mathbf{A}}$ characterizes feature propagation over $j$ hops, while the coefficients $\theta_j$ quantify the influence of $j$-hop neighborhoods. To accommodate settings where the propagation may only be approximately polynomial, due to nonlinearities or model mismatch, we introduce the $\rho$-neighborhood
\begin{equation}\label{delta-sum}
\mathcal{F}_{\rho}(\beta,k,\bS_{\mathbf{A}}) = \left\{ f :\cro{0,1}^{n \times d} \to \cro{-M_{\rho},M_{\rho}}^{n \times d} \Big|\inf_{g \in \mathcal{P}_k(\beta,\bS_{\mathbf{A}})} \sup_{{\bs{x}} \in \mathcal{X}^n}\|f({\bs{x}}) - g({\bs{x}})\|_{\infty} \leq \rho \right\},
\end{equation}
with $M_{\rho}=M+\rho$. We then assume that the inner (regression) function satisfies
$$\psi^*_{\mathbf{A}}\in\mathcal{F}_{\rho}(\beta,k,\bS_{\mathbf{A}}).$$
In the special case where $\bS_{\mathbf{A}}$ is symmetric with eigendecomposition $\bS_{\mathbf{A}} = \mathbf{U}^{\top}\mathbf{\Lambda}\mathbf{U}$, any $f \in \mathcal{P}_k(\beta,\bS_{\mathbf{A}})$ can be expressed as
$$f({\bs{x}}) =\mathbf{U}^{\top}\left( \sum_{j=1}^k \theta_j\mathbf{\Lambda}^j \right) \mathbf{U}{\bs{x}}.$$ Consequently, a function $g(\bs{x}) = \mathbf{U}^\top \mathbf{Q} \mathbf{U}\bs{x}$ lies within distance $\rho$ (in sup-norm) of some $f \in \mathcal{P}_k(\beta,\bS_{\mathbf{A}})$ provided that the spectral filter $\mathbf{Q}$ satisfies $\|\mathbf{Q} - P(\mathbf{\Lambda})\|_{\operatorname{F}} \leq \rho/\sqrt{n}$, where $P(\mathbf{\Lambda})=\sum_{j=1}^k \theta_j \mathbf{\Lambda}^j$. 

In particular, when $\rho=0$, the inner regression function simplifies to the exact polynomial form. Consequently, model \eqref{model} resembles a multi-index model, a class of functions extensively studied in recent theoretical deep learning literature \cite{Hajjar2023, mousavi-hosseini2025learning,zbMATH08112234,doi:10.1137/24M1672158}. Let $\mathrm{vec}(\bX) \in \mathbb{R}^{nd}$ denote the vectorized feature matrix $\bX$, and let $\mathbf{V}_i\in\mathbb{R}^{nd\times d}$ denote the linear map induced by $\bS_{\mathbf{A}}$. We can then rewrite the model as
\[
Y_i =\varphi^*\big(\mathbf{V}_i^\top \mathrm{vec}(\mathbf{X})\big) + \varepsilon_i.
\]
This formulation reveals that each response depends on a low-dimensional projection of the global feature matrix, passed through a shared nonlinearity $\varphi^*$. 
However, unlike the classical multi-index model where the projection is typically fixed or unstructured, here the index maps $\{\mathbf{V}_i\}_{i=1}^n$ are node-specific and structurally constrained by the graph topology, reflecting the aggregation of local neighborhood information.

\section{Estimation using GNNs}\label{esti-gnns}
In this section, we employ graph neural networks to approximate the target function and derive a generalization bound for the resulting estimator. The architecture of the considered GNNs consists of two main components: graph convolutional layers followed by deep feedforward neural networks. The precise architecture is specified in detail as follows.

Graph convolutional networks (GCNs) provide a natural approach for approximating the target function $\psi^*_{\mathbf{A}}$ that propagates node feature information. A GCN typically takes the feature matrix ${\bs{x}}\in[0,1]^{n\times d}$ and a prespecified propagation matrix $\bT$ as the input and produces a transformed version of the input feature matrix \cite{Max}. Specifically, let $L$ represent the number of GCN layers. For each $\ell\in\{0,\ldots,L-1\}$, the output of the layers can be defined recursively via
\begin{equation}\label{gcn-def}
H^{(\ell+1)}_{\bT}({\bs{x}})=\sigma_{\operatorname{GCN}}\big(\bT H^{(\ell)}_{\bT}({\bs{x}})\bW_{\ell+1}\big),
\end{equation}
where $H^{(0)}_{\bT} = {\bs{x}}$, $\sigma_{\operatorname{GCN}}$ denotes the activation function, applied element-wise, and $\bW_{\ell+1}$ are $d \times d$ weight matrices, whose entries are learnable parameters. Empirical studies demonstrate that linear GCNs, that is, $\sigma_{\operatorname{GCN}} = \operatorname{id}$, achieve accuracy comparable to their nonlinear counterparts in various downstream tasks \cite{pmlr-v97-wu19e,wang2022powerful}. Theoretically, \cite{oono2019graph} shows that nonlinearity does not enhance GCN expressivity. The linear setting has been further explored under the name poly-GNN in \cite{poly-GNN} and compared to standard GCNs in \cite{gfnn,gfnnICPR}. In particular, for the linear feature propagation defined in \eqref{p-k-s-def}, the identity function is a well-motivated choice for the activation; we therefore set $\sigma_{\operatorname{GCN}} = \operatorname{id}$. Drawing inspiration from residual connections, we aggregate the outputs of all convolutional layers through a weighted sum rather than using only the last layer. This approach of incorporating connections from previous layers preserves multi-scale representations from different neighborhood levels, thus mitigating the over-smoothing phenomenon \cite{chen2020simple,li2021training,zhou2025model}. Accordingly, $\mathcal{G}(L,\bT)$ is defined as the class of functions of the form $\sum_{\ell=1}^{L}\gamma_{\ell} H^{(\ell)}_{\bT}$, that is
\begin{equation}\label{gcn-l-t}
\mathcal{G}(L,\bT) = \left\{\ \bs{x} \mapsto \sum_{\ell=1}^{L}\gamma_{\ell}\bT^\ell \bs{x}\bW_1 \cdots\bW_\ell \;\middle|\;\bW_{\ell} \in [-1,1]^{d\times d}\ \text{and}\ \gamma_{\ell} \in [-1,1]\right\},
\end{equation}
where the coefficients $\gamma_{\ell}$ are reweighting parameters.

Deep feedforward neural networks are commonly applied after graph convolutional layers to synthesize the propagated feature vectors and produce final node-level outputs \cite{Zhou2020GraphNN,wu2019survey,xu2019powerful}. Here we follow the same paradigm and refer to this component as deep neural networks (DNNs). Let the width vector $\bs{p} = (p_0, p_1, \ldots, p_{L+1}) \in \N^{L+2}$ satisfy $p_0 = d$ and $p_{L+1} = 1$, and let $\operatorname{ReLU}(u) = \max\{0, u\}$ denote the ReLU activation function. We define $\cF{(L,\bs{p})}$ as the class of deep ReLU neural networks of depth $L \in \N_0$ and width vector $\bs{p}$, comprising all functions $f:\R^{d}\rightarrow\R$ of the form
\begin{equation}\label{neural-network}
f({\bm{x}})={\bs\Theta}_{L}\circ\operatorname{ReLU}\circ \hspace{2pt}{\bs\Theta}_{L-1}\circ\cdots\circ\operatorname{ReLU}\circ\hspace{2pt}{\bs\Theta}_{0}({\bs{x}}),
\end{equation}
where, for $\ell=0,\ldots,L$, $${\bs\Theta}_{\ell}({\bs{y}})={\mathbf M}_{\ell}{\bs{y}}+{\bs b}_{\ell}.$$ Here, ${\mathbf M}_{\ell}$ is a $p_{\ell}\times p_{\ell+1}$ weight matrix, ${\bs b}_{\ell}$ is a bias vector of size $p_{\ell+1}$, and the $\operatorname{ReLU}$ activation function is applied component-wise to any given vector. Again, we assume that all entries of the weight matrices and bias vectors lie within $[-1,1]$. In practice, sparsity in neural networks is often encouraged through techniques such as regularization or specialized architectures \cite{goodfellow2016deep}. A notable example is dropout, which promotes sparse activation patterns by randomly deactivating units during training, thereby ensuring that each neuron is active only for a small subset of the training data \cite{2014dropout}. In line with the framework introduced by \cite{Schmidt-Hieber}, we explicitly enforce parameter sparsity by restricting the network to utilize only a limited number of non-zero parameters. More precisely, let $\|{\mathbf M}_{\ell}\|_0$ and $|{\bs b}_{\ell}|_0$ denote the number of non-zero entries in the weight matrix ${\mathbf M}_{\ell}$ and bias vector ${\bs b}_{\ell}$, respectively. For any $s\geq1$ and $F>0$, we define the class of $s$-sparse DNNs, truncated to the range $[-F,F]$, as
\begin{equation}
\cF(L,{\bs p},s,F)=\left\{(f\vee-F)\wedge F:\;f\in\cF(L,{\bs p}),\ \sum_{\ell=0}^L\|{\mathbf M}_{\ell}\|_0+|{\bs b}_{\ell}|_0\leq s\right\}.
\end{equation}

In conclusion, the considered graph neural networks class is given by
\begin{equation}\label{gnn-models}
\cF(\bT,L_1,L_2,{\bs{p}},s,F)=\Big\{f:\;f_i(\bs{x})=h\big(g_i(\bs{x})\big),\ h\in\mathcal{F}{(L_2,{\bs{p}},s,F)},\ g\in\cG{(L_1,\bT)}\Big\}, 
\end{equation}
where $L_1,L_2\in\mathbb{N}_0$ are the respective depths of the convolutional and feedforward layers, and $\bT$ is a pre-specified convolutional operator. Figure~\ref{fig:gnn-graph} provides a schematic explanation of the constituent blocks. 
\begin{figure}[t]
\centering
\resizebox{\linewidth}{!}{%
\begin{tikzpicture}[
  every node/.style={font=\small},
  line cap=round, line join=round,
  >=Stealth,
  ink/.style={draw=black!70},
  faint/.style={draw=black!20},
  softfill/.style={fill=black!3},
  gedge/.style={ink, line width=0.85pt},
  hnode/.style={draw=cyan!80!black, fill=cyan!20, line width=1.2pt},
  hedge/.style={draw=cyan!80!black, line width=1.2pt},
  dnodeO/.style={circle, draw=orange!80!black, fill=orange!40, line width=1pt, minimum size=7.5mm},
  dnodeH/.style={circle, draw=orange!70!black, fill=orange!15, line width=0.8pt, minimum size=7.5mm},
  dnodeX/.style={circle, draw=orange!60!black, fill=orange!5,  line width=0.8pt, minimum size=7mm},
  dedge/.style={draw=orange!50!black, line width=0.6pt, opacity=0.6},
  outerbox/.style={ink, rounded corners=12pt, line width=0.9pt, softfill},
  innerbox/.style={draw=black!30, rounded corners=9pt, line width=0.7pt, densely dotted},
  gnodeL/.style={circle, ink, line width=0.9pt, minimum size=7.5mm, fill=black!15},
  gnodeU/.style={circle, ink, line width=0.9pt, minimum size=7.5mm, fill=black!5},
  lab/.style={black!85},
  smalllab/.style={font=\footnotesize, black!80, inner sep=0.5pt}
]

\def\gshift{3.35}   
\def\dshift{4.15}   

\def\pA{(-1.5, 1.3)} \def\pB{(1.5, 1.1)} \def\pC{(-0.5, 0.2)}
\def\pD{(1.4,-0.3)}  \def\pE{(-1.5,-1.3)} \def\pF{(0.1,-0.9)}
\def\pG{(2.25, 1.75)}

\begin{scope}[shift={(-\gshift,0)}, local bounding box=origGraph]
  \node[gnodeL, label={[smalllab, shift={(0.1,0.1)}]above left:$(X_3, Y_3)$}] (r1) at \pA {};
  \node[gnodeL, label={[smalllab, xshift=-1mm]left:$(X_1, Y_1)$}] (r3) at \pC {};

  \node[gnodeU, label={[smalllab]above left:$X_2$}] (r2) at \pB {};
  \node[gnodeU, label={[smalllab]right:$X_4$}]      (r4) at \pD {};
  \node[gnodeU, label={[smalllab]below left:$X_5$}]  (r5) at \pE {};
  \node[gnodeU, label={[smalllab]below right:$X_6$}]       (r6) at \pF {};

  \node[gnodeU, label={[smalllab]above right:$X_7$}] (r7) at \pG {};

  \draw[gedge] (r3)--(r1) (r3)--(r2) (r3)--(r4) (r3)--(r6) (r1)--(r2) (r5)--(r3);
  \draw[gedge] (r2)--(r7); 
\end{scope}

\node[innerbox, fit=(origGraph), inner sep=6pt] (gcnRight) {};

\begin{scope}[shift={(+\gshift,0)}, local bounding box=propGraph]
  \node[gnodeU] (l5) at \pE {};
  \node[gnodeL, hnode] (l3) at \pC {};
  \node[smalllab, left=1.8mm of l3, black, font=\bfseries] {$(X'_1, Y_1)$};

  \node[gnodeL] (l1) at \pA {};
  \node[gnodeU] (l2) at \pB {};
  \node[gnodeU] (l4) at \pD {};
  \node[gnodeU] (l6) at \pF {};

  \node[gnodeU, faint] (l7) at \pG {};
  \draw[gedge, faint] (l2)--(l7); 

  \draw[hedge, <-] (l3) -- (l1);
  \draw[hedge, <-] (l3) -- (l2);
  \draw[hedge, <-] (l3) -- (l4);
  \draw[hedge, <-] (l3) -- (l6);
  \draw[hedge, <-] (l3) -- (l5);

  \draw[gedge, faint] (l1) -- (l2);
\end{scope}

\node[innerbox, fit=(propGraph), inner sep=6pt] (gcnLeft) {};

\draw[line width=1.25pt, -Stealth, black!60]
  (gcnRight.east) -- (gcnLeft.west);

\begin{pgfonlayer}{background}
  \node[outerbox, fit=(gcnRight)(gcnLeft), inner sep=9pt] (gcnBox) {};
\end{pgfonlayer}

\node[lab, font=\bfseries] at (gcnBox.south) [below=2mm]
  {GCN Component (Feature Propagation)};

\begin{scope}[shift={($(gcnBox.east)+(\dshift,0)$)}, local bounding box=dnnContent]
  \coordinate (dX) at (0,0);
  \coordinate (dH) at (2.15,0);
  \coordinate (dO) at (4.3,0);

  \node[dnodeO] (fhat) at (dO) {};
  \node[lab, right=2mm of fhat] {$\widehat{Y}_1$};

  \foreach \y/\name in {1.35/h1, 0.45/h2, -0.45/h3, -1.35/h4} {
    \node[dnodeH] (\name) at ($(dH)+(0,\y)$) {};
    \draw[dedge] (\name) -- (fhat);
  }

  \node[dnodeX] (x1) at ($(dX)+(0,1.65)$) {};
  \node[dnodeX] (x2) at ($(dX)+(0,0.65)$) {};
  \node[dnodeX] (x3) at ($(dX)+(0,-0.65)$) {};
  \node[dnodeX] (xd) at ($(dX)+(0,-1.65)$) {};

  \node[lab, left=1mm of x1] {$X'_{11}$};
  \node[lab, left=1mm of xd] {$X'_{1d}$};
  \node at ($(dX)+(0,0.1)$) {$\vdots$};

  \foreach \h in {h1,h2,h3,h4} {
    \foreach \x in {x1,x2,x3,xd} {
      \draw[dedge] (\x) -- (\h);
    }
  }
\end{scope}

\begin{pgfonlayer}{background}
  \node[outerbox, fit=(dnnContent), inner sep=9pt] (dnnBox) {};
\end{pgfonlayer}

\node[lab, font=\bfseries] at (dnnBox.south) [below=2mm]
  {DNN Component (Prediction)};

\draw[
  line width=1.15pt,
  double distance=2.2pt,
  -Stealth,
  black
] (gcnBox.east) -- (dnnBox.west);

\node[lab, font=\footnotesize\bfseries, align=center]
  at ($(gcnBox.east)!0.5!(dnnBox.west)$) [above=2pt]
  {Input Vector\\$X'_1$};

\end{tikzpicture}%
}
\caption{Graph feature propagation followed by a nonlinear readout: a linear message-passing block generates propagated features, which are then mapped to node-level predictions via a ReLU DNN.}
\label{fig:gnn-graph}
\end{figure}
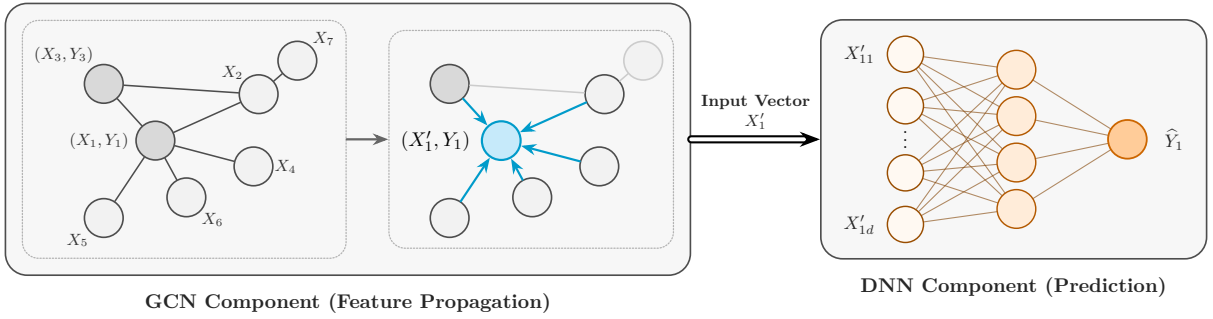

Any estimator of the regression function is based on the training data $\{X_1,\ldots,X_n\}\cup\{Y_i\}_{i\in \Omega}$ with $\Omega$ the random set of unmasked response variables. The statistical performance of an estimator $\widetilde f=(\widetilde f_1,\ldots,\widetilde f_n)^{\top}:[0,1]^{n\times d} \to \R^n$ is measured by the prediction error
\begin{equation}
\cR\left(\widetilde f,f^*\right)=\E\cro{\frac{1}{n}\sum_{i=1}^n\left(\widetilde{f}_i(\bX')-\varphi^{*}\big(\psi_{\mathbf{A},i}^*(\bX')\big)\right)^2},    
\end{equation}
where $\mathbf {X'}$ is an independent copy of $\mathbf {X}$ and the expectation is taken over all the randomness in the model, that is, the randomness induced by $\Omega\cup\{(X_i,X_i',\eps_i)\}_{i=1}^{n}$. 

Among various estimators, we are primarily interested in analyzing the performance of the least-squares estimator over the class $\cF(\bT,L_1,L_2,{\bs{p}},s,F)$ constructed from the unmasked nodes in $\Omega\subseteq\{1,\ldots,n\}$, that is, 
\begin{equation}\label{lse-def}
\widehat f\in\argmin_{f\in\cF(\bT,L_1,L_2,{\bs{p}},s,F)}\frac{1}{n}\sum_{i\in \Omega}\big(Y_{i}-f_i(\mathbf X)\big)^{2}.
\end{equation}
By the definition of $\cF(\bT,L_1,L_2,{\bs{p}},s,F)$, this estimator admits a decomposition satisfying for all $i\in\Omega,$
\begin{equation*}
\widehat f_i(\bX)=\widehat{\varphi}\circ\widehat{\psi}_{i}(\bX),
\end{equation*}
where $\widehat{\varphi}\in\mathcal{F}{(L_2,{\bs{p}},s,F)}$ and $\widehat{\psi}\in\cG{(L_1,\bT)}.$

\subsection{Oracle-type inequality}
We begin by introducing the prerequisites. Let ${\bs\varepsilon}=(\varepsilon_1,\ldots,\varepsilon_n)^{\top}$ and ${\bs\omega}=(\omega_1,\ldots,\omega_n)^{\top}$. For any estimator $\widetilde{f}$ returning a network in $\mathcal{F}$ with output $\widetilde f_i$ at the $i$-th node, the optimization error is defined as
\begin{align}
\Delta_{n}^{\cF}\left(\widetilde{f}, f^{*}\right)=\E_{{\bs\varepsilon},{\bs\omega},\bX}\left[\frac{1}{n} \sum_{i\in \Omega}\Big(Y_{i}-\widetilde{f}_i(\mathbf X)\Big)^{2}-\inf_{f \in \mathcal{F}} \frac{1}{n} \sum_{i\in\Omega}\Big(Y_{i}-{f}_i(\mathbf X)\Big)^{2}\right].\label{def-delta-em}
\end{align}
It measures the expected difference between the training loss of $\widetilde{f}$ and the training loss of the global minimum over all network fits in the class $\cF$, where the data $\{(X_i,Y_i)\}$ are generated according to the true regression function $f^{*}$ in \eqref{setting-for}. The optimization error is nonnegative and vanishes if $\widetilde f$ is an
empirical risk minimizer over $\cF$. 

The complexity of the vector-valued function class $\cF$ is measured by its metric entropy. Recall that for any two functions $f, g: \mathcal{X}^n\to \mathbb{R}^n$, with $f = (f_1, \ldots, f_n)^\top$ and $g = (g_1, \ldots, g_n)^\top$, their sup-norm distance is given by $\|f - g\|_{\infty} =\max_{1 \leq i \leq n} \, \|f_i-g_i\|_{\infty}.$ For any $\delta>0$, we say a class of functions $\mathcal{F}'$ is a $\delta$-cover of $\mathcal{F}$ with respect to the sup-norm, if for any $f\in\mathcal{F}$, there exists a function $g\in\mathcal{F}'$ such that $\|f - g\|_{\infty}\leq\delta$. We denote the smallest $\delta$-cover of $\cF$ by $\mathcal{F}_{\delta}$. The $\delta$-covering number $\mathcal{N}(\delta, \mathcal{F},\|\cdot\|_{\infty})$ is the cardinality of $\mathcal{F}_{\delta}$. If no finite $\delta$-cover exists, the covering number is defined to be infinite. The metric entropy (or simply entropy) of $\mathcal{F}$ at scale $\delta>0$ is $\log\mathcal{N}(\delta, \mathcal{F}, \|\cdot\|_{\infty})$. For brevity, we simplify this notation in some contexts to $\log\mathcal{N}_{\delta}$. A key insight from statistical learning theory is that the stochastic error depends critically on this measure of model complexity \cite{van-exp,Bartlett_2005,Schmidt-Hieber}. 

A key technical difference with standard nonparametric regression is that, even if the node
features $\{X_i\}_{i=1}^n$ are sampled independently and the noises $\{\varepsilon_i\}_{i=1}^n$
are independent, the empirical loss is not a sum of independent terms once prediction uses
graph propagation. Indeed, for a general graph-based predictor $f\in\mathcal F$, each nodal
output $f_i(\bX)$ may depend on the features of many nodes through message passing, so the squared error contributions from each node $i$,
\[
s_i^{(f)}(\bX)= \bigl(f_i(\bX)-f_i^*(\bX)\bigr)^2
\]
can be statistically coupled through shared coordinates of $\bX$. To quantify this dependence in the analysis, we formalize a bounded-influence property: each nodal loss $s_i^{(f)}(\bX)$ depends on features from only a limited number of nodes, while the features from any given node affect at most a bounded number of loss terms. The following assumption captures this via a single parameter $m$, which will appear as a multiplicative factor in the oracle inequality and can be interpreted as an upper bound on the effective receptive-field size under graph propagation. 

We now state this locality condition formally as an assumption on the true regression function $f^*$ and the base class $\cF$.
\begin{ass}\label{ass-m}
Let $m$ be a positive integer. For any $f= (f_1, \ldots, f_n)^{\top}\in \cF \subset \{g:\R^{n \times d}\to \R^n\}$, $f^*= (f_1^*, \ldots, f_n^*)^{\top},$ and any argument $\bs{x} \in \R^{n \times d}$, assume that 
\begin{enumerate}[label=(\roman*), itemsep=0pt, topsep=2pt, parsep=0pt]
\item each function $\bs{x}\mapsto s_i^{(f)}(\bs{x}) = \big(f_i(\bs{x}) - f^*_i(\bs{x})\big)^2$ with $i\in [n]$ depends on at most $m$ coordinates of $\bs{x} = (x_1, \ldots, x_n)^{\top}$;
\item each coordinate $x_j$ influences at most $m$ functions $s_i^{(f)}$.
\end{enumerate}
\end{ass}
This assumption naturally arises from common graph propagation mechanisms and GNN architectures. For instance, consider a $k$-step polynomial filter $f({\bs x})=\sum_{j=1}^k \theta_{j} \bS^{j} {\bs x}$, where the propagation matrix $\bS$ is symmetric and corresponds to an undirected graph of maximum degree $\Delta$ (with self-loops). In this case, each output $f_i(x)$ depends only on features within $k$ hops of node $i$, and one may choose $m=\Delta^k$. Similarly, for a depth-$L_1$ linear GCN with propagation matrix $\bT$ (unit diagonal) and with at most $m_{\bT}$ non-zero entries per row and per column, each nodal output depends on at most $m_{\bT}^{L_1}$
input feature vectors; hence we may take $m=m_{\bT}^{L_1}$.

Building upon the existing node graph, Assumption~\ref{ass-m} allows us to define a new dependency graph for $\{s_i^{(f)}\}_{i=1}^n$. An edge between nodes $i$ and $j$ in this dependency graph exists when $s_i^{(f)}$ and $s_j^{(f)}$ share a common input coordinate. Under Assumption~\ref{ass-m}, this graph has maximum degree at most $m(m-1)$. A direct implication of this bounded degree is the existence of a (disjoint) partition $\mathcal{P}_1 \cup \cdots \cup \mathcal{P}_r = \{1,\ldots,n\}$ with $r \leq m(m-1)+1$. Here, each part $\mathcal{P}_\ell$ comprises indices such that for any distinct $i, j \in \mathcal{P}_\ell$, the losses $s_i^{(f)}(\mathbf{X})$ and $s_j^{(f)}(\mathbf{X})$ share no covariate vector. This key insight makes it possible to derive tight concentration bounds for functions exhibiting this type of sparse local dependence. 

The following theorem states an oracle-type inequality that holds for any estimator $\widetilde{f}$ constructed from a general model class $\mathcal{F}$.

\begin{theorem}\label{main}
Assume Assumption~\ref{ass-m} holds with $m\geq1$. For $0<\delta \leq 1$, let $\mathcal{F}_\delta$ be a $\delta$-cover of $\mathcal{F}$ whose entropy satisfies $\log\mathcal{N}_\delta \geq 1$. Suppose that there exists a constant $F\geq1$ such that $\|f_i\|_{\infty} \leq F$ for all $i\in[n]$ and all $f \in \mathcal{F} \cup \mathcal{F}_{\delta} \cup\{f^*\}$. Then, for any $\varepsilon \in (0,1]$,
\begin{align*}
(1-\varepsilon)^2{\Delta}_n^{\mathcal{F}}&(\widetilde f,f^*)-E_1(\varepsilon, n, \delta)\leq\mathcal{R}(\widetilde{f}, f^*)\\
& \leq 2(1+\varepsilon)^2 \left( \inf_{f \in \mathcal{F}} \mathbb{E}\cro{\frac{1}{n}\sum_{i=1}^n\big(f_i(\bX) - f^*_i(\bX)\big)^2} + \frac{{\Delta}_n^{\mathcal{F}}(\widetilde f,f^*)}{\pi} +E_2(\varepsilon, n,\pi, \delta) \right),
\end{align*}
where
\begin{align*}
E_1(\varepsilon, n, \delta) &=C_1\left(\frac{m^2F^2 \log\mathcal{N}_{\delta}}{\varepsilon n}+\delta F\right), \\
E_2(\varepsilon, n, \pi, \delta) &=C_2\cro{\frac{(1+\varepsilon)m^2F^2 \log\mathcal{N}_{\delta}}{\varepsilon n\pi} + \frac{F\delta}{\sqrt{\pi}}+\frac{F^2}{\mathcal{N}_{\delta}}},
\end{align*}   
with $C_1,C_2>0$ universal constants.
\end{theorem}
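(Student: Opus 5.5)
We follow the proof scheme of the oracle inequality in \cite{Schmidt-Hieber} (Lemma~4 there), adapting it in two places: the Bernoulli masking $\bs\omega$, and the dependence among the nodal losses $s_i^{(f)}(\bX)$, which Assumption~\ref{ass-m} controls. Write $\widehat R_n(f)=\frac1n\sum_{i=1}^n\omega_i(f_i(\bX)-f_i^*(\bX))^2$ for the masked empirical risk and $R(f)=\E[\frac1n\sum_i (f_i(\bX)-f_i^*(\bX))^2]$ for the population risk.

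\textbf{Step 1 (ghost sample and masking).} Since $\omega_i$ is independent of $\bX$, we have $\E[\widehat R_n(f)]=\pi R(f)$ for any fixed $f$. For the data-dependent estimator $\widetilde f$ we relate $\mathcal R(\widetilde f,f^*)$, which is evaluated at the independent copy $\bX'$, to $\E[\widehat R_n(\widetilde f)]/\pi$. First replace $\widetilde f$ by its nearest element $f_{j^*}$ of the cover $\mathcal F_\delta$; this costs $O(F\delta)$ in both quantities. It then suffices to bound
\[
\E\Bigl[\max_{j}\frac{\pi R(f_j)-\widehat R_n(f_j)-\text{(self-normalising term)}}{\cdots}\Bigr].
\]
To control this maximum, decompose $\sum_i \omega_i s_i^{(f_j)}(\bX)-\pi\sum_i \E s_i^{(f_j)}$ using the coloring partition $\mathcal P_1\cup\cdots\cup\mathcal P_r$ with $r\le m^2$. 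Within each color class the summands are independent functions of disjoint covariates and of independent $\omega_i$, and each is bounded by $4F^2$. Apply a Bernstein inequality on each class, using the variance bound $\Var(\omega_i s_i)\le 4F^2\pi\,\E s_i$, and combine the classes by a union bound over the $r$ classes and the $\mathcal N_\delta$ cover elements. Because the variance is proportional to the mean, the ratio-type Bernstein argument yields, for every $\varepsilon$,
\[
\pi R(f)\le (1+\varepsilon)\widehat R_n(f)+C\frac{(1+\varepsilon)m^2F^2\log\mathcal N_\delta}{\varepsilon n}
\]
in expectation uniformly over the cover. One factor $m$ comes from $r$, and a second from the class-size bookkeeping in the union bound, since there are $r$ classes each of size at most $n$. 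Dividing by $\pi$ gives the $1/(n\pi)$ term in $E_2$. We expect this step to be the main obstacle: getting the correct $\pi$-scaling requires the Bernstein variance to be proportional to $\pi$ rather than the cruder Hoeffding bound, and the classes must be handled simultaneously. We would try applying Bernstein separately per class and summing, with each class weighted by $|\mathcal P_\ell|/n$ in a convex combination.

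\textbf{Step 2 (noise term).} Expanding $Y_i=f_i^*+\eps_i$ in the definition of $\Delta_n^{\mathcal F}$ gives, for every $f\in\mathcal F$,
\[
\E\widehat R_n(\widetilde f)\le \E\widehat R_n(f)+\Delta_n^{\mathcal F}+\frac2n\E\Bigl[\sum_i\omega_i\eps_i\widetilde f_i(\bX)\Bigr].
\]
Conditionally on $\bX$ and $\bs\omega$, the noise vector $\bs\eps$ is Gaussian. Reduce to the cover, paying $O(\delta\sqrt{\E\sum\omega_i}/n)\lesssim F\delta\sqrt{\pi}$, which becomes $F\delta/\sqrt\pi$ after dividing by $\pi$. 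Then bound the maximum of $\mathcal N_\delta$ Gaussians, $\sum_i\omega_i\eps_i(f_{j,i}-f_i^*)/\sqrt{n\widehat R_n(f_j)}$, by $\sqrt{\log\mathcal N_\delta}$, using Cauchy--Schwarz and $2ab\le\varepsilon a^2+b^2/\varepsilon$. This step needs no locality, since the $\eps_i$ are independent. The residual $F^2/\mathcal N_\delta$ handles the Gaussian-tail event, as in \cite{Schmidt-Hieber}. Finally, $\E\widehat R_n(f)=\pi R(f)$ turns the infimum into the approximation term after division by $\pi$.

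\textbf{Step 3 (combine, and the lower bound).} Combining Steps 1--2 and dividing by $\pi$ gives the upper bound, with the factors $(1+\varepsilon)^2$ and $2$ absorbing the two applications of $2ab$-type inequalities. For the lower bound, note that $\Delta_n^{\mathcal F}$ is at most $\E[\widehat R_n(\widetilde f)-\widehat R_n(f^*)+\text{noise cross terms}]$ minus the infimum. The cross term $\frac2n\sum\omega_i\eps_i(\widetilde f_i-f_i^*)$ is bounded by $\varepsilon\widehat R_n(\widetilde f)+C\log\mathcal N_\delta/(\varepsilon n)+O(\delta F)$, as in Step 2. Then $\widehat R_n(\widetilde f)$ is bounded by $(1+\varepsilon)$ times the population risk, using the reverse Bernstein direction of Step 1. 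Since $\pi\le 1$, this reverse direction costs no $1/\pi$, which gives $E_1$.
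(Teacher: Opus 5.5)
Your argument is essentially sound, but it organizes the key concentration step differently from the paper.

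\textbf{Comparison with the paper.} The paper separates the two sources of randomness. In Step (III) it conditions on $\bX$ and proves, over $\bs\omega$ alone, the isometry $\|f-f^*\|_{n,\bs\omega}^2\ge\frac{\pi}{2}\|f-f^*\|_n^2$ uniformly over the cover (Lemma~\ref{lm:isometry}, via Bernstein, Talagrand and peeling). No graph structure is involved there. Only in Step (IV) does it pass from $\E\|\widetilde f-f^*\|_n^2$ to the ghost-sample risk $\mathcal R$, and that is where the coloring enters.

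You instead apply Bernstein once, under the joint law of $(\bX,\bs\omega)$, to $\sum_i\omega_i s_i^{(f)}(\bX)$. This is legitimate: within a color class the pairs $(\omega_i,s_i^{(f)}(\bX))$ are independent, and $\Var(\omega_i s_i^{(f)})\le 4F^2\pi\,\E s_i^{(f)}$.

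\begin{itemize}
\item Your route is shorter and avoids Talagrand and peeling. It can also dispense with the factor $2$ and with the term $F^2/\mathcal N_\delta$. In the paper that term is the failure probability of the isometry event, not a Gaussian tail.
\item The paper's route is more modular: $\pi$ is handled in a design-free step and $m$ in a $\pi$-free step. Its lower bound also needs no concentration in $\bs\omega$, because $\|\cdot\|_{n,\bs\omega}\le\|\cdot\|_n$ holds trivially.
\end{itemize}

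In your lower bound, the corresponding move is to use $\mathcal R\ge\pi\mathcal R$ instead of dividing by $\pi$. You also need the same Gaussian-max bound for an empirical risk minimizer $\bar f$, in order to lower bound $\E\inf_f$ of the training loss; your sketch skips this.

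\textbf{Details to fix in Step 1.}
\begin{itemize}
\item \emph{Source of $m^2$.} The dependency graph of the $s_i^{(f)}$ has maximum degree $m(m-1)$, so greedy coloring already gives $r\le m(m-1)+1\le m^2$. Both factors of $m$ come from $r$; nothing comes from class sizes.
\item \emph{Combining the classes.} A union bound over the $r$ classes turns $\log\mathcal N_\delta$ into $\log(r\mathcal N_\delta)$. That is an extra factor up to $\log m$, which a universal constant cannot absorb when $\mathcal N_\delta$ is small. A Jensen-type convex combination weighted by $|\mathcal P_\ell|/n$ fails for a different reason: it runs into the range condition $\lambda\lesssim|\mathcal P_\ell|/(nF^2)$ for small classes.

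The paper's tool is the generalized H\"older inequality on the moment generating function,
\[
\E\exp\Bigl(\lambda\sum_{\ell}V_\ell\Bigr)\le\prod_{\ell=1}^r\bigl(\E\exp(r\lambda V_\ell)\bigr)^{1/r},
\]
combined with a Bernstein-type MGF bound in each class (Lemmas~\ref{general-holder} and~\ref{lem:bounded-mgf}). This gives a single Bernstein tail with variance proxy $r$ times the total variance and range of order $rF^2$, hence exactly $rF^2\log\mathcal N_\delta/n$ with no union over classes.
\item \emph{Cover replacement.} When you replace $\widetilde f$ by its cover element in $\widehat R_n$, take the cost in expectation: $4F\delta\,\E[\frac1n\sum_i\omega_i]=4\pi F\delta$. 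A bare $O(F\delta)$ would become $F\delta/\pi$ after dividing by $\pi$, which $E_2$ does not allow.
\end{itemize}
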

The proof of Theorem~\ref{main} is postponed to Section~\ref{main-proof}. The result indicates that the upper bound for the prediction error of an arbitrary estimator $\widetilde{f}$ based on model $\cF$ can be decomposed into three main terms: the approximation error between $\mathcal{F}$ and the target $f^*$, the stochastic error $E_2(\varepsilon, n,\pi, \delta)$ governed by the complexity of $\cF$, and the optimization error ${\Delta}_n^{\cF}(\widetilde{f}, f^*)$, which quantifies the training error discrepancy between the chosen estimator and the empirical risk minimizer. When $\pi=1$ (full supervision) and provided the covering number $\mathcal{N}_{\delta}$ is not too small, our result recovers the same dependence on $n$ as in standard regression \cite{Schmidt-Hieber} (Theorem~2). The bounds differ by a multiplicative factor of $m^2$, which accounts for the graph-induced dependency.

The distinct roles of $m$ and $\pi$ in Theorem~\ref{main} shed light on the seemingly contradictory phenomena observed in practice. On one hand, the upper bound is scaled by the expected proportion of observed sample pairs $\pi$; when $\pi$ remains constant, even if small, the prediction error increases only by a constant multiplicative factor. This explains why GNNs can achieve satisfactory generalization with limited supervision \citep{10.5555/3504035.3504468,Max}. On the other hand, structural perturbations such as edge rewiring or dropping can fundamentally alter the graph geometry, resulting in performance instability \citep{geisler2021robustness, zugner2018adversarial}. In our bound, this sensitivity is quantified by the receptive field size $m$, which depends intricately on the network's expansion properties. For instance, in a ring lattice, $m$ grows only linearly with the number of propagation steps. However, adding a few ``shortcut'' edges triggers a phase transition to a small-world regime, where the neighborhood size $m$ may shift from linear to exponential expansion \citep{watts1998collective}. Such structural shifts significantly relax the generalization bounds, thereby accounting for the high sensitivity of GNNs to topological perturbations that bridge distant clusters. 

In what follows, we provide a more precise analysis focusing on the proposed GNN class $\mathcal{F}(\bT,L_1,L_2,\bs{p},s,F)$. The result below establishes an upper bound for the metric entropy of $\mathcal{F}(\bT,L_1,L_2,\bs{p},s,\infty)$. Because the inequality $$\big\|(f\vee-F)\wedge F-(g\vee-F)\wedge F\big\|_{\infty}\leq\|f-g\|_{\infty}$$ holds, the same entropy bound immediately extends to the bounded class $\mathcal{F}(\bT,L_1,L_2,\bs{p},s,F)$.

Recall that for a matrix $\mathbf{M}=(M_{i,j})$, the row-sum norm is $\|\mathbf{M}\|_{1,\infty}=\max_i \sum_j |M_{i,j}|.$

\begin{proposition}\label{cover-whole}
Let $L_1\geq1$. For any $0<\delta\leq1$,
\begin{align*}
&\log\mathcal{N}\big(\delta,\mathcal{F}(\bT, L_1, L_2, {\bs p}, s,\infty),\|\cdot\|_{\infty}\big)\hspace{-1pt}\leq\hspace{-1pt} \big(d^2L_1+L_1+s+1\big)\log\left(\hspace{-1pt}\frac{\cL(L_1,L_2)}{\delta}\prod_{k=0}^{L_2+1}(p_{k}+1)^2\hspace{-2pt}\right),
\end{align*}
where $\cL(L_1,L_2)=2L_1(L_1+L_2+2)(\|\bT\|_{1,\infty}\vee1)^{L_1}d^{L_1}$.
\end{proposition}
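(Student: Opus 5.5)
The plan is a parameter-perturbation argument of the type used in \cite{Schmidt-Hieber}: discretize all parameters (GCN weights, reweighting coefficients $\gamma_\ell$, and the DNN weights and biases) on a grid, and show that a small sup-norm change in the parameters moves the composed function $f_i=h\circ g_i$ only slightly, uniformly over $i$ and over $\bs x\in[0,1]^{n\times d}$.

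\textbf{Step 1 (GCN part).} For $g(\bs x)=\sum_{\ell=1}^{L_1}\gamma_\ell\bT^\ell\bs x\bW_1\cdots\bW_\ell$ with entries of $\bs x$ in $[0,1]$, I would bound every row of $\bT^\ell\bs x$ in sup-norm by $\|\bT\|_{1,\infty}^\ell$. Each multiplication by a $d\times d$ matrix with entries in $[-1,1]$ multiplies the sup-norm by at most $d$, so $|g_i(\bs x)|_\infty\le L_1(\|\bT\|_{1,\infty}\vee1)^{L_1}d^{L_1}=:B$. Now take $g,g'$ whose parameters differ by at most $\eta$ entrywise. A telescoping argument, replacing one factor $\bW_j$ or one $\gamma_\ell$ at a time, gives
\[
|g_i(\bs x)-g'_i(\bs x)|_\infty\le \eta\, L_1(L_1+1)(\|\bT\|_{1,\infty}\vee1)^{L_1}d^{L_1}.
\]
Each term $\ell$ contributes at most $\ell+1$ swaps, each costing at most $\eta(\|\bT\|_{1,\infty}\vee1)^{\ell}d^{\ell}$.

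\textbf{Step 2 (DNN part).} I would use the standard Lipschitz bounds for ReLU networks with weights in $[-1,1]$, as in the proof of Lemma 5 of \cite{Schmidt-Hieber}, but on inputs of sup-norm at most $B$ instead of $[0,1]^d$:
\begin{itemize}
\item in the input, $|h(\bs y)-h(\bs y')|\le\prod_{k}(p_k+1)\,|\bs y-\bs y'|_\infty$;
\item in the parameters, $|h(\bs y)-h'(\bs y)|\le\eta (L_2+1)(B\vee1)\prod_k(p_k+1)$.
\end{itemize}
Combining the two gives, uniformly in $i$ and $\bs x$,
\[
|h(g_i(\bs x))-h'(g'_i(\bs x))|\le \eta\,(L_1+L_2+2)\,L_1(\|\bT\|_{1,\infty}\vee1)^{L_1}d^{L_1}\prod_{k}(p_k+1)^2,
\]
up to the factor 2 absorbed into $\cL(L_1,L_2)$. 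Setting the right-hand side equal to $\delta$ fixes the grid size $\eta=\delta/(\cL(L_1,L_2)\prod_k(p_k+1)^2)$.

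\textbf{Step 3 (counting).} The GCN has $d^2L_1+L_1$ parameters. For each one I would grid $[-1,1]$ with spacing $\eta$, giving at most $2/\eta$ values. For the sparse DNN, I would first choose the support pattern of at most $s$ nonzero parameters among at most $\prod_k(p_k+1)^2$ positions, which costs at most $(\prod_k(p_k+1)^2)^{s}$ choices, and then grid each nonzero value. Taking logarithms, the total count is at most
\[
(d^2L_1+L_1+s+1)\log\Big(\frac{\cL}{\delta}\prod_k(p_k+1)^2\Big).
\]
Here the extra $+1$ and the factor 2 in $\cL$ absorb the constant from $2/\eta$ and the support-choice term.

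\textbf{Main obstacle.} I expect the difficulty to lie in Step 2 with the enlarged input domain. Tracking exactly how the bound $B$ on the GCN output propagates through the DNN's parameter-Lipschitz constant, so that the result matches the stated factor $(L_1+L_2+2)$ rather than a product of separate constants, requires care. My first attempt would be to bound the layerwise activations by $(B\vee1)\prod_{k\le j}(p_k+1)$ and telescope over layers.
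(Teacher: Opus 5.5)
Your Steps 1 and 2 follow the paper's route: a telescoping parameter-Lipschitz bound for the GCN, Schmidt-Hieber's Lemma~5 Lipschitz bounds for the ReLU part on the enlarged input range, and the triangle inequality for the composition. The gap is in Step 3, where the counting does not close as written. Write $P=\prod_{k=0}^{L_2+1}(p_k+1)$.

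\begin{itemize}
\item You put $P^2$ into the grid resolution, $\eta=\delta/(\cL P^2)$.
\item You also bound the number of DNN parameter positions by $P^2$.
\item So each nonzero DNN parameter costs $\log(P^2)$ for its location plus $\log(2/\eta)=\log(2\cL P^2/\delta)$ for its value, roughly $\log(\cL P^4/\delta)$.
\item Each GCN parameter costs $\log(2\cL P^2/\delta)$.
\end{itemize}

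The surplus $2s\log P+(d^2L_1+L_1+s)\log 2$ grows linearly in $s$. It cannot be absorbed by the single extra summand $\log(\cL P^2/\delta)$ that the ``$+1$'' provides; for large $s$ it is strictly larger. So your claim that the $+1$ absorbs the support-choice term is false. Likewise, the factor $2$ in $\cL$ can be spent only once, either on the perturbation bound or on the $2/\eta$ grid count, not both.

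The fix is already contained in your Step 2. Your two bullet points, with $B\vee1=B=L_1(\|\bT\|_{1,\infty}\vee1)^{L_1}d^{L_1}$, give
\[
(L_2+1)BP\eta+P\,L_1(L_1+1)(\|\bT\|_{1,\infty}\vee1)^{L_1}d^{L_1}\eta=L_1(L_1+L_2+2)(\|\bT\|_{1,\infty}\vee1)^{L_1}d^{L_1}\,P\,\eta .
\]
This is linear in $P$, not quadratic, and the factor $(L_1+L_2+2)$ you flagged as the main obstacle falls out directly.

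Now take grid spacing $\rho_\delta=\delta/\big(L_1(L_1+L_2+2)(\|\bT\|_{1,\infty}\vee1)^{L_1}d^{L_1}P\big)$, so each parameter has at most $2/\rho_\delta=\cL P/\delta$ values. Bound the number of DNN parameter positions by $\sum_{k=0}^{L_2}(p_k+1)p_{k+1}\le P$. Then:

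\begin{itemize}
\item a nonzero DNN parameter costs $P\cdot\cL P/\delta=\cL P^2/\delta$;
\item a GCN parameter costs $\cL P/\delta\le\cL P^2/\delta$;
\item the ``$+1$'' absorbs the sum over the number $s^*\le s$ of active DNN parameters, via $\sum_{s^*\le s}x^{s^*}\le x^{s+1}$ for $x\ge 2$.
\end{itemize}

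This is exactly how the paper produces the square $P^2$: one factor comes from the grid resolution and one from the choice of support.
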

The proof of Proposition~\ref{cover-whole} is deferred to Section~\ref{proof-covering}. Observe that the considered GNNs have a total of $(d^2+1)L_1 + s$ trainable parameters. In practice, GCNs employ few layers, so $L_1$ is typically independent of $n$, and the feature dimension $d$ is also commonly assumed to be finite \cite{song2021scgcn,Ritter2025NePSTA}. When the widths $p_k$ of the deep ReLU networks are bounded by order $n^{\gamma}$ and $n$ is sufficiently large, the logarithmic term simplifies to $\lesssim L_2 \log[(n^{\gamma}\vee L_2)/\delta]$ for any $\delta\leq1$. Consequently, up to a constant factor depending on $\|\bT\|_{1,\infty}$, $d$, and $L_1$, the metric entropy of $\mathcal{F}(\bT, L_1, L_2, \bs{p}, s,F)$ at scale $\delta\leq1$ can be bounded by $\lesssim sL_2\log[(n^{\gamma}\vee L_2)/\delta]$. Therefore, the depth and sparsity of the DNN component play an essential role in determining the entropy. Based on Theorem~\ref{main} and Proposition~\ref{cover-whole}, we obtain the following result.

\begin{Corollary}\label{kappa-b}
Suppose Assumption~\ref{ass-m} holds with $m\geq1$, and the true regression function $f^*$ satisfies $\|f^*_i\|_{\infty}\leq F$ for all $i\in[n]$ and some $F\geq1$. Let $\widetilde f$ be any estimator in the class $\overline\cF=\cF(\bT, L_1, L_2, \bs{p}, s,F),$ where $s \geq 2$ and $L_1,L_2 \geq 1.$ Assume $\cN\big(1/n, \overline\cF, \norm{\cdot}_{\infty}\big) \gtrsim n.$
Define
\[
\kappa_n = (d^2 L_1 + s)\cro{\log\big( n L_1(L_1 + L_2)\big)+(L_1+1)\log\big(\|\bT\|_{1,\infty}\vee d\big)+L_2\log s}.
\]
Then, for any $\varepsilon \in (0,1]$, there exists $C_\varepsilon > 0$ depending only on $\varepsilon$ such that 
\begin{align*}
(1-\varepsilon)^2&\Delta_n^{\overline\cF}(\widetilde{f},f^*)-C_{\varepsilon}m^2F^2\frac{\kappa_{n}}{n}\leq \mathcal{R}(\widetilde{f}, f^*)\\ 
&\leq 2(1+\varepsilon)^2\cro{\inf_{f \in \mathcal{F}(\bT, L_1, L_2, {\bs p}, s,F)} \|f - f^*\|_{\infty}^2+\frac{\Delta_n^{\overline\cF}(\widetilde{f},f^*)}{\pi}}+ C_{\varepsilon}\frac{m^2F^2}{\pi}\frac{\kappa_{n}}{n}.
\end{align*}
\end{Corollary}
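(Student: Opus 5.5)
We apply Theorem~\ref{main} to the class $\overline\cF=\cF(\bT,L_1,L_2,\bs p,s,F)$ with $\delta=1/n$, and use Proposition~\ref{cover-whole} to bound the entropy by a constant multiple of $\kappa_n$.

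\textbf{Step 1: checking the hypotheses of Theorem~\ref{main}.} Every $f\in\overline\cF$ is truncated to $[-F,F]$, and $\|f^*_i\|_\infty\le F$ by assumption. We may also take the cover $\mathcal F_\delta$ inside the truncated class. The truncation map is $1$-Lipschitz in sup-norm, so projecting a cover of $\cF(\cdots,\infty)$ gives a cover of $\overline\cF$ with the same cardinality, and its elements are bounded by $F$. The condition $\log\mathcal N_\delta\ge1$ follows from $\mathcal N(1/n,\overline\cF,\|\cdot\|_\infty)\gtrsim n$ once $n$ is large; small $n$ can be absorbed into $C_\varepsilon$, since $\mathcal R\le 4F^2$ trivially.

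\textbf{Step 2: the entropy bound.} By Proposition~\ref{cover-whole} with $\delta=1/n$,
\[
\log\mathcal N_{1/n}\le (d^2L_1+L_1+s+1)\Big[\log n+\log\big(2L_1(L_1+L_2+2)\big)+L_1\log(\|\bT\|_{1,\infty}\vee1)+L_1\log d+2\textstyle\sum_{k=0}^{L_2+1}\log(p_k+1)\Big].
\]
The prefactor is at most a constant times $d^2L_1+s$, because $L_1+1\le 2L_1\le 2d^2L_1$ and $s\ge2$. The terms $L_1\log(\|\bT\|_{1,\infty}\vee1)+L_1\log d$ are at most $2L_1\log(\|\bT\|_{1,\infty}\vee d)$.

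The main obstacle is the width product $\prod_k(p_k+1)^2$. It has to be controlled by $s$, following \cite{Schmidt-Hieber}. The first step is to remove inactive nodes: a hidden unit with no nonzero incoming weight and no nonzero bias outputs a constant, so it can be deleted, with the constant folded into the next layer's bias. This is legitimate if coefficients stay in $[-1,1]$; otherwise we argue as Schmidt-Hieber does, by bounding the entropy of the sparse class directly and counting sparsity patterns. After this reduction one may assume $p_k\le s$ for $1\le k\le L_2$. The input width is $p_0=d\le d^2L_1$, and $p_{L_2+1}=1$. This gives $\sum_k\log(p_k+1)\lesssim L_2\log s+\log d$, where we use $\log(s+1)\lesssim\log s$ for $s\ge2$. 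Collecting terms yields $\log\mathcal N_{1/n}\le C\kappa_n$.

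\textbf{Step 3: substituting into Theorem~\ref{main}.} In the lower bound,
\[
E_1\le C_1\Big(\frac{m^2F^2C\kappa_n}{\varepsilon n}+\frac{F}{n}\Big)\le C_\varepsilon m^2F^2\frac{\kappa_n}{n},
\]
since $F\le F^2$ and $\kappa_n\ge1$. In the upper bound, the three terms of $E_2$ are handled as follows:
\begin{itemize}
\item the first term is at most $C_\varepsilon m^2F^2\kappa_n/(n\pi)$;
\item the second satisfies $F\delta/\sqrt\pi=F/(n\sqrt\pi)\le F^2/(n\pi)$;
\item the third satisfies $F^2/\mathcal N_\delta\lesssim F^2/n\le F^2/(n\pi)$, using the assumption $\mathcal N(1/n,\overline\cF,\|\cdot\|_\infty)\gtrsim n$.
\end{itemize}
Each is therefore absorbed into $C_\varepsilon m^2F^2\kappa_n/(n\pi)$, after multiplying through by $2(1+\varepsilon)^2$.

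Finally, the approximation term satisfies $\E\big[\tfrac1n\sum_i(f_i(\bX)-f^*_i(\bX))^2\big]\le\|f-f^*\|_\infty^2$, and taking the infimum over $\overline\cF$ gives the stated bound.
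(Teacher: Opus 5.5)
Your proposal is correct and follows the paper's proof. Like the paper, you apply Theorem~\ref{main} with $\delta=1/n$, bound the entropy through Proposition~\ref{cover-whole} after removing inactive nodes (Schmidt-Hieber's reduction to widths $p_k\wedge s$), and absorb $F\delta/\sqrt{\pi}$ and $F^2/\mathcal N_\delta$ into the $m^2F^2\kappa_n/(n\pi)$ term using $\mathcal N(1/n,\overline\cF,\|\cdot\|_\infty)\gtrsim n$. Your extra checks (cover elements bounded by $F$, and $\log\mathcal N_\delta\ge 1$) are details the paper skips. Two small corrections: a deleted inactive unit outputs exactly $0$, so no bias folding or hedging is needed; and for small $n$ the lower bound needs $\Delta_n^{\overline\cF}\le 8F^2$ (shown in the proof of Theorem~\ref{main}), not $\mathcal R\le 4F^2$.
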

The proof of Corollary~\ref{kappa-b} is postponed to Section~\ref{main-proof}. Theorem~\ref{main} and Corollary~\ref{kappa-b} provide not only an upper bound but also a lower bound
on the prediction error in terms of the optimization error
$\Delta^{\overline\cF}_n(\widetilde f,f^*)$.
In particular, Corollary~2 implies that for any $\varepsilon\in(0,1]$,
\[
\cR(\widetilde f,f^*)\ \geq\ (1-\varepsilon)^2\,\Delta^{\overline\cF}_n(\widetilde f,f^*)
\;-\; C_\varepsilon\, m^2 F^2\frac{\kappa_n}{n},
\]
for a positive constant $C_\varepsilon$.
Consequently, the optimization error acts as a floor for the achievable prediction
accuracy: unless the training algorithm returns an approximate empirical risk minimizer with
$\Delta^{\overline\cF}_n(\widetilde f,f^*)\lesssim m^2F^2\kappa_n/n$, the prediction risk cannot achieve the statistical rate implied by the model complexity. This highlights an explicit computational--statistical trade-off in our setting: to benefit from
the statistical guarantees, the optimization error must be driven below the intrinsic
``statistical resolution'' of the function class.

When $\widetilde{f} = \widehat{f}$ is the empirical minimizer defined as in \eqref{lse-def}, the optimization error $\Delta_n^{\overline\cF}(\widehat{f}, f^*)$ is zero and the error upper bound reveals a bias-variance trade-off between the approximation error of $f^*$ based on model $\mathcal{F}(\bT, L_1, L_2, \bs{p}, s,F)$ and the variance terms depending on the complexity of $\mathcal{F}(\bT, L_1, L_2, \bs{p}, s,F)$. For both GCN and DNN components, stacking too many layers worsens the upper bound, as this is reflected in the term $\kappa_{n}$. 
\subsection{Approximation with GNNs}
This section analyzes the approximation capability of the class $\mathcal{F}(\bT,L_1,L_2,\bs{p},s,F)$ defined in \eqref{gnn-models}, which ultimately provides the approximation error $\inf_{f \in \mathcal{F}(\bT, L_1, L_2, \bs{p}, s,F)}\|f - f^*\|_{\infty}$ in Corollary~\ref{kappa-b}.

We first consider the approximation of the target class $\mathcal{F}_{0}(\beta,k,\bT)$ by the GCN class $\mathcal{G}(L_1,\bT)$.

\begin{lemma}\label{gcn-approx}
Let $\cF_{0}(\beta,k,\bT)$ be the function class defined in \eqref{delta-sum}. If $L_1\geq k,$ and $\beta\leq1$, then $\cF_{0}(\beta,k,\bT)\subseteq \cG(L_1,\bT).$
\end{lemma}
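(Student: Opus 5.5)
Since $\rho=0$, the class $\cF_{0}(\beta,k,\bT)$ is exactly $\mathcal{P}_k(\beta,\bT)$: the infimum over the finite-dimensional compact coefficient set $[-\beta,\beta]^k$ of the sup-distance is attained, because $\theta\mapsto\sup_{\bs x}\|f(\bs x)-\sum_j\theta_j\bT^j\bs x\|_\infty$ is continuous on it. So every $f\in\cF_0(\beta,k,\bT)$ has the form $f(\bs x)=\sum_{j=1}^k\theta_j\bT^j\bs x$ with $|\theta_j|\le\beta\le1$, at least on $[0,1]^{n\times d}$. It therefore suffices to show that every such polynomial filter is realised by some choice of parameters in \eqref{gcn-l-t} with $L=L_1$.

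The construction is explicit. Take all weight matrices equal to the identity, $\bW_1=\cdots=\bW_{L_1}=\mathbf I_d$, which is admissible since its entries lie in $\{0,1\}\subseteq[-1,1]$. Then $\bW_1\cdots\bW_\ell=\mathbf I_d$ for every $\ell$, and the network output is $\sum_{\ell=1}^{L_1}\gamma_\ell\bT^\ell\bs x$. Choose $\gamma_\ell=\theta_\ell$ for $\ell\le k$, which is allowed because $|\theta_\ell|\le\beta\le1$. This is where the hypothesis $\beta\le1$ enters. For the remaining indices $k<\ell\le L_1$ set $\gamma_\ell=0$; these indices exist or not depending on whether $L_1>k$, and the hypothesis $L_1\ge k$ guarantees that all $k$ coefficients can be placed. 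The resulting element of $\cG(L_1,\bT)$ coincides with $f$ as a map on $[0,1]^{n\times d}$, which gives the inclusion.

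There is no real obstacle. The only point needing care is the first step: identifying $\cF_0$ with the exact polynomial class rather than a closure, together with the convention that functions are compared on the domain $\mathcal X^n$. The range constraint $[-M,M]$ in $\mathcal P_k$ is only a restriction on the target class, so it does not affect membership in $\cG(L_1,\bT)$, which imposes no range constraint.
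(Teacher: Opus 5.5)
Your proof is correct and is essentially the paper's construction: identity weight matrices with $\gamma_\ell=\theta_\ell$ for $\ell\le k$, where the paper removes the layers $\ell>k$ by setting $\bW_\ell=\mathbf{0}$ and you set $\gamma_\ell=0$ instead, which makes no difference. Your compactness remark, showing that $\cF_0(\beta,k,\bT)$ is exactly the polynomial class with the infimum attained at some $\theta^*$ with $|\theta^*_j|\le\beta$, makes explicit a step the paper takes for granted.
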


The proof of Lemma~\ref{gcn-approx} is deferred to Section~\ref{appro-proofs}. The above result demonstrates that, when the propagation operators coincide, GCNs with sufficient depth and parameters constrained to $[-1,1]$ contain the function class $\mathcal{F}_{0}(\beta,k,\bT)$ with normalized coefficients. Additionally, for any function $f\in\cF_{\rho}(\beta,k,\bT)$ with $\rho>0$, Lemma~\ref{gcn-approx} implies that there exists $g\in\cG(L_1,\bT)$ satisfying $\|f-g\|_{\infty}\leq\rho$. In the case where $\beta > 1$, any $f \in \mathcal{F}_0(\beta,k,\bT)$ can be expressed as $f = \beta h$, where $h \in \mathcal{F}_0(1,k,\bT)$. Thus, the original approximation problem reduces to approximating the normalized function class $\mathcal{F}_0(1,k,\bT)$, while the scaling factor $\beta$ will be handled separately by the DNN component.

A key feature of the GNN class defined in~\eqref{gnn-models} is that it produces predictions through successive nonlinear transformations applied to the propagated features: the DNN ``readout'' is itself a composition of simple maps (affine transformations and pointwise nonlinearities). Motivated by this architectural structure, we model $\varphi^*$ as a finite composition of smooth, low-dimensional building blocks, allowing complex dependencies in $d$ variables to be assembled hierarchically from simpler interactions.
Beyond matching the network structure, this assumption is also statistically meaningful: if each intermediate component depends only on a small number of variables, then approximation and estimation rates are governed by the corresponding intrinsic dimensions rather than the ambient dimension $d$. This helps mitigate the curse of dimensionality, as shown e.g.\ in \cite{Schmidt-Hieber,juditsky2009nonparametric,baraud2014estimating}.

Concretely, we assume that $\varphi^*$ admits the representation
\begin{equation*}
\varphi^* = g_q^* \circ g_{q-1}^* \circ \cdots \circ g_0^*,
\end{equation*}
where each $g_i^*=(g_{i,1}^*,\ldots,g_{i,d_{i+1}}^*)^\top$ maps $[a_i,b_i]^{d_i}$ to $[a_{i+1},b_{i+1}]^{d_{i+1}}$.
Furthermore, for every $j\in\{1,\ldots,d_{i+1}\}$, we assume that $g_{i,j}^*$ is $\alpha_i$-H\"older smooth and depends on at most $t_i$ variables.
We now formalize this notion by recalling H\"older balls and then defining the resulting compositional function class.

For any $\alpha>0$, we say a function $f$ is of $\alpha$-H\"older smoothness if all its partial derivatives
up to order $\lfloor \alpha \rfloor$ exist and are bounded, and the partial derivatives of order $\lfloor \alpha \rfloor$
are $(\alpha-\lfloor \alpha \rfloor)$-H\"older continuous.
Given $t\in\mathbb{N}$ and $\alpha>0$, define the $\alpha$-H\"older ball with radius $K\geq0,$ denoted by $\cH_t^\alpha(\cD,K),$ as the collection of functions $f:\cD\subset\mathbb{R}^t\to\mathbb{R}$ such that
$$\sum_{\substack{{\bs{\beta}}=(\beta_{1},\ldots,\beta_{t})\in\N_0^{t}\\ \sum_{j=1}^{t}\beta_{j}<\alpha}}\|\partial^{\bs{\beta}}f\|_{\infty}+\sum_{\substack{{\bs{\beta}}\in\N_0^{t}\\ \sum_{j=1}^{t}\beta_{j}=\lfloor\alpha\rfloor}}\sup_{\substack{{\bs{x}},{\bs{y}}\in\cD\\{\bs{x}}\not={\bs{y}}}}\frac{\left|\partial^{\bs{\beta}}f({\bs{x}})-\partial^{\bs{\beta}}f({\bs{y}})\right|}{|{\bs{x}}-{\bs{y}}|_{\infty}^{\alpha-\lfloor\alpha\rfloor}}\leq K,$$ where for any ${\bs{\beta}}=(\beta_{1},\ldots,\beta_{t})\in\N_0^{t}$, $\partial^{\bs{\beta}}=\partial^{\beta_{1}}\cdots\partial^{\beta_{t}}$. 

Within the framework, we assume that each constituent function $g^*_{i,j} \in \mathcal{H}^{\alpha_i}_{t_i}([a_i, b_i]^{t_i}, K)$. Consequently, the underlying compositional function class is given by
\begin{align}
\cG(q,{\bs{d}},{\bs{t}},{\bm{\alpha}},K)=&\left\{g_{q}\circ\cdots\circ g_{0}:\;g_{i}=(g_{i,j})_{j}:\cro{a_{i},b_{i}}^{d_{i}}\rightarrow\cro{a_{i+1},b_{i+1}}^{d_{i+1}},\right.\nonumber\\
&\quad\quad\quad\quad\quad\quad\quad\quad\left.g_{i,j}\in\cH^{\alpha_{i}}_{t_{i}}(\cro{a_{i},b_{i}}^{t_{i}},K),\;\mbox{for some } |a_{i}|, |b_{i}|\leq K\right\},\label{composite-def}
\end{align}
where $q\in\mathbb{N}_0$, ${\bs{d}}=(d_{0},\ldots,d_{q+1})\in\mathbb{N}^{q+2}$ with $d_{0}=1$ and $d_{q+1}=1$, ${\bs{t}}=(t_{0},\ldots,t_{q})\in\mathbb{N}^{q+1}$, ${\bm{\alpha}}=(\alpha_{0},\ldots,\alpha_{q})\in\R_{+}^{q+1}$, and $K\geq0$. The $i$-th entry $t_i$ of the vector $\bs{t}$ indicates the effective input dimension for every $g_{i,j}$, with $j=1,\ldots,d_{i+1}$. 

For any function $f=g_{q}\circ\cdots\circ g_{0}\in\cG(q,{\bs{d}},{\bs{t}},{\bm{\alpha}},K)$, where each component $g_i$ possesses certain smoothness properties, the overall composition $f$ exhibits a specific level of smoothness determined by its constituents. The classical $d$-variate H\"older class corresponds to the special case where $q=0$ and $t_0=d$. In contrast, for the case $q = 1$ with $\alpha_0,\alpha_1 \leq 1$ and $d_0 = d_1 = t_0 = t_1 = 1$, the composite function $f = g_1 \circ g_0$ achieves a smoothness of order $\alpha_0\alpha_1$, as established in \cite{2017regularity,juditsky2009nonparametric,baraud2014estimating}. For general compositions, the effective smoothness parameters are defined for $i=0,\ldots,q-1$ by
\begin{equation}\label{effect-smooth}
\alpha_i^*=\alpha_i \prod_{\ell=i+1}^q (\alpha_\ell \wedge 1),    
\end{equation}
with $\alpha_q^*=\alpha_q$. These parameters influence the convergence rate of the network estimator.

Previous studies have established that deep ReLU networks are capable of effectively approximating compositional smooth function classes \cite{Schmidt-Hieber,chen2024}. Combined with Lemma~\ref{gcn-approx}, this leads to the following approximation error bound, proving that the target regression function class $\mathcal{F}_{\rho}(\beta,k,\bS_{\mathbf{A}})$ can be well approximated by $\mathcal{F}(\bS_{\mathbf{A}},L_1,L_2,\bs{p},s,F)$ with appropriate architecture.
\begin{lemma}\label{overall-approx}
Let $\varphi^{*} \in \cG(q,{\bs{d}},{\bs{t}},{\bm{\alpha}},K)$ with $K\geq1$ be defined in \eqref{composite-def}, and let $\psi_{\mathbf{A}}^*\in\mathcal{F}_{\rho}(\beta,k,\bS_{\mathbf{A}})$ with $\rho<1$ be defined in \eqref{delta-sum}. Set $Q_0=1$, $Q_i=(2K)^{\alpha_i}$ for $i \in [q-1]$, and $Q_q=K(2K)^{\alpha_q}$.
For any $N_i \in \mathbb{N}$ such that $N_i \geq (\alpha_i+1)^{t_i} \vee (Q_i+1)e^{t_i}$, there exists $f\in \cF(\bS_{\mathbf{A}},L_1,L_2,{\bs p},s,F)$ satisfying $$L_1\geq k,\quad L_2\leq C_{3} \log_2 n,\quad N=\max_{i=0,\ldots,q}N_i,\quad s \leq C_{4} N \log_2 n$$ $${\bs{p}}=\left(d,3\left\lceil\frac{\beta}{M}\right\rceil d,C_{5}N,\ldots,C_{5}N,1\right),\quad\mbox{and}\quad F\geq K,$$ such that for every $f^*_j = \varphi^{*} \circ \psi_{\mathbf{A},j}^{*}$, $j = 1,\dots,n$,
$$\|f_j-f^*_j\|_{\infty}\leq C_{6}\cro{\sum_{i=0}^{q}\left(N_i^{-\frac{\alpha_i}{t_i}}+N_in^{-\frac{\alpha_i+t_i}{2\alpha_i^*+t_i}}\right)^{\prod_{\ell=i+1}^{q}(\alpha_{\ell}\wedge1)}\hspace{-10pt}+\rho^{\prod_{i=0}^{q}(\alpha_{i}\wedge1)}},$$
where $C_{3},C_{4},C_{5},C_{6}$ are numerical constants that do not depend on $n$.
\end{lemma}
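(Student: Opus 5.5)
The construction has two stages: a GCN part that reproduces the polynomial filter exactly, followed by a deep ReLU network that approximates the compositional map $\varphi^*$ on the resulting bounded inputs. The error is then split into the DNN approximation error and the propagation mismatch $\rho$, the latter pushed through the H\"older continuity of $\varphi^*$.

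\textbf{Step 1 (GCN part).} Since $\psi^*_{\mathbf{A}}\in\mathcal{F}_\rho(\beta,k,\bS_{\mathbf{A}})$, we pick $g^\circ(\bs x)=\sum_{j=1}^k\theta_j\bS_{\mathbf{A}}^j\bs x\in\mathcal{P}_k(\beta,\bS_{\mathbf{A}})$ with $\sup_{\bs x}\|\psi^*_{\mathbf{A}}(\bs x)-g^\circ(\bs x)\|_\infty\le\rho$. We then write $g^\circ=\beta' h$ with $h\in\mathcal{F}_0(1,k,\bS_{\mathbf{A}})$, where $\beta'=\lceil\beta/M\rceil M$ or simply $\beta\vee1$. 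By Lemma~\ref{gcn-approx}, $h\in\mathcal{G}(L_1,\bS_{\mathbf{A}})$ for $L_1\ge k$; one takes $\bW_\ell=\mathbf I_d$, $\gamma_\ell=\theta_\ell/\beta'$ for $\ell\le k$, and zero weights beyond. The entries of $h$ are bounded by $M/\beta'$ up to the slack $\rho<1$, and hence by a constant.

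\textbf{Step 2 (rescaling and the first DNN layer).} The readout must first undo the scaling by $\beta'$ and map $[-M_\rho,M_\rho]^d$ affinely into $[a_0,b_0]^d$, the domain of $g_0^*$; without loss of generality this is $[0,1]^d$ after the normalization used in \cite{Schmidt-Hieber}. Multiplying by a factor up to $\beta/M$ with weights in $[-1,1]$ is done by summing $\lceil\beta/M\rceil$ copies of the input. Each coordinate is split as $\operatorname{ReLU}(u)-\operatorname{ReLU}(-u)$, together with an extra unit for the shift. This explains the width $3\lceil\beta/M\rceil d$ in the first hidden layer.

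\textbf{Step 3 (DNN approximation of $\varphi^*$).} We invoke the compositional approximation result of \cite{Schmidt-Hieber} (Theorem~5 and the proof of Theorem~1 there) or its refinement in \cite{chen2024}. Each $g^*_{i,j}$ is approximated by a ReLU network of depth $O(\log_2 n)$, width $O(N_i)$ and sparsity $O(N_i\log_2 n)$, with sup-error of order
\[
N_i^{-\alpha_i/t_i}+N_i n^{-\frac{\alpha_i+t_i}{2\alpha_i^*+t_i}},
\]
where $m$ in their statement is chosen of order $\log_2 n$ to produce the second term. The conditions $N_i\ge(\alpha_i+1)^{t_i}\vee(Q_i+1)e^{t_i}$ are precisely the hypotheses of that theorem, with $Q_i$ accounting for the rescaled domains $[a_i,b_i]$. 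The networks are composed in parallel and in series. The error propagates through the composition using $\|g_{\ell}^*(u)-g_\ell^*(v)\|\le K|u-v|_\infty^{\alpha_\ell\wedge1}$, which yields the exponents $\prod_{\ell>i}(\alpha_\ell\wedge1)$. Finally we clip to $[-F,F]$ with $F\ge K$; this is harmless because $|\varphi^*|\le K$.

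\textbf{Step 4 (propagating the $\rho$ mismatch).} Let $\tilde\varphi$ denote the network from Step 3, and write
\[
\|f_j-f_j^*\|_\infty\le\|\tilde\varphi\circ g^\circ_j-\varphi^*\circ g^\circ_j\|_\infty+\|\varphi^*\circ g^\circ_j-\varphi^*\circ\psi^*_{\mathbf{A},j}\|_\infty .
\]
The first term is the Step 3 bound, which is uniform in the input. The second is controlled by $\rho^{\prod_{i=0}^q(\alpha_i\wedge1)}$ via iterated H\"older continuity of the composition, using $\rho<1$ so that $\rho^a$ dominates higher powers.

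The main obstacle is domain bookkeeping. The inputs $g_j^\circ(\bs x)$ and $\psi^*_{\mathbf{A},j}$ take values in $[-M_\rho,M_\rho]^d$, possibly larger than the domain of $g_0^*$. One must either assume $\varphi^*$ is defined, H\"older and bounded on the enlarged box, or extend it. The affine rescaling must then be compatible with the constants $Q_i$ and keep the weights in $[-1,1]$. I would handle this by absorbing the rescaling into $g_0^*$, adjusting $K$ by a constant factor, and then checking that the constants $C_3,\dots,C_6$ remain free of $n$.
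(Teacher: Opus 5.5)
Your proposal is correct and follows essentially the same route as the paper: the GCN represents the normalized filter exactly via Lemma~\ref{gcn-approx}, and a first hidden layer of replicated $\operatorname{ReLU}(\pm z_i)$ units plus a bias unit implements the affine rescaling with weights in $[-1,1]$ (hence the width $3\lceil\beta/M\rceil d$). Then the compositional approximation of \cite{Schmidt-Hieber} is applied to $\varphi^*$ with the rescaling absorbed into $g_0^*$, and the $\rho$-part of the triangle inequality is controlled by iterated H\"older continuity as in Lemma~\ref{stability-com}; your normalization by some $\beta'\geq 1$, with the mismatch measured at the original scale, is a cosmetic variant that even avoids the paper's tacit use of $\beta\geq 1$ in asserting $\|g_j-\psi^*_{\beta,j}\|_\infty\leq\rho$, and the paper handles the domain issue you flag just as you suggest, by implicitly taking $g_0^*$ to be H\"older on the enlarged box.
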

The proof is postponed to Section~\ref{appro-proofs}. Lemma~\ref{overall-approx} decomposes the approximation error into two terms associated with approximating $\cF_\rho(\beta,k,\bS_{\mathbf{A}})$ and $\cG(q,{\bs{d}},{\bs{t}},{\bm{\alpha}},K)$. The effective smoothness $\alpha_i^*$ quantify the approximation rate. In particular, when $N_i \asymp n^{t_i/(2\alpha_i^*+t_i)}$ holds for all $i$, the first term in the bracket becomes
$$\sum_{i=0}^{q}\left(N_i^{-\frac{\alpha_i}{t_i}}+N_in^{-\frac{\alpha_i+t_i}{2\alpha_i^*+t_i}}\right)^{\prod_{\ell=i+1}^{q}(\alpha_{\ell}\wedge1)}\lesssim q\max_{i=0,\ldots,q}n^{-\frac{\alpha_i^*}{2\alpha_i^*+t_i}}.$$ This scenario arises when we later incorporate the model complexity from Proposition~\ref{cover-whole} to determine an appropriate network architecture for estimation.

The preceding analysis characterizes the approximation error associated with using the true propagation operator $\mathbf{S}_{\mathbf{A}}$. For practical situations where the implemented operator $\bT$ deviates from $\bS_{\mathbf{A}}$, the next result shows that this deviation introduces an additional error term beyond $\rho$.
\begin{lemma}\label{t-s-app-error}
Let $\bT$ and $\bS_{\mathbf{A}}$ be $n \times n$ matrices with respective row sparsity at most $d_{\bT}$ and $d_{\bS_{\mathbf{A}}}$. If $\|\bT-\bS_{\mathbf{A}}\|_{\operatorname{F}} \leq \tau$, then for any positive integer $L_1$, any row index $j \in\{1,\ldots,n\}$, and any ${\bs x} \in [0,1]^{n \times d}$, we have
\[
\left|\left(\sum_{i=1}^{L_1}\theta_i(\bT^i-\bS_{\mathbf{A}}^i){\bs x}\right)_{j,\cdot}\right|_{\infty}\leq\tau \sqrt{d_{\bT}+d_{\bS_{\mathbf{A}}}}\cro{\sum_{i=1}^{L_1}i|\theta_i| \big(\|\bT\|_{1,\infty}\vee\|\bS_{\mathbf{A}}\|_{1,\infty}\big)^{i-1}}.
\]
\end{lemma}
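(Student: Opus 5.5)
The plan is a telescoping argument. For each $i\geq1$ we use
\[
\bT^i-\bS_{\mathbf{A}}^i=\sum_{r=0}^{i-1}\bT^{r}\,(\bT-\bS_{\mathbf{A}})\,\bS_{\mathbf{A}}^{i-1-r},
\]
which can be checked by induction on $i$. Fix a row $j$ and a column $c\in[d]$ of $\bs x$. The $(j,c)$ entry of $\sum_i\theta_i(\bT^i-\bS_{\mathbf{A}}^i)\bs x$ is bounded by $\sum_i|\theta_i|\sum_{r=0}^{i-1}\big|\big(\bT^r(\bT-\bS_{\mathbf{A}})\bS_{\mathbf{A}}^{i-1-r}\bs x\big)_{j,c}\big|$. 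It therefore suffices to show that each summand is at most
\[
\tau\sqrt{d_{\bT}+d_{\bS_{\mathbf{A}}}}\,\big(\|\bT\|_{1,\infty}\vee\|\bS_{\mathbf{A}}\|_{1,\infty}\big)^{i-1}.
\]
Summing over the $i$ choices of $r$ then produces the factor $i$ in the claim.

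To bound a single summand we peel off the three factors in turn. First, since entries of $\bs x$ lie in $[0,1]$, we have $|(\bS_{\mathbf{A}}^{p}\bs x)_{\ell,c}|\leq\|\bS_{\mathbf{A}}\|_{1,\infty}^{p}$ for every $\ell$, because the row-sum norm is the $\ell^\infty\to\ell^\infty$ operator norm and is submultiplicative. So $v:=\bS_{\mathbf{A}}^{i-1-r}\bs x_{\cdot,c}$ satisfies $|v|_\infty\leq\|\bS_{\mathbf{A}}\|_{1,\infty}^{i-1-r}$. Second, for any row $\ell$, the vector $w=(\bT-\bS_{\mathbf{A}})v$ satisfies
\[
|w_\ell|=\Big|\sum_k(\bT-\bS_{\mathbf{A}})_{\ell,k}v_k\Big|\leq\|(\bT-\bS_{\mathbf{A}})_{\ell,\cdot}\|_2\,\sqrt{|\mathrm{supp}((\bT-\bS_{\mathbf{A}})_{\ell,\cdot})|}\,|v|_\infty\leq\tau\sqrt{d_{\bT}+d_{\bS_{\mathbf{A}}}}\,|v|_\infty .
\]
Here we use Cauchy--Schwarz restricted to the support, the bound $\|(\bT-\bS_{\mathbf{A}})_{\ell,\cdot}\|_2\leq\|\bT-\bS_{\mathbf{A}}\|_{\operatorname F}\leq\tau$, and the fact that the support of a row of the difference has at most $d_{\bT}+d_{\bS_{\mathbf{A}}}$ entries. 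Third, applying $\bT^r$ gives $|(\bT^r w)_j|\leq\|\bT\|_{1,\infty}^r|w|_\infty$. Multiplying the three bounds yields $\tau\sqrt{d_{\bT}+d_{\bS_{\mathbf{A}}}}\,\|\bT\|_{1,\infty}^r\|\bS_{\mathbf{A}}\|_{1,\infty}^{i-1-r}$. This is at most the displayed quantity, since $\|\bT\|_{1,\infty}^r\|\bS_{\mathbf{A}}\|_{1,\infty}^{i-1-r}\leq(\|\bT\|_{1,\infty}\vee\|\bS_{\mathbf{A}}\|_{1,\infty})^{i-1}$.

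Finally, taking the maximum over columns $c$ gives the $|\cdot|_\infty$ bound on row $j$. No step is really an obstacle. The only point needing care is the middle one: we must use the sparsity of rows of $\bT-\bS_{\mathbf{A}}$ to convert the Frobenius control $\tau$ into an $\ell^\infty\to\ell^\infty$ bound, which is where the factor $\sqrt{d_{\bT}+d_{\bS_{\mathbf{A}}}}$ comes from. We must also apply this bound to an arbitrary intermediate vector $v$, not only to $\bs x$; this is why the row-sum norm bounds on the outer powers are needed.
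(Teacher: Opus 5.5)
Your proposal is correct and follows the paper's proof. It uses the same telescoping identity $\bT^i-\bS_{\mathbf{A}}^i=\sum_{r=0}^{i-1}\bT^{r}(\bT-\bS_{\mathbf{A}})\bS_{\mathbf{A}}^{i-1-r}$, the same Cauchy--Schwarz step on row supports giving $\|\bT-\bS_{\mathbf{A}}\|_{1,\infty}\leq\tau\sqrt{d_{\bT}+d_{\bS_{\mathbf{A}}}}$, and submultiplicativity of the row-sum norm, with one purely notational difference: the paper bounds the $\ell^1$ norm of the row vector $\bs{e}_j^\top\bT^{r}(\bT-\bS_{\mathbf{A}})\bS_{\mathbf{A}}^{i-1-r}$ and pairs it with $\|\bs{x}\|_\infty\leq 1$, whereas you push each column $\bs{x}_{\cdot,c}$ through in $\ell^\infty$, which is the dual and equivalent view of the same norm.
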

The proof is given in Section \ref{appro-proofs}. Lemma~\ref{t-s-app-error} shows that the additional
approximation error induced by using an implemented propagation operator $\bT$ instead of the
target operator $\bS_{\mathbf{A}}$ is controlled by three factors: (i) their Euclidean distance  
$\tau=\|\bT-\bS_{\mathbf{A}}\|_{\operatorname{F}}$, (ii) the local sparsity levels $d_{\bT},d_{\bS}$ (number of nonzeros per row), and
(iii) an amplification term $\sum_{i=1}^{L_1}i |\theta_i|(\|\bT\|_{1,\infty}\vee\|\bS_{\mathbf{A}}\|_{1,\infty})^{i-1}$ arising from repeated
propagation. In particular, if $L_1$ and $\{|\theta_i|\}_{i=1}^{L_1}$ are treated as constants, the mismatch
contribution scales as $\lesssim\tau\sqrt{d_{\bT}+d_{\bS_{\mathbf{A}}}}$.

For sparse ``real-life'' networks, it is natural to assume that the number of
connections per node grows slowly with $n$; a common regime, as shown in \cite{bonato2014dimensionality,xue2004number}, is
\[
d_{\bT},d_{\bS_{\mathbf{A}}}\lesssim\log n.
\]
Under this logarithmic-degree scaling, Lemma~\ref{t-s-app-error} yields a mismatch term of order
$\tau\sqrt{\log n}$ provided $\|\gT\|_{1,\infty}\vee\|\bS_{\mathbf{A}}\|_{1,\infty}$ remains $\cO(1)$, a condition satisfied by commonly used normalized propagation operators such as row stochastic matrices. If instead $\bT$ is
an unnormalized adjacency-type operator with $\cO(1)$ weights, then $\|\gT\|_{1,\infty}$ typically
scales like the maximum degree and hence also $\lesssim \log n$ in this regime, yielding an
at-most polylogarithmic dependence on $n$ through
$\sum_{i=1}^{L_1} |\theta_i|\, i\,(\log n)^{i-1}$
(and therefore $\lesssim \tau(\log n)^{L_1-1/2}$ when $L_1$ is fixed).

\subsection{Convergence rate of the least-squares estimator}
Throughout this section, we assume that the propagation function $\psi_{\mathbf{A}}^*\in\cF_0(\beta,k,\bS_{\mathbf{A}})$ and the outer function $\varphi^{*}\in\cG(q,{\bs{d}},{\bs{t}},{\bm{\alpha}},K)$. The aim is to examine the rate of convergence of the least-squares estimator $\widehat f$ defined in \eqref{lse-def} over the class $\cF(\bS_{\mathbf{A}},L_1,L_2,{\bs p},s,F)$. 

By definition \eqref{def-delta-em}, the optimization error $\Delta^{\cF(\bS_{\mathbf{A}},L_1,L_2,{\bs p},s,F)}_n
(\widehat f, f^*)$ vanishes if $\widehat f$ is the empirical risk minimizer. Therefore, Corollary~\ref{kappa-b} shows that the prediction error of $\widehat f$ admits a bias-variance decomposition consisting of a bias term plus a variance term that depends on the network class. Proposition~\ref{cover-whole} and Lemma~\ref{overall-approx} demonstrate that using network models with more parameters increases the variance while typically reducing the bias. In the next result, we show that when the network architecture is well-chosen, the least-squares estimator $\widehat f$ converges to the true regression function.

\begin{theorem}\label{converge-rate}
Suppose Assumption~\ref{ass-m} holds for some $m\geq1$ and that the unknown regression function $f^*$ has $i$-th component of the form $f^*_i=\varphi^{*}\circ \psi_{\mathbf{A},i}^*$, where $\varphi^{*}\in\cG(q,{\bs{d}},{\bs{t}},{\bm{\alpha}},K)$ with $K\geq1$ and $\psi_{\mathbf{A}}^*\in\cF_0(\beta,k,\bS_{\mathbf{A}})$. Set $N_i=\lceil n^{t_i/(t_i+2\alpha_i^*)}\rceil$, $N=\max_{i=0,\ldots,q}N_i$, and let $\widehat f$ be the least-squares estimator over the network class $\mathcal{F}(\bS_{\mathbf{A}}, L_1, L_2, \bs{p}_n, s_n,F)$, where the network architecture satisfies
\begin{enumerate}[label=(\roman*), itemsep=0pt, topsep=2pt, parsep=0pt]
\item $L_1 \geq k$;
\item $1\leq L_2 \leq C_{7} \log_2 n$;
\item ${\bs{p}}_n=\left(d,3\lceil\frac{\beta}{M}\rceil d,C_{8}N,\ldots,C_{8}N,1\right)$;
\item $2\leq s_n \leq C_{9} N\log_2 n$;
\item $F \geq K$.
\end{enumerate}
Then, for all sufficiently large $n$, 
\begin{align*}
\mathcal{R}(\widehat{f}, f^*)\leq C_{10}\frac{m^2\log^3 n}{\pi}\max_{i=0,\ldots,q}n^{-\frac{2\alpha_i^*}{2\alpha_i^*+t_i}}.
\end{align*} 
Here, $C_7,C_8,C_9,C_{10}$ are numerical constants independent of $n$.
\end{theorem}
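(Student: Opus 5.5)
The plan is to combine Corollary~\ref{kappa-b}, Lemma~\ref{overall-approx} with $\rho=0$, and Proposition~\ref{cover-whole}.

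\textbf{Step 1 (reduce to bias plus variance).} Since $\widehat f$ is the empirical risk minimizer over $\overline\cF=\cF(\bS_{\mathbf{A}},L_1,L_2,\bs p_n,s_n,F)$, we have $\Delta_n^{\overline\cF}(\widehat f,f^*)=0$. Fix $\varepsilon=1$ in Corollary~\ref{kappa-b}. Its hypotheses must be checked.
\begin{itemize}
\item We need $\|f_i^*\|_\infty\le F$. This holds because $\varphi^*$ takes values in $[a_{q+1},b_{q+1}]$ with $|a_{q+1}|,|b_{q+1}|\le K\le F$.
\item We need $\cN(1/n,\overline\cF,\|\cdot\|_\infty)\gtrsim n$. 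I would argue this holds for large $n$ because the class contains all constant functions $c\in[-F,F]$, realized through the last-layer bias and truncation. These constants are pairwise $2/n$-separated on a grid of size $\asymp nF$, which forces about $n$ balls.
\end{itemize}
The corollary then gives
\[
\cR(\widehat f,f^*)\le 8\inf_{f\in\overline\cF}\|f-f^*\|_\infty^2+C\frac{m^2F^2}{\pi}\frac{\kappa_n}{n}.
\]

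\textbf{Step 2 (bias).} Apply Lemma~\ref{overall-approx} with $\rho=0$ and $N_i=\lceil n^{t_i/(t_i+2\alpha_i^*)}\rceil$. For large $n$ these satisfy $N_i\ge(\alpha_i+1)^{t_i}\vee(Q_i+1)e^{t_i}$. The lemma produces a network with $L_1\ge k$, depth $\le C_3\log_2 n$, widths $C_5N$ and sparsity $\le C_4N\log_2 n$. Choosing $C_7\ge C_3$, $C_8\ge C_5$ and $C_9\ge C_4$ places this network in $\overline\cF$.

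There is one subtlety. The class has a fixed depth $L_2$, while the construction has depth at most $C_3\log_2 n$. I would handle this by padding with identity layers $x=\mathrm{ReLU}(x)-\mathrm{ReLU}(-x)$, or by taking $L_2$ equal to the constructed depth, which is within the allowed range up to constants. Widths can be enlarged by zero-padding.

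Next, bound the terms. We have $N_i^{-\alpha_i/t_i}\asymp n^{-\alpha_i/(2\alpha_i^*+t_i)}$. Also $N_in^{-(\alpha_i+t_i)/(2\alpha_i^*+t_i)}\asymp n^{-\alpha_i/(2\alpha_i^*+t_i)}$. Raising to the power $\prod_{\ell>i}(\alpha_\ell\wedge1)$ gives $n^{-\alpha_i^*/(2\alpha_i^*+t_i)}$. Squaring, the bias is
\[
\lesssim \max_i n^{-2\alpha_i^*/(2\alpha_i^*+t_i)},
\]
where $q$ is treated as a constant.

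\textbf{Step 3 (variance).} Bound $\kappa_n$. With $d$, $L_1$ and $\|\bS_{\mathbf{A}}\|_{1,\infty}$ treated as constants, $s_n\lesssim N\log n$, $L_2\lesssim\log n$ and $\log s\lesssim\log n$, we get
\[
\kappa_n\lesssim N\log n\cdot(\log n+\log n\cdot\log n)\lesssim N\log^3 n.
\]
Since $N/n=\max_i n^{t_i/(t_i+2\alpha_i^*)-1}=\max_i n^{-2\alpha_i^*/(2\alpha_i^*+t_i)}$, the variance term is
\[
\lesssim \frac{m^2F^2\log^3 n}{\pi}\max_i n^{-2\alpha_i^*/(2\alpha_i^*+t_i)}.
\]
Because $m\ge1$ and $\pi\le1$, the bias term is dominated by this expression. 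Absorbing $F$, $K$, $d$, $\beta$, $M$ and $q$ into $C_{10}$ gives the claim.

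\textbf{Main obstacle.} I expect the main obstacle to be the architecture bookkeeping in Step 2, namely matching the exact depth $L_2$ and width vector of the class to the network produced by Lemma~\ref{overall-approx}. A second point needing care is that the $L_1$-dependent constant in $\kappa_n$, in particular $(L_1+1)\log(\|\bS_{\mathbf{A}}\|_{1,\infty}\vee d)$, stays $O(\log n)$. This requires that $\|\bS_{\mathbf{A}}\|_{1,\infty}$ be at most polynomial in $n$ and $L_1$ bounded, which I would state as implicit in treating these quantities as fixed.
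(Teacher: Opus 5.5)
Your proposal is correct and follows the paper's proof step for step. Like the paper, you verify $\cN(1/n,\overline\cF,\|\cdot\|_\infty)\gtrsim n$ with constant networks built from the last-layer bias, bound the bias via Lemma~\ref{overall-approx} with the stated $N_i$, bound $\kappa_n\lesssim s_nL_2\log(ns_nL_2)\lesssim N\log^3 n$, and plug both into Corollary~\ref{kappa-b} with $\varepsilon=1$. Two small notes: because biases are restricted to $[-1,1]$, this construction only yields constants in $[-1,1]$ rather than $[-F,F]$, but that still gives $\gtrsim n$ separated functions (the paper uses $3j/n$ for $j<\lceil n/3\rceil$). Separately, your point that the approximating network must actually fit the chosen depth $L_2$ is one the paper's proof passes over silently, and the same holds for $s_n$, which must be at least of order $N\log n$ for that network to lie in the class.
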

The proof follows from a consequence of Lemma~\ref{overall-approx} and Corollary~\ref{kappa-b}, and is deferred to Section~\ref{main-proof}. Theorem~\ref{converge-rate} provides an explicit non-asymptotic convergence rate for the least-squares
estimator over the GNN class \(\mathcal{F}(\bS_{\mathbf{A}},L_1,L_2,p_n,s_n,F)\) under the compositional
model \(f_i^*=\varphi^*\circ \psi_{{\mathbf{A}},i}^*\).
The bound exhibits the following salient features.

The convergence rate is governed by the intrinsic regularity of $\varphi^*$, specifically by its effective smoothness parameters $\alpha_i^*$ defined in \eqref{effect-smooth} and the associated intrinsic dimensions $t_i$. Ultimately, the overall rate is determined by the bottleneck \(\max_{i=0,\dots,q} n^{-2\alpha_i^*/(2\alpha_i^*+t_i)}\). In particular, in the classical Hölder case (\(q=0\), \(t_0=d\)), Theorem~2 recovers the minimax-optimal rate
\(n^{-2\alpha/(2\alpha+d)}\) in terms of the sample size $n$, up to logarithmic factors. For target functions \(\varphi^*\) with a genuinely
compositional structure, however, the convergence rate depends only on the smaller intrinsic dimensions \(t_i\), thereby mitigating the curse of dimensionality. 

While matching the minimax rate with respect to the sample size $n$, as noted below Theorem~\ref{main}, the factor \(m^2/\pi\) characterizes a key departure from standard regression, explicitly quantifying the interaction between graph-induced dependence and semi-supervised learning. For graphs of bounded degree with fixed propagation depth, \(m\) is essentially constant (up to polylog factors). When \(m\) grows rapidly with \(n\) (for instance, due to large depth \(L_1\) or the presence of high-degree hubs), the combined term \(m^2/\pi\) may dominate. This reveals an explicit bias-variance trade-off in message passing: while increasing \(L_1\) (or employing less localized propagation) reduces approximation bias for \(\psi_{{\mathbf{A}}}^*\), it simultaneously increases the dependence penalty through \(m\). Thus, $L_1$ should be kept near the target filter order $k$, and normalized propagation operators (with bounded $\|\bS_{\mathbf{A}}\|_{1,\infty}$) are recommended to control constants.

\section{Numerical experiments}\label{experiment-sec}
In this section, we empirically validate our theoretical findings through synthetic and real world experiments. Specifically, we seek to (i) confirm the convergence rate of the prediction error as a function of the sample size $n$ (as provided by Theorem \ref{converge-rate}), and (ii) evaluate how the graph topology, as captured by the average and maximum degree $\Delta$, affects the decay rate of the MSE.  To this end, we compare four neural network architectures:
\begin{itemize}
\item \textbf{MLP (no propagation).} Set $\bZ=\bX$ and predict $\widehat{\bY}=\widehat f(\bZ)$ with a ReLU MLP
$\widehat f\in \mathrm{DNN}_{L_{\mathrm D}}$, where $L_{\mathrm D}$ denotes the depth of the neural network.

\item \textbf{GCN (no skip connections).} Propagate features with $L_{\mathrm G}\in\{1,2,3\}$ linear GCN layers
$\widehat g\in\mathrm{GCN}_{L_{\mathrm G}}$ to obtain $\bZ=\widehat g(\bX)$ , then predict with an MLP head
$\widehat{\bY}=\widehat f(\bZ)$, $\hat f\in\mathrm{DNN}_{L_{\mathrm D}}$.

\item \textbf{GCN (skip connections).} Compute layerwise representations
$H^{(\ell)}_{\bS_{\bA}}(\bX)$ and form a convex combination
$\bZ=\sum_{\ell=1}^{L} w_\ell H^{(\ell)}_{\bS_{\bA}}(\bX)$ with
$w_\ell=\exp(\alpha_\ell)/\sum_{k=1}^{L}\exp(\alpha_k)$, then predict
$\widehat{\bY}=\widehat f(\bZ)$ with $\widehat f\in\mathrm{DNN}_{L_{\mathrm D}}$.

\item \textbf{MaGNet-inspired multi-scale model}. We adapt MaGNet to node regression by
aggregating multi-hop linear convolutions $H^{(\ell)}_{\bS}(\bX)=\bS^\ell \bX \bW$ with
$\bS=\bD^{-1/2}\bA\bD^{-1/2}$ (no self-loops), forming $\bZ=\sum_{\ell=1}^{L} w_\ell H^{(\ell)}_{\bS}(\bX)$,
and applying an MLP head node-wise. The fusion weights $w_\ell$ are either learned end-to-end or set via
the critic mode of \cite{zhou2025model}.
\end{itemize}

\subsection{Evaluating convergence rate}
\begin{figure}[t]
\centering
\includegraphics[width=\linewidth,height=9cm]{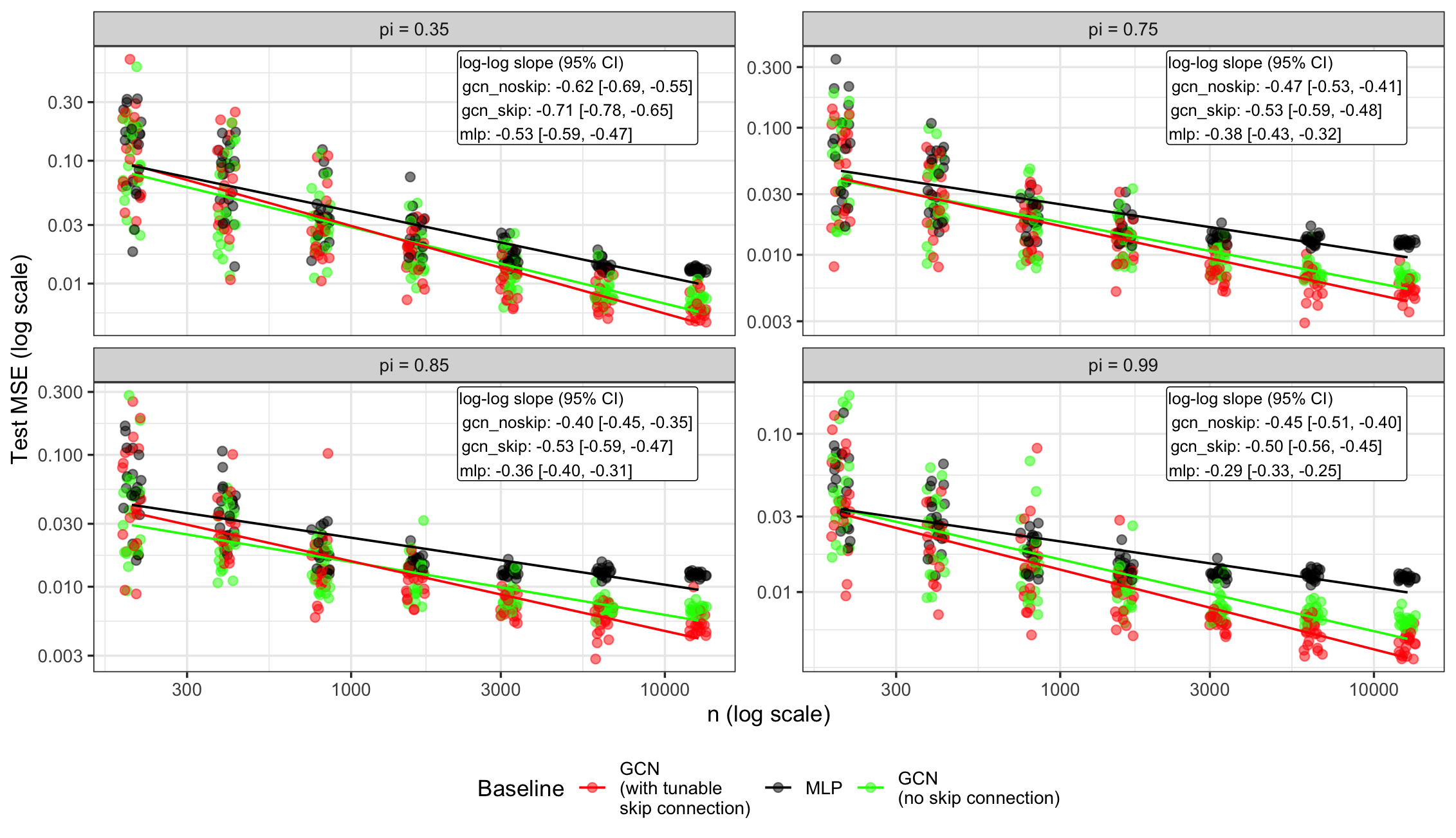}
\caption{MSE (over 20 trials) as a function of training samples $n$, with the unmasked proportion held constant at $\pi \in \{0.35, 0.75, 0.85, 0.95\}$. Estimators are distinguished by color: GCN with skip connections, GCN without skip connections, and the MLP baseline.}
\label{fig:slope1}
\end{figure}
To evaluate the results established in Theorem \ref{converge-rate}, we consider a cycle (ring) graph with $n$ nodes and a bounded degree of $\Delta = 3$ (including self-loops). We generate node features as $X_i \stackrel{\text{i.i.d.}}{\sim} \operatorname{Unif}[0,1]$ and construct propagated features as $$\mathbf{Z} = \sum_{j=1}^k \theta_j\bS_{\mathbf{A}}^j \bX,$$ where $\mathbf{S}_{\mathbf{A}}$ is taken as a neighborhood average and the coefficients $\theta_j$ are bounded. The responses are generated as $$Y_i = \varphi^{*}(\bZ_{i,\cdot})+\varepsilon_i\quad\mbox{with}\quad\varepsilon_i\stackrel{\text{i.i.d.}}{\sim} \mathcal{N}(0,1).$$ We set $$\varphi^*({\bs z}) = \operatorname{BM}\left(\operatorname{Sigmoid}\left(\frac{{\bs z}}{\text{scale}}\right)\right),$$ where BM denotes a sample path of a Brownian motion, obtained by discretizing the interval $[0,1]$ into $2^{12}$ equal-length subintervals. This ensures that $\varphi^*$ has H\"older regularity $\alpha \approx 1/2$ \citep{KleyntssensNicolay2022}. Node responses $Y_i$ are observed on a random subset (inclusion probability $\pi$). Models are trained and evaluated on an independent copy of the features $\mathbf{X}'$.

Results are compared across methods and for different values of $\pi$ in Figure~\ref{fig:slope1}. Recall that Theorem \ref{converge-rate} points to a learning rate of order $n^{-2\alpha/(2\alpha+t)}$. For the considered simulation setup, $t=1$ and $\alpha$ is close to $1/2$. Theorem \ref{converge-rate} thus postulates a convergence rate of approximately $n^{-1/2}$ in this setting.

\begin{figure}[t]
\centering
\includegraphics[width=0.85\textwidth,height=6cm]{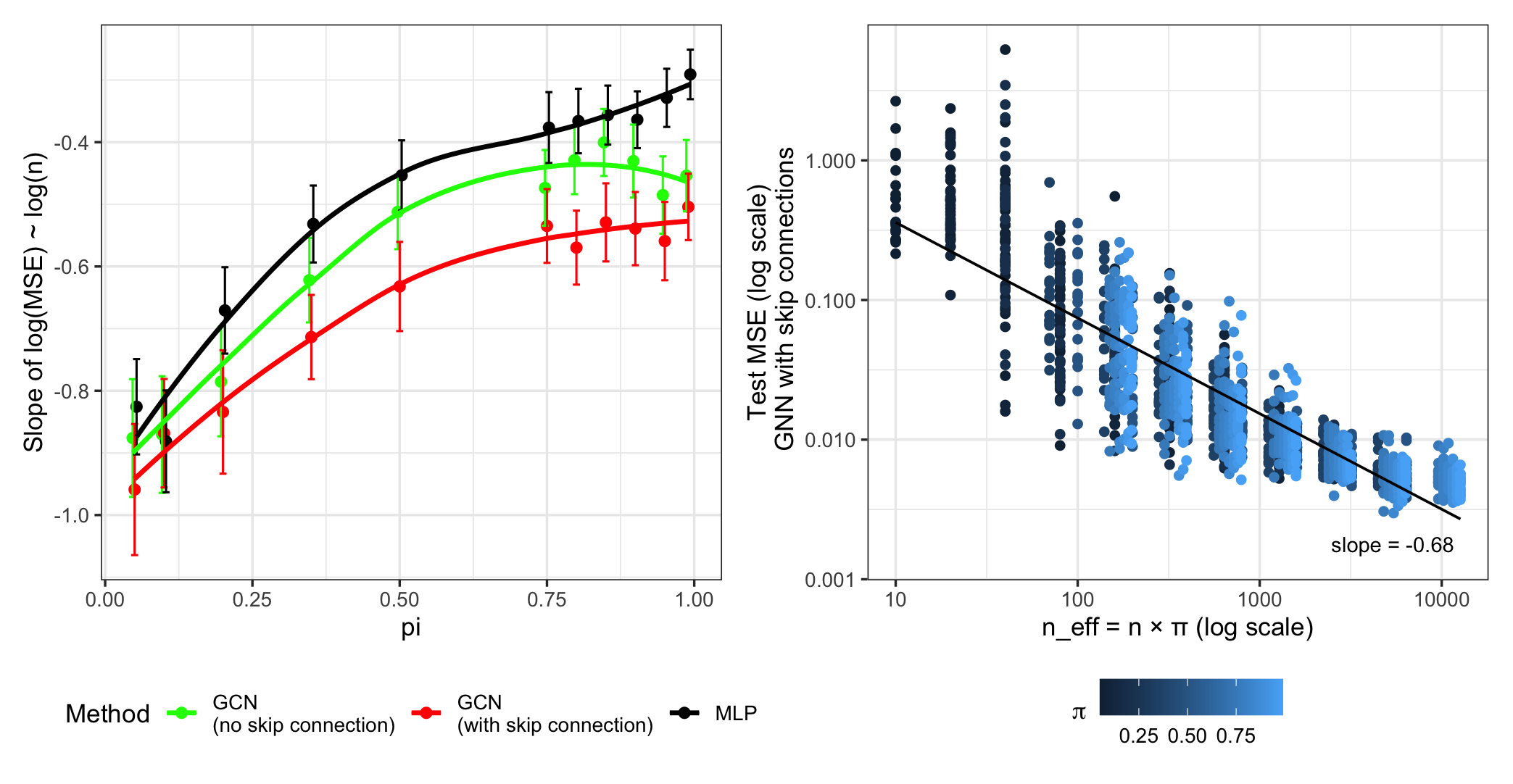}
\caption{\textbf{Left:} Fitted slopes of $\log(\mathrm{MSE})$ vs. $\log(n)$ as a function of connection probability $\pi$. \textbf{Right}: $\log(\operatorname{MSE})$ as a function of the effective sample size $n_{\text{eff}} = n \times \pi$ for the GCN with skip connections, as proposed in Equation~\eqref{gcn-l-t}.}
\label{fig:all_exp1}
\end{figure}

As shown in Figure~\ref{fig:slope1}, for large training sets (high $\pi$), the fitted slope is approximately $-1/2$, in good agreement with the theoretical prediction. We also note, however, that for very low $\pi$, the slope deviates from its expected value; This is particularly salient in Figure~\ref{fig:all_exp1}, where we plot the fitted slope as a function of the proportion of training data $\pi$, and observe significantly lower slopes for extremely small values of $\pi$ (for instance, around $-0.95$ for $\pi=0.01$). This effect could be explained by an increase in the relative contribution of the optimization error in low-data regimes (consistent with Corollary~\ref{kappa-b}, which posits that optimization error imposes a lower bound on prediction accuracy). The right subplot of Figure~\ref{fig:all_exp1}, which shows performance (as measured by the test MSE) against the effective number of training samples, highlights indeed a low-sample regime ($n_{\text{eff}} < 100$), characterized by higher MSE and a stagnation phase before improvement ($n_{\text{eff}} \geq 100$).

\begin{figure}[t]
\centering
\includegraphics[width=\linewidth]{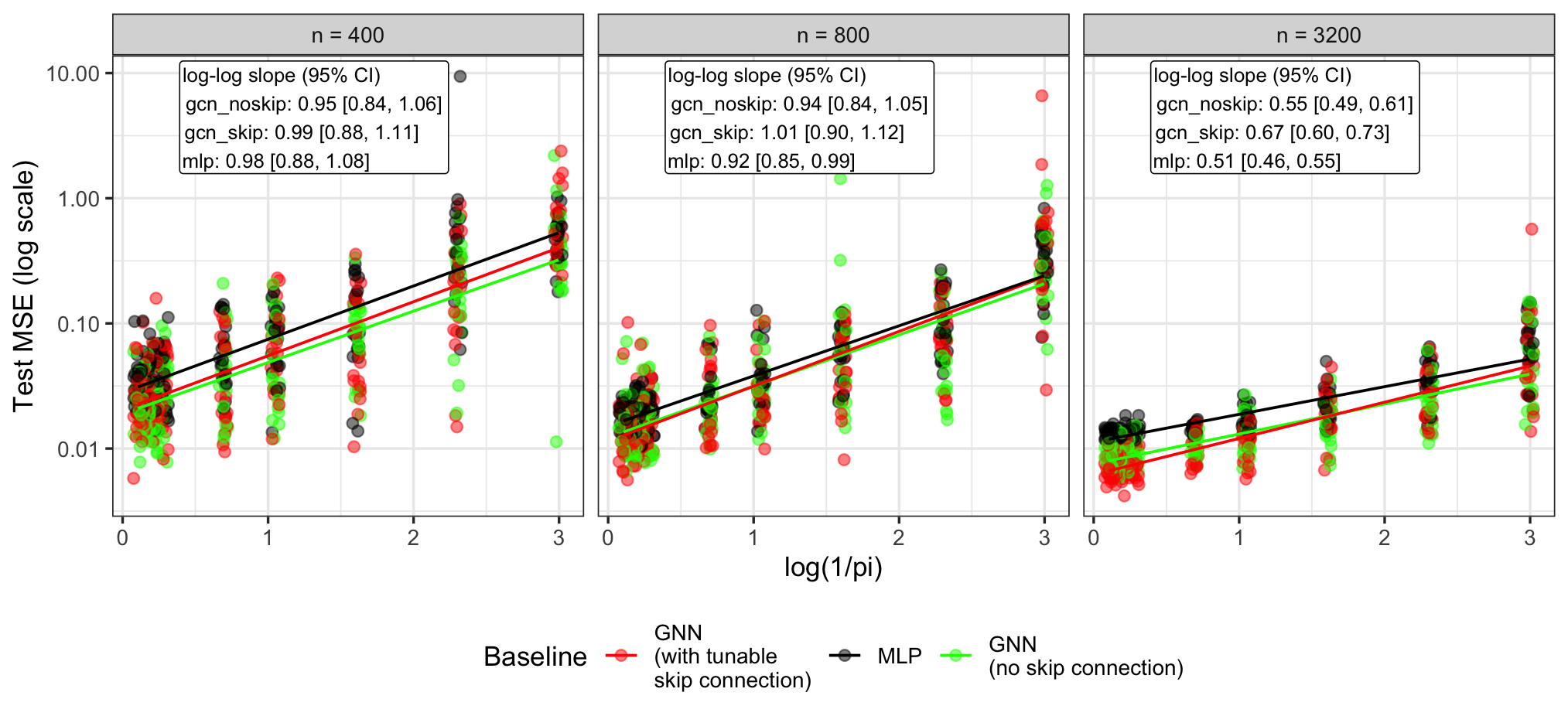}
\caption{MSE (over 20 trials) as a function of $\log(1/\pi)$ across different graph sizes. Estimators are distinguished by color: GCN with skip connections, GCN without skip connections, and the MLP baseline.}
\label{fig:slope_pi}
\end{figure}

Figure~\ref{fig:slope_pi} is a log-log plot of the MSE as a function of the inverse of the labeled nodes proportion $1/\pi$. Interestingly, for smaller datasets, the results seem to be in agreement with theory, with a linear increase for small values of $\pi$ (large values of $\log(1/\pi)$). As $n$ increases, the fitted slope decreases (e.g., to $0.67$ at $n=3,200$), which is consistent with a regime
in which $\pi$-independent components of the error (e.g. optimization effects) become non-negligible
relative to the $1/\pi$ stochastic term. This suggests that the worst-case $1/\pi$ dependence in the bound can be
conservative in large-$n$ settings.

\subsection{Assessing the impact of graph topology}
To validate the dependency of Theorem~\ref{converge-rate} on the receptive field size $m$, we generate synthetic graphs ($n=3000$) with a fixed average degree $\bar{\delta}$ across four distinct topologies: 
\begin{enumerate}[label=(\roman*)]
\item Erd\H{o}s--Rényi graphs with $\pi= \bar{\delta}/n$;
\item stochastic block models on 2 blocks of equal size, with intra-class probability $\pi_{\text{within}} = 0.55\bar{\delta}/n$ and inter-class probability $\pi_{\text{between}} = 0.055\bar{\delta}/n$;
\item random geometric graphs with radius $\tau = \sqrt{\bar{\delta}/(\pi n)}$);
\item Barabási–Albert graphs with parameter $m = \lfloor \bar{\delta}/2 \rfloor$.
\end{enumerate}

Node features are sampled as $X_i \overset{\text{i.i.d.}}{\sim} \mathcal{N}(0, \sigma^2 \mathbf{I}_d)$, with $\sigma^2=1$ by default. Propagated features are generated as $\mathbf{Z} = \sum_{j=1}^k \theta_j\bS_{\mathbf{A}}^j \bX$ for various choices of the graph convolution operator $\mathbf{S}_{\mathbf{A}}$. Each coefficient $\theta_j$ is sampled uniformly at random, followed by row-normalization of $\bs\theta = (\theta_1, \ldots, \theta_k)$. To ensure comparability across topologies, we standardize $\mathbf{Z}$ and control for Laplacian energy (a measure of signal smoothness). Let $\varphi^*$ be a fixed-architecture DNN of depth $L_2= 2$ (default) with random parameters, ReLU activations, and residual connections. We define the raw targets as $y_i = \varphi^*(\bZ_{i,\cdot})$, which are then standardized to $\tilde y_i=y_i-\overline y/s_{\bs y}$. The final responses are generated as $Y_i = \tilde y_i + \varepsilon_i, \ \varepsilon_i \sim \mathcal{N}(0, \sigma^2)$.

\begin{figure}[t]
\centering
\includegraphics[width=\linewidth]{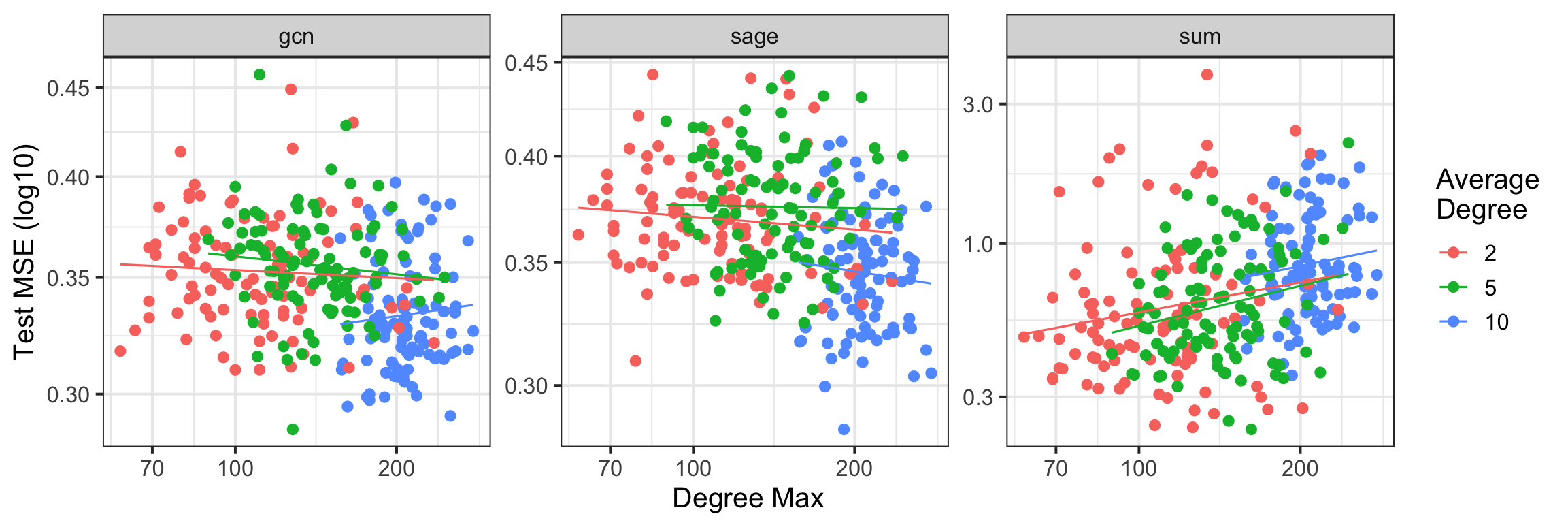}
\caption{Performance of the GCN (with skip connection) as a function of the maximum degree of the graph ($x$-axis), for different convolution types (columns) and values of $\bar \delta$ (colors) on a Barabási–Albert graph. }
\label{fig:effect_degree}
\end{figure}
\begin{figure}[t]
\centering
\includegraphics[width=\textwidth,height=5.2cm]{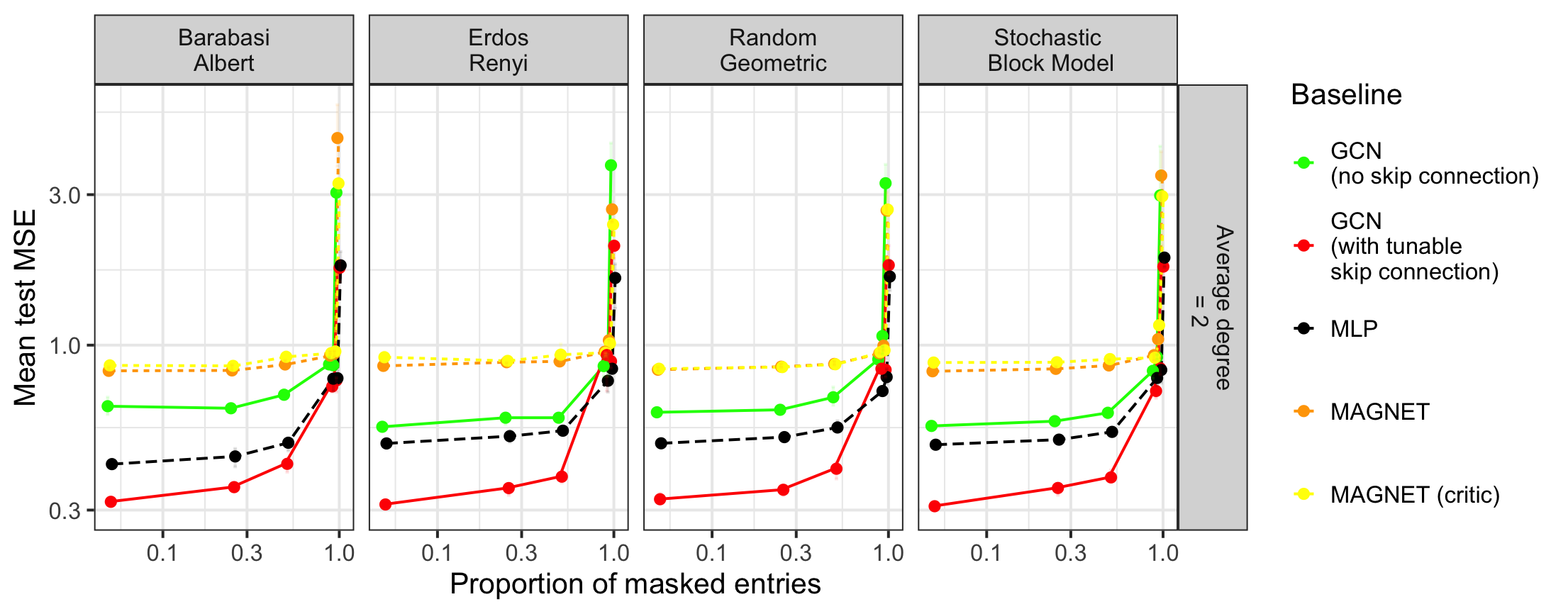}
\caption{Performance of the different baselines averaged over 5 experiments as a function of $\pi$ ($x$-axis) and the graph type (columns). Here, all GCN versions use $L=2$ layers, and the average degree was fixed to $\bar{\delta}=2$ across topologies.}
\label{fig:fig1}
\end{figure}

Figure~\ref{fig:effect_degree} illustrates the impact of the maximum degree in Barab\'asi--Albert graphs across different propagation operators $\mathbf{S}_{\mathbf{A}}$ (with matched data generation): the GCN convolution ($\mathbf{S}_{\mathbf{A}} = \widetilde{\mathbf{D}}^{-1/2} \widetilde{\mathbf{A}} \widetilde{\mathbf{D}}^{-1/2}$), Sage ($\mathbf{S}_{\mathbf{A}} = \widetilde{\mathbf{D}}^{-1} \widetilde{\mathbf{A}}$), and a GINE-style sum operator \citep{xu2019powerful} ($\mathbf{S}_{\mathbf{A}} = \widetilde{\mathbf{A}}$).
The sum convolution is particularly sensitive to the maximal degree --- as predicted by Theorem~\ref{converge-rate}, higher maximal degrees yield considerably worse MSE. This effect is however substantially mitigated for degree-averaging filters. Our discussion of Theorem~\ref{converge-rate} highlights the importance of choosing degree-averaging operators --- particularly to control constants in the rate. This experiment demonstrates that the bound in Theorem~\ref{converge-rate} is accurate for the sum operator but remains conservative for degree-averaging operators.

Figure~\ref{fig:fig1} exhibits the performance of various methods versus label fraction $\pi$ across topologies, using convolution with a fixed average degree $\bar{\delta}=2$. Notably, the performance of the GCN with skipped connections is stable across graph topologies. This confirms the fact that, with the effect of the maximal degree mitigated, as predicted by Theorem~\ref{converge-rate}, the primary driver of the error bounds lies in the size of the receptive field, rather than the connectedness of the graph (or other spectral properties), as is the case in other graph-regularization-based approaches \citep{pmlr-v49-huetter16, tran2025generalized}.

\subsection{Performance on real-world data}
We evaluate the performance of GNNs on two different datasets:
\begin{enumerate}[label=(\roman*)]
\item The California Housing Dataset \cite{pace1997sparse}\footnote{The California Housing Dataset can be found as part of the \href{https://scikit-learn.org/stable/modules/generated/sklearn.datasets.fetch_california_housing.html}{\texttt{sklearn} library}.}, a dataset of 20,640 observations of property attributes from the 1990 U.S. Census, grouped at the block group level (small geographical unit). The goal is to predict the median house value based on 9 attributes (median income, house age, average number of rooms, number of bedrooms per household, as well as size of the household, group population, and latitude and longitude). Data points are embedded within a $k$-NN graph based on their spatial coordinates.
\item The Wikipedia Chameleon dataset\footnote{The Wikipedia Chameleon Dataset can be found at: \url{https://snap.stanford.edu/data/wikipedia-article-networks.html}}, a graph of 2,277 Wikipedia pages about chameleons connected by 31,421 mutual hyperlinks; each node has sparse text-derived features indicating which “informative nouns” appear in the article, and the regression target is the page’s average monthly traffic (Oct 2017–Nov 2018). In contrast to the California Housing dataset, this dataset is much denser and higher dimensional.
\end{enumerate}

We further compare the graph neural networks and MLP baselines to Tikhonov and Laplacian smoothing. The optimal parameters (i.e. the number of GNN convolutions and the depth of the neural network head) were selected based on the performance of each method on held-out nodes in a calibration set. 

\begin{figure}[t]
\centering
\includegraphics[width=\linewidth,height=6cm]{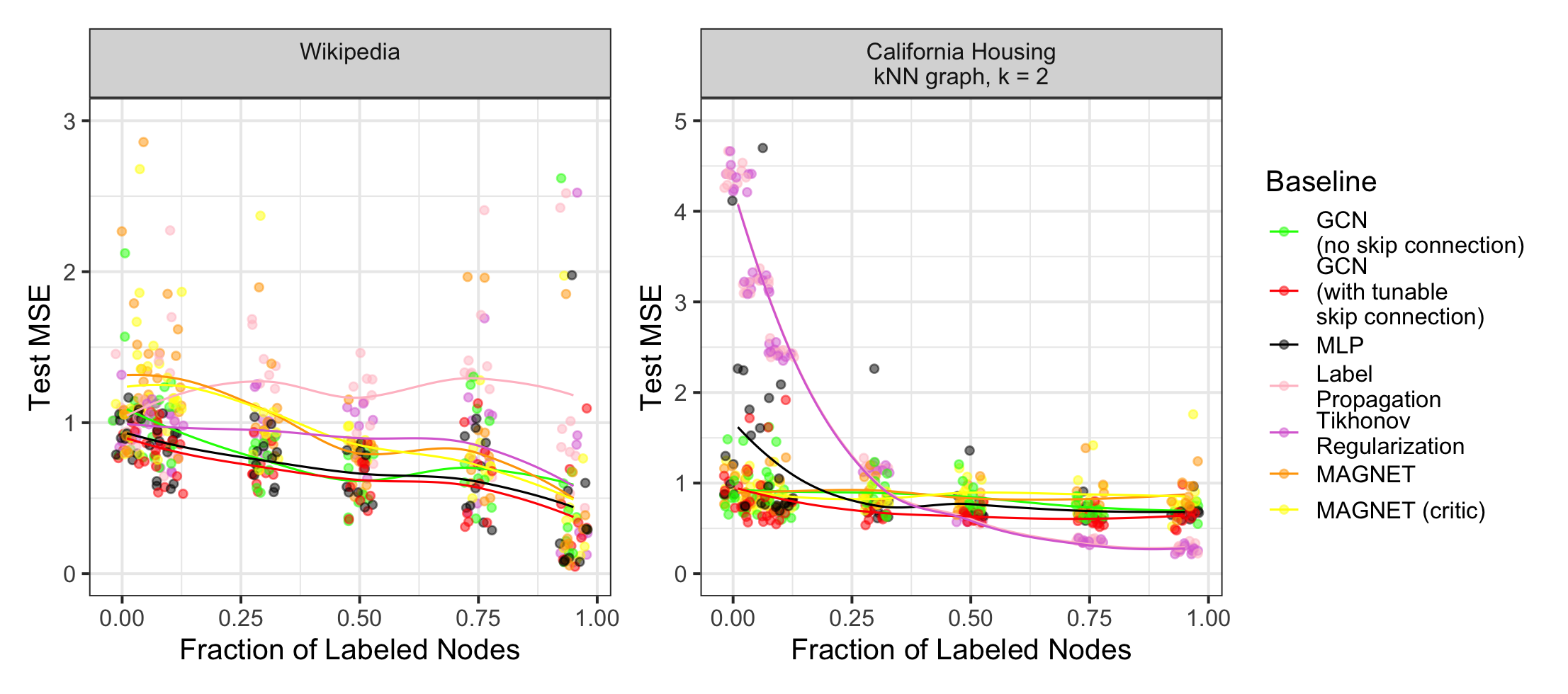}
\caption{Performance of the different baselines as a function of the proportion of nodes used for training for the California Housing and Wikipedia datasets. For Wikipedia, displayed MSE values are capped at 3 to keep the scale readable; some MAGNET and MAGNET-critic replicates exceed this value.}
\label{fig:cal}
\end{figure}

Figure~\ref{fig:cal} reports test performance across architectures. Importantly, this figure measures transductive generalization—error on previously unseen nodes in the same graph—whereas our theory focuses on the inductive risk. Despite this mismatch, the same qualitative picture emerges: the GNN model (with GCN diffusion) performs best overall, with one notable exception on California Housing, where Tikhonov regularization improves substantially as $\pi$ increases and becomes highly competitive. On the less homophilic Wikipedia dataset, neural architectures dominate the classical baselines. Overall, these examples reinforce the practical relevance of our function class: relative to a plain MLP, the GNN class consistently reduces error both in the transductive evaluation shown here and in the inductive regime studied throughout the paper.

\section{Conclusion}\label{conclusion-sec}
In this work, we addressed the problem of semi-supervised regression on graph-structured data. Inspired by the ``aggregate and readout'' mechanism, we introduced a natural statistical model where node responses are generated by a smooth graph-propagation operator followed by a multivariate nonlinear mapping. Under locality conditions on the receptive fields, we derived a general oracle inequality that decomposes the prediction error into optimization, approximation, and stochastic components. This bound explicitly quantifies the influence of critical factors such as the proportion of unmasked nodes and the underlying graph topology. Building upon this result, we further provided an analysis of the stochastic and approximation error, establishing a non-asymptotic convergence rate for the least-squares estimator when the outer function belongs to a composition of Hölder smoothness classes. These theoretical findings help explain the seemingly contradictory phenomenon observed in practice, whereby GNNs excel with limited labels yet may remain vulnerable to graph perturbations, and offer insights into the design of future GNN architectures. 

We believe this work opens several promising avenues for future research. First, while our results provide explicit bounds on the statistical risk of the least-squares estimator, the optimization error $\Delta_n^{\cF}$, which is governed by the training dynamics, remains to be analyzed. In practice, GNNs are trained via stochastic gradient descent (SGD) rather than global risk minimization. While some progress has been made in characterizing the implicit regularization of SGD in standard i.i.d. regression and classification settings \cite{gunasekar2017implicit, soudry2018implicit}, understanding these dynamics in the presence of graph-induced dependencies remains an open challenge. Investigating how the graph structure influences the optimization landscape and whether SGD induces an implicit regularization that controls the effective capacity of the network, thereby keeping $\Delta_n^{\cF}$ small, is a problem of significant value in its own right.

Second, the present analysis relies on the locality parameter $m$, as specified in Assumption~\ref{ass-m}, to quantify graph-induced dependence. While this effectively captures the behavior of message-passing architectures on bounded-degree graphs, it yields conservative bounds for dense graphs or architectures with global readouts. It is, therefore, interesting to explore in future work how spectral properties of the graph, such as the spectral gap or conductance, could be utilized to derive tighter concentration inequalities that do not rely solely on worst-case receptive field sizes. Notably, \cite{kirichenko2017estimating} has preliminarily investigated this direction within a Bayesian approach for graph denoising without node features.

Finally, the current framework assumes the graph structure is fixed and known; however, in many real-world applications, the observed graph may be noisy or incomplete. Extending the risk analysis to settings where the graph topology is learned jointly with the regression function (latent graph learning) or analyzing the minimax rates under adversarial edge perturbations would be a significant step towards understanding and improving the stability of GNNs.
\section*{Acknowledgements} 
C. Donnat acknowledges support by  the National Science Foundation (Award Number 2238616), as well as the resources provided by the University of Chicago’s Research Computing Center. The work of Olga Klopp was funded by CY Initiative (grant “Investissements d’Avenir” ANR-16-IDEX-0008) and Labex MME-DII (ANR11-LBX-0023-01). This work was partially done  while O. Klopp  and C. Donnat were visiting the Simons Institute for the Theory of Computing. J. S.-H. acknowledges support by the ERC grant A2B (grant agreement number 101124751).

\appendix
\section{Related works}\label{app:related_works}
Semi-supervised learning on graphs has a rich and long history in statistics and machine learning. Earlier work focused on spatial regularization, often ignoring the node features. Notably, building on the notion of smooth functionals on graphs, Belkin et al. \cite{Belkin} developed a regularization framework for semi-supervised learning. They proposed to minimize the squared error loss function augmented by a ridge-type graph smoothness penalty and  provide bounds on the empirical and generalization errors. Crucially, these results suggest that performance is driven by the graph's geometry (typically, through the second eigenvalue of the graph Laplacian), rather than its size $n$. Follow-up work considered the extension of this approach leveraging the $\ell_1$-penalty (also referred to as graph trend filtering), highlighting the strong influence of the graph in driving the error rate \cite{pmlr-v49-huetter16}.

Parallel to regularization approaches, label propagation algorithms were introduced to explicitly exploit graph structure. \cite{xiaojin2002learning} and \cite{zhou2003learning} proposed iterative algorithms where node labels are propagated to neighboring nodes based on edge weights. Theoretically, these methods can be viewed as computing a harmonic function on the graph that is constrained by the boundary conditions induced by the labeled data; specifically, the solution minimizes a quadratic energy function induced by the graph Laplacian. While computationally efficient and theoretically grounded in harmonic analysis, standard label propagation is inherently transductive and typically fails to incorporate the node features $X_i$, relying solely on the graph topology and observed labels.

More recently, Graph Convolutional Networks (GCN) \cite{Max} bridged the gap between feature-based learning and graph regularization. Unlike traditional label propagation, which operates solely on graph topology, GCNs integrate node features $X_i$ directly into the propagation mechanism. By approximating spectral graph convolutions to the first order \cite{Hammond2011, defferrard2016}, the GCN layer performs a localized feature averaging that functions as a learnable low-pass filter. This approach resolves a key limitation of traditional label propagation: it allows for inductive learning on unseen nodes and exploits feature correlations. A variety of GNN architectures continue to be proposed \cite{pmlr-v97-wu19e,zhu2020beyond}; they largely share a foundational spirit with GCNs, involving first the aggregation of node information and then its synthesis into an output via a readout step.

Existing theory on GNNs has progressed along several complementary axes. A first line of work studies expressivity, relating message-passing GNNs to the Weisfeiler–Lehman (WL) hierarchy: standard neighborhood-aggregation architectures are at most as powerful as the 1-WL test in distinguishing graph structures, which motivates more expressive higher-order or invariant/equivariant constructions \citep{xu2019powerful,morris2019weisfeiler,maron2019provably}. A second line develops generalization guarantees for node-level prediction in the transductive/semi-supervised regime, using stability, transductive Rademacher complexity, and PAC-Bayesian tools, and makes explicit how quantities such as graph filters, degree/spectrum, or diffusion operators control the generalization gap \citep{verma2019stability, garg2020generalization,esser2021learning,liao2020pac,ju2023generalization}. A third line clarifies the algorithmic role of propagation through spectral/graph-signal and dynamical-systems lenses: GCN-style layers act as low-pass (Laplacian-smoothing) operators, while repeated propagation can provably induce over-smoothing and loss of discriminative power as depth grows \citep{oono2019graph,keriven2022not}. Despite this progress, existing results often analyze either approximation/expressivity or statistical generalization in isolation, and frequently rely on linearized models or specific graph generative assumptions. This leaves open the need for a nonparametric, finite-sample theory that jointly accounts for message-passing approximation error and statistical complexity under partial labeling and graph-induced dependence.

\section{Scope of the work}\label{app:scope}
Table~\ref{tab:architectures} compares existing mainstream architectures with our compositional framework.
\begin{table}[t]
\centering
\renewcommand{\arraystretch}{0.6}
\begin{tabularx}{0.9\textwidth}{@{} l c X @{}} 
\toprule
\textbf{Architecture} & \textbf{Covered?} & \multicolumn{1}{c}{\textbf{Notes / Why (not)}}\\
\midrule
GCN \citep{Max} / SGC & \checkmark & {\fontsize{10.5}{9}\selectfont Linear propagation with a fixed graph operator (e.g., Laplacian) followed by a learned readout.} \\ \addlinespace[1.2em]

Polynomial GNNs & \checkmark & {\fontsize{10.5}{9}\selectfont Propagation is a polynomial in a fixed operator; fits $\sum_{\ell} \gamma_\ell\bT^\ell\bX\mathbf{W}_\ell$ and related forms.}\\
\addlinespace[1.2em]

APPNP / Diffusion & \checkmark & {\fontsize{10.5}{9}\selectfont Personalized PageRank-style diffusion is linear in features (fixed operator), followed by an MLP readout.}\\ \addlinespace[1.2em]

Skip-connected GCN & \checkmark & {\fontsize{10.5}{9}\selectfont Directly covered by weighted sums of multiple propagation depths.} \\ \addlinespace[1.2em]

GraphSAGE (mean) & \textit{partial} & {\fontsize{10.5}{9}\selectfont Mean aggregation is linear if aggregator weights are fixed; feature-dependent gating falls outside the scope.} \\ \addlinespace[1.2em]

GAT (Attention) & \xmark & {\fontsize{10.5}{9}\selectfont Attention makes the propagation operator data-dependent, violating the fixed-operator assumption.} \\ \addlinespace[1.2em]

MPNN variants & \xmark & {\fontsize{10.5}{9}\selectfont Message-passing functions depend on $(h_i, h_j, e_{ij})$; propagation is no longer a fixed linear operator.} \\
\bottomrule
\end{tabularx}
\caption{Coverage of common GNN architectures by our theoretical framework.}
\label{tab:architectures}
\end{table}
\section{Key supporting lemmas and proofs}
This section introduces the key technical results to establish the main theorem.

Recall from the main text that for any matrix-valued function evaluation $f(\bs{x})\in \mathbb{R}^{n \times d}$, both $f_i(\bs{x})$ and $(f(\bs{x}))_i$ denote the $i$-th row of $f(\bs{x}).$ For a function $f: \mathbb{R}^{n \times d} \rightarrow \mathbb{R}^{n}$, a matrix $\bX\in\R^{n \times d},$ and a binary vector $\bs{\omega}=(\omega_1,\ldots,\omega_n)^{\top}\in\{0,1\}^n$, we define the (semi)-norms
\begin{equation*}
\|f\|_{n,{\bs\omega}}^2=\frac{1}{n} \sum_{i=1}^{n} \omega_i\big(f_i(\bX)\big)^2\quad\mbox{and}\quad\|f\|_{n}^2=\frac{1}{n} \sum_{i=1}^{n}\big(f_i(\bX)\big)^2.
\end{equation*}

To set the stage for the established results, we first provide some necessary preliminary inequalities. 
\begin{lemma}[Bernstein's inequality, see e.g.\ Corollary~2.11 of \cite{boucheron2013concentration}]\label{bernstein}
Let $U_1, \ldots, U_n$ be independent random variables with $\mathbb{E}[U_i] = 0$ and $|U_i| \leq M$ almost surely for all $i$. Then, for any $t > 0$,
\[
\mathbb{P}\left(\left|\sum_{i=1}^n U_i \right|\geq t \right) \leq 2 \exp\left( -\frac{t^2/2}{\sum_{i=1}^n\Var(U_i)+ M t / 3} \right).
\]    
\end{lemma}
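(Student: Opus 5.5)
The statement immediately above is Bernstein's inequality, quoted from Corollary~2.11 of \cite{boucheron2013concentration}. I would follow the standard Chernoff (exponential moment) route and treat the two tails separately. For $\lambda\in(0,3/M)$ I would first bound the moment generating function of each summand. Since $\E[U_i]=0$ and $|U_i|\le M$, expanding the exponential gives
\[
\E e^{\lambda U_i}\le 1+\sum_{k\ge2}\frac{\lambda^k \E|U_i|^k}{k!}\le 1+\frac{\lambda^2\Var(U_i)}{2}\sum_{k\ge2}\Big(\frac{\lambda M}{3}\Big)^{k-2},
\]
using $|U_i|^k\le M^{k-2}U_i^2$ and $k!\ge 2\cdot 3^{k-2}$. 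Summing the geometric series and applying $1+x\le e^x$ yields
\[
\E e^{\lambda U_i}\le \exp\!\Big(\frac{\lambda^2\Var(U_i)}{2(1-\lambda M/3)}\Big).
\]

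By independence the bounds multiply. Writing $v=\sum_i\Var(U_i)$, this gives $\log\E e^{\lambda\sum_i U_i}\le \lambda^2 v/\big(2(1-\lambda M/3)\big)$. Markov's inequality then gives
\[
\P\Big(\sum_i U_i\ge t\Big)\le\exp\!\Big(-\lambda t+\frac{\lambda^2 v}{2(1-\lambda M/3)}\Big).
\]

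Rather than optimizing exactly, I would plug in $\lambda=t/(v+Mt/3)$, which lies in $(0,3/M)$. With this choice $1-\lambda M/3=v/(v+Mt/3)$, so the exponent becomes
\[
-\frac{t^2}{v+Mt/3}+\frac{t^2}{2(v+Mt/3)}=-\frac{t^2/2}{v+Mt/3}.
\]
That is exactly the claimed one-sided bound.

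Applying the same argument to $-U_1,\dots,-U_n$, which satisfy the same hypotheses with the same variances, bounds the lower tail. A union bound over the two tails produces the factor $2$.

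The only delicate step is the moment bound. One needs the comparison $k!\ge 2\cdot3^{k-2}$ to get the constant $M/3$, and it must be used only on the $k\ge2$ terms, where $\E|U_i|^k\le M^{k-2}\Var(U_i)$ holds because $\E U_i^2=\Var(U_i)$ for centered variables. Once this is in place, the remaining steps are mechanical.
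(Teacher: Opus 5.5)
Your argument is correct and is the standard Chernoff proof. The paper does not prove this lemma itself (it only cites Boucheron et al.), but its Lemma~\ref{lem:bounded-mgf} (built on $j!\ge 2\cdot 3^{j-2}$) and the choice of $\lambda$ in Step~4.2 of the proof of Theorem~\ref{main} are exactly your moment bound and your substitution $\lambda=t/(v+Mt/3)$. One small correction: this $\lambda$ lies strictly below $3/M$ only when $v>0$, but if $v=0$ then $\sum_i U_i=0$ almost surely and the bound holds trivially.
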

The next result is a variant of Talagrand's concentration inequality \cite{Talagrand_1996}. It follows from inversion of the tail bound in Theorem 3.3.16 of \cite{Ginenickl}.
\begin{Theorem}\label{Talagrand} Let $(S, \mathcal{S})$ be a measurable space. Let $X_1,\ldots,X_n$ be independent ${S}$-valued random variables and let $\mathcal{G}$ be a countable set of functions $f=(f_1,...,f_n):S\rightarrow [-K,K]^n$ such that $\mathbb{E}[f_k(X_k)]=0$ for all $f\in \mathcal{G}$ and $k=1,...,n$. Set $$\cZ=\sup_{f \in \mathcal{G}}  \, \sum_{k=1}^n f_k(X_k)$$
and define the variance proxy
$$V_n=2K\mathbb{E}[\cZ] + \sup_{f \in \mathcal{G}}  \, \sum_{k=1}^n \mathbb{E} \left [\big(f_k(X_k)\big)^2\right].$$
Then, for all $t \geq 0$,
\begin{equation*}
\mathbb{P} \left (\cZ - \mathbb{E}[\cZ]\geq t \right ) \leq \exp \Big(\frac{-t^2}{4V_n + (9/2)Kt}\Big).
\end{equation*}
\end{Theorem}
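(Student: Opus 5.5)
The plan is to obtain the displayed tail bound by rescaling to unit-bounded functions and then inverting the deviation form of Talagrand's inequality for suprema of sums of independent, non-identically distributed, centered bounded processes. This is the Klein--Rio/Bousquet-type version recorded as Theorem~3.3.16 in \cite{Ginenickl}. I take that result in the following deviation form: for countable classes of centered functions with values in $[-1,1]$, with $\sigma^2=\sup_f\sum_k\E[f_k(X_k)^2]$ and $v=2\E[\cZ]+\sigma^2$, there is an absolute constant $c$ such that for all $x\ge 0$,
\[
\P\bigl(\cZ\ge \E[\cZ]+\sqrt{2vx}+cx\bigr)\le e^{-x}.
\]
The argument goes through for any fixed $c\le 9/4$. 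The Giné--Nickl statement is sharper (roughly $c=1/3$ for Bousquet, or a small constant for Klein--Rio), which leaves room. Countability of $\mathcal G$ is exactly what makes $\cZ$ measurable, so no separability argument is needed.

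\textbf{Step 1 (rescaling).} Set $\tilde f=f/K$, so each $\tilde f_k$ takes values in $[-1,1]$ and is still centered. Then $\tilde\cZ=\cZ/K$, $\tilde\sigma^2=\sigma^2/K^2$, and
\[
\tilde v=2\E[\cZ]/K+\sigma^2/K^2=V_n/K^2 .
\]
Applying the deviation bound to the rescaled class and multiplying through by $K$ gives
\[
\P\bigl(\cZ-\E[\cZ]\ge \sqrt{2V_n x}+cKx\bigr)\le e^{-x}\qquad\text{for all } x\ge0 .
\]

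\textbf{Step 2 (inversion).} Fix $t>0$ and let $x^*\ge 0$ be the unique solution of $t=\sqrt{2V_nx^*}+cKx^*$; the right side is increasing in $x^*$, so $x^*$ is well defined. From $(a+b)^2\le 2a^2+2b^2$ we get $t^2\le 4V_nx^*+2c^2K^2x^{*2}$. Since $cKx^*\le t$, the second term is at most $2cKt\,x^*$, and therefore
\[
x^*\ge \frac{t^2}{4V_n+2cKt}.
\]
Taking $c\le 9/4$ yields $2cKt\le \tfrac92 Kt$, and hence
\[
\P(\cZ-\E[\cZ]\ge t)\le e^{-x^*}\le \exp\!\bigl(-t^2/(4V_n+\tfrac92Kt)\bigr).
\]
The case $t=0$ is trivial.

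\textbf{Main obstacle.} The only delicate point is matching exactly the form and constants of Theorem~3.3.16 in \cite{Ginenickl}. That theorem may be stated directly as a tail bound of the form $\exp(-t^2/(2v+ct))$ rather than in $x$-form, and it may normalize the variance term or the $\E[\cZ]$ factor differently (e.g.\ $v=2\E[\cZ]+\sigma^2$ versus $\sigma^2+2\E[\cZ]$ with $\sup|f|\le1$). In that case I would skip the inversion and only perform the rescaling. I would then check that the resulting denominator, after multiplying through by $K^2$, is dominated by $4V_n+\tfrac92Kt$, which holds because the constants in the theorem are smaller.
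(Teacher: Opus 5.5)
Your proposal is correct and follows the same route as the paper, which justifies this result in one line as an inversion of the tail bound in Theorem~3.3.16 of \cite{Ginenickl}. Your explicit rescaling by $K$ (so that $V_n/K^2$ becomes the unit-scale variance proxy) and your inversion step $t^2\le x^*(4V_n+2cKt)$ spell out what the paper leaves implicit, and they work for any deviation constant $c\le 9/4$, which covers the cited result.
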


We now establish the following two lemmas, which bridge the semi-supervised empirical loss and the overall nodal prediction performance. 
\begin{lemma}\label{lm:isometry} 
Let $f^*$ be a function and $\mathcal{G}$ a countable class of functions with $\log|\mathcal{G}|\geq1$, all mapping $\R^{n\times d}$ to $\R^n$ and having the form $f(\bs{x}) = \big(f_1(\bs{x}), \ldots, f_n(\bs{x})\big)^{\top}$ with $\|f_i\|_{\infty} \leq F$ for all $i$ and some constant $F\geq1$. With probability at least $1-2/|\cG|$, for all $f\in\cG$ such that $\pi n\left\|f-f^*\right\|_{n}^{2}>3600F^2\log |\cG|$, we have 
\begin{equation}\label{st:isometry}
\left \Vert f-f^*\right \Vert^{2}_{n,{\bs\omega}}\geq\frac{\pi\left \Vert f-f^*\right \Vert^{2}_{n}}{2}.
\end{equation} 
\end{lemma}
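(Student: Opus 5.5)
The plan is to condition on the feature matrix $\bX$ and use only the randomness of the mask $\bs\omega$. Since $\bs\omega$ is independent of $\bX$, conditionally on $\bX$ the indicators $\omega_1,\ldots,\omega_n$ are still i.i.d.\ $\operatorname{Bernoulli}(\pi)$. For fixed $\bX$, the condition $\pi n\|f-f^*\|_n^2>3600F^2\log|\cG|$ is then deterministic, so it simply selects a fixed subclass of $\cG$. It therefore suffices to prove the statement conditionally on $\bX$ with probability at least $1-2/|\cG|$, and then integrate over $\bX$.

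\textbf{Step 1: a single fixed $f$.} Fix $\bX$ and $f\in\cG$, and set $a_i=(f_i(\bX)-f_i^*(\bX))^2\in[0,4F^2]$ and $S=\sum_i a_i=n\|f-f^*\|_n^2$. Define $U_i=(\omega_i-\pi)a_i$. These are independent and centered, with $|U_i|\le 4F^2$ and $\Var(U_i)=\pi(1-\pi)a_i^2\le 4F^2\pi a_i$, so $\sum_i\Var(U_i)\le 4F^2\pi S$.

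The failure of \eqref{st:isometry} is the event $\sum_i U_i<-\pi S/2$. Applying Lemma~\ref{bernstein} with $t=\pi S/2$ gives
\[
\P\Big(\textstyle\sum_i U_i\le -\pi S/2\Big)\le 2\exp\Big(-\frac{\pi^2S^2/8}{4F^2\pi S+\tfrac{2}{3}F^2\pi S}\Big)=2\exp\Big(-\frac{3\pi S}{112F^2}\Big).
\]

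\textbf{Step 2: use the threshold.} If $\pi S>3600F^2\log|\cG|$, the exponent in Step~1 exceeds $96\log|\cG|$. Hence this probability is at most $2|\cG|^{-96}\le 2|\cG|^{-2}$.

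\textbf{Step 3: union bound.} Take a union bound over the (conditionally fixed) set of $f\in\cG$ that meet the threshold. There are at most $|\cG|$ of them, so the total failure probability is at most $2/|\cG|$. This holds for every $\bX$, hence also unconditionally.

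The only delicate point is that the selection condition depends on $\bX$. It causes no difficulty because $\bX$ is conditioned upon and the concentration is taken purely over $\bs\omega$, which is independent of $\bX$; no dependency-graph argument is needed here. The union bound also implicitly assumes $\cG$ is finite. If $\cG$ were countably infinite, $\log|\cG|=\infty$, the threshold condition could never be met, and the statement would be vacuous.
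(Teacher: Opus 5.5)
Your proof is correct, and it takes a genuinely different, more elementary route than the paper.

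The paper also treats $\bs x$ as fixed and uses only the randomness in $\bs\omega$, but it then runs a peeling argument. It slices $\cG$ into shells $S_\ell=\{f:\alpha^{\ell-1}\nu<n\|f-f^*\|_n^2\le\alpha^\ell\nu\}$ with $\alpha=6/5$. For each shell it bounds $\E[Z_\ell]$, where $Z_\ell=\sup_{f\in S_\ell}\bigl(\pi\|f-f^*\|_n^2-\|f-f^*\|_{n,\bs\omega}^2\bigr)$, via Bernstein plus a union bound over $\cG$. It then applies Talagrand's inequality to concentrate $Z_\ell$ around this mean, and finally sums the shell failure probabilities $\exp(-5.6\alpha^\ell\log|\cG|)$ over $\ell$.

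The shells are needed there only because a supremum is being concentrated, so the deviation level has to be uniform over each set. In your argument the threshold $\pi S/2$ is deterministic for each fixed $f$ (given $\bX$), so Bernstein's bound is automatically relative to $S$. No suprema, shells or Talagrand step are required.

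Since the paper's bound on $\E[Z_\ell]$ already pays for a union bound over all of $\cG$, the extra machinery gains nothing here. Your route is shorter and yields failure probability $2|\cG|^{-95}$ rather than about $2/|\cG|$. It also shows the constant $3600$ could be lowered to $224/3\approx 75$. The peeling--Talagrand template would only pay off if $\E[Z_\ell]$ were controlled by chaining or entropy integrals rather than by cardinality (for example for infinite classes), which is not the situation in this lemma.

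Your remark that a countably infinite $\cG$ makes the statement vacuous is also correct.
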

\begin{proof}
We apply the peeling argument. Let $\nu =3600F^2\log|\cG|/\pi$ and set $\alpha=6/5$. The event 
$$\cB=\left\{\exists\,f\in\cF\,\text{such that}\,\ n\left\|f-f^*\right\|_{n}^{2}>\nu,\left\Vert f-f^*\right \Vert^{2}_{n,{\bs\omega}}<\frac{\pi\left \Vert f-f^*\right\Vert^{2}_{n}}{2}\right\}$$ is the complement of the event that we wish to analyze. For $\ell\in\N$, we define the sets
$$S_{\ell}=\left \{f\,:\,\alpha^{\ell-1}\nu< n\left \Vert f-f^*\right \Vert^{2}_{n}\leq \alpha^{\ell}\nu\right \}$$ and the corresponding events
$$\mathcal{B}_{\ell}=\left \{\exists\,f\in S_{\ell}\ \text{such that}\ \pi\left \Vert f-f^*\right\Vert^{2}_{n}-\left\Vert f-f^*\right \Vert^{2}_{n,{\bs\omega}}>\frac{\pi\nu\alpha^{\ell-1}}{2n}\right \}.$$
In fact, we can restrict the consideration to a finite $\ell$ since $\|f-f^*\|_n^2\leq4F^2.$ If the event $\mathcal{B}$ holds for some $f$, then $f$ belongs to some $S_{\ell}$ and $\mathcal{B} \subset \bigcup_{\ell=1}^{\infty} \mathcal{B}_{\ell}$. Lemma \ref{lm:sup} implies that $$\mathbb P\left (\mathcal{B}_{\ell}\right )\leq \exp \left (-5.6\alpha^{\ell}\log|\cG|\right ).$$ Applying the union bound, we obtain that
\begin{align*}
\mathbb P\left (\mathcal{B}\right )&\leq\sum_{\ell=1}^{\infty} \P\left (\mathcal{B}_{\ell}\right )\\
&\leq \sum_{\ell=1}^{\infty}\exp(-5.6\alpha^{\ell}\log|\cG|)\\
&\leq\sum_{\ell=1}^{\infty}\exp\left (-5.6\ell\log|\cG|\log\alpha\right)\\
&\leq \dfrac{\exp\left (-5.6\log|\cG|\log\alpha\right )}{1-\exp\left (-5.6\log|\cG|\log\alpha\right )}
\\&\leq \dfrac{\exp\left (-\log|\cG|\right )}{1-\exp\left (-\log|\cG|\right)}.
\end{align*} 
Given that $\log|\mathcal{G}| \geq 1$, the proof is complete.
\end{proof}

\begin{lemma}\label{lm:sup}
Suppose $f^*=(f_1^*,\ldots,f_n^*)^{\top}$ with $\|f_i^*\|_{\infty}\leq F$. Let $\mathcal{G}$ be a countable class of functions $f: \R^{n\times d} \to \R^{n}$ of the form $f(\bs{x}) = \big(f_1(\bs{x}), \ldots, f_n(\bs{x})\big)^{\top}$, where each component function $f_i:\R^{n\times d} \to \R$ satisfies $\|f_i\|_{\infty} \leq F$ for some constant $F > 0$. Let $\alpha=6/5$ and let $\nu=3600F^2\log|\cG|/\pi$ with $\log|\cG|\geq 1$. For $\ell\in\mathbb{N}$, define
$$S_{\ell}=\left \{f\in\cG\,:\,\alpha^{\ell-1}\nu<n\left \Vert f-f^{*}\right \Vert^{2}_{n}\leq \alpha^{\ell}\nu\right \},$$ and 
$$Z_{\ell}=\underset{f\in S_{\ell}}{\sup}\left(\pi\left \Vert f-f^*\right \Vert^{2}_{n}-\left\Vert f-f^*\right \Vert^{2}_{n,{\bs\omega}}\right).$$ 
Then, for each $\ell\in\mathbb{N},$ 
\begin{equation*}
\mathbb{P}\left(Z_{\ell}>\frac{\pi\nu\alpha^{\ell-1}}{2n}\right )\leq \exp \left (-5.6\alpha^{\ell}\log|\cG|\right ).
	\end{equation*}
\end{lemma}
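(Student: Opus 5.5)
The plan is to condition on the feature matrix $\bX$ and use only the randomness of the Bernoulli mask ${\bs\omega}$, which is independent of $\bX$. Conditionally on $\bX$, write $a_i(f)=\big(f_i(\bX)-f_i^*(\bX)\big)^2\in[0,4F^2]$. Then
\[
\pi\|f-f^*\|_n^2-\|f-f^*\|_{n,{\bs\omega}}^2=\frac1n\sum_{i=1}^n(\pi-\omega_i)\,a_i(f).
\]
For each fixed $f$ this is a sum of independent, centered, bounded variables. Since $S_\ell\subseteq\cG$ and $\cG$ is countable, I will bound the tail for a single $f\in S_\ell$ with Bernstein's inequality (Lemma~\ref{bernstein}). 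I will then take a union bound over at most $|\cG|$ functions (if $\cG$ is infinite the claimed bound is trivial). The Dependence in the $f_i(\bX)$ through shared coordinates plays no role, because we condition on $\bX$. Talagrand's inequality (Theorem~\ref{Talagrand}) would be the fallback if a sharper, cardinality-free statement were needed, but the union bound should suffice here.

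For the Bernstein step, set $U_i=(\pi-\omega_i)a_i(f)/n$, so that $|U_i|\le 4F^2/n=:M$. The variance satisfies
\[
\sum_i\Var(U_i)\le \frac{\pi}{n^2}\sum_i a_i(f)^2\le \frac{4F^2\pi}{n^2}\sum_i a_i(f)=\frac{4F^2\pi}{n}\,\|f-f^*\|_n^2\le \frac{4F^2\pi\,\alpha^\ell\nu}{n^2}.
\]
The last inequality uses the defining upper bound $n\|f-f^*\|_n^2\le\alpha^\ell\nu$ of $S_\ell$. This self-bounding of the variance by the mean is the key point: it makes the variance proportional to the same scale $\alpha^\ell\nu$ as the threshold.

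Next take $t=\pi\nu\alpha^{\ell-1}/(2n)$. The exponent in the one-sided Bernstein bound is
\[
\frac{t^2/2}{\sum_i\Var(U_i)+Mt/3}
\ge\frac{\pi^2\nu^2\alpha^{2\ell-2}/(8n^2)}{(4F^2\pi\nu\alpha^{\ell-1}/n^2)(\alpha+1/6)}
=\frac{\pi\nu\alpha^{\ell-1}}{32F^2(\alpha+1/6)}.
\]
With $\alpha=6/5$ and $\nu=3600F^2\log|\cG|/\pi$, this equals $c\,\alpha^{\ell-1}\log|\cG|$ with $c=3600/(32\cdot 41/30)\approx 82$.

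For the union bound, multiply by $|\cG|=e^{\log|\cG|}$ to get the conditional probability bound
\[
\exp\big(-(c\,\alpha^{\ell-1}-1)\log|\cG|\big).
\]
It remains to check that $c\,\alpha^{\ell-1}-1\ge 5.6\,\alpha^\ell$ for every $\ell\ge1$. This reduces to $(c/\alpha-5.6)\alpha^\ell\ge1$, which holds with ample room since $c/\alpha\approx 68$. If a two-sided version is used, the extra factor $2$ is absorbed by the same slack, using $\log|\cG|\ge1$. Finally, the bound does not depend on $\bX$, so integrating over $\bX$ yields the unconditional statement.

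The main obstacle is bookkeeping rather than conceptual: the variance must be controlled by the upper end $\alpha^\ell\nu$ of the peeling shell, while the threshold uses the lower end $\alpha^{\ell-1}\nu$. One must then verify that the constant $3600$ leaves enough room to absorb both the ratio $\alpha$ and the $\log|\cG|$ cost of the union bound, giving exactly the rate $5.6\,\alpha^\ell\log|\cG|$.
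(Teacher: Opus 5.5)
Your proof is correct, and it takes a more direct route than the paper.

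The paper works in two stages. First it uses Bernstein's inequality and a union bound over $\cG$, integrated over the tail, to get $\E[nZ_\ell]\le 5\pi\nu\alpha^{\ell}/24$. It then applies Talagrand's inequality (Theorem~\ref{Talagrand}) to the supremum $nZ_\ell$, with $K=4F^2$, variance proxy $V_n\le 6F^2\pi\nu\alpha^\ell$, and deviation $t=5\pi\nu\alpha^\ell/24$. For $\alpha=6/5$, that $t$ is exactly the gap between the threshold $\pi\nu\alpha^{\ell-1}/2$ and the mean bound. The resulting exponent $25\cdot 3600/(24\cdot 666)\approx 5.63$ is where the constant $5.6$ comes from.

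You skip the mean-plus-concentration structure. You apply Bernstein directly at the threshold for each $f\in S_\ell$, with the same self-bounding variance estimate (upper end $\alpha^\ell\nu$ of the shell against the threshold at the lower end), and then take a union bound. Your arithmetic checks out. The per-function exponent is $\pi\nu\alpha^{\ell-1}/\bigl(32F^2(\alpha+1/6)\bigr)\approx 82.3\,\alpha^{\ell-1}\log|\cG|\approx 68.6\,\alpha^{\ell}\log|\cG|$. After paying $\log|\cG|$, plus $\log 2$ for the two-sided form (absorbed via $\log|\cG|\ge 1$), it still far exceeds $5.6\,\alpha^\ell\log|\cG|$ for every $\ell\ge 1$.

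The paper's expectation step already pays the full union-bound cost $|\cG|$, so the Talagrand step buys nothing here. Your route is shorter, uses only Bernstein, and gives a much larger constant in the exponent. The paper's two-stage template would only pay off if $\E[Z_\ell]$ were bounded by a cardinality-free argument such as chaining, which is the use case you correctly flag for Talagrand.

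Conditioning on $\bX$ and using only the randomness of ${\bs\omega}$ is consistent with how the lemma is used. In Step (III) of the proof of Theorem~\ref{main}, the isometry argument is carried out conditionally on the design, and the paper's own proof of this lemma also treats $\bs{x}$ as fixed.
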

\begin{proof}
We first provide an upper bound on $\mathbb{E}[Z_{\ell}]$ and then show that $Z_{\ell}$ concentrates around its expectation. Let $\eta_{f}=\sum_{i=1}^n(\pi-\omega_{i})\left(f_{i}\left({\bs{x}}\right)-f^*_i\left({\bs{x}}\right)\right)^{2}.$ By definition of $Z_{\ell}$, we have
$$Z_{\ell}=\underset{f\in S_{\ell}}{\sup}\dfrac{1}{n}\sum_{i=1}^n(\pi-\omega_{i})\left(f_{i}\left({\bs{x}}\right)-f^{*}_i\left({\bs{x}}\right)\right)^{2}=\underset{f\in S_{\ell}}{\sup}\dfrac{\eta_f}{n}.$$
Observe that
$$\big|(\pi-\omega_i)\big(f_{i}\left({\bs{x}}\right)-f^{*}_i\left({\bs{x}}\right)\big)^2\big|\leq 4F^2,$$
and for $f\in S_{\ell}$,
\begin{align}
\sum_{i=1}^n\Var \left [(\omega_{i}-\pi)\left(f_{i}\left({\bs{x}}\right)-f^{*}_i\left({\bs{x}}\right)\right)^{2}\right ]&\leq 4F^{2}\pi(1-\pi)\sum_{i=1}^n\left(f_{i}\left({\bs{x}}\right)-f^{*}_i\left({\bs{x}}\right)\right)^{2}\nonumber\\
&\leq 4F^{2}\pi\nu\alpha^{\ell}.\label{bound_variance_GineN}
\end{align}
Applying Bernstein's inequality (Lemma~\ref{bernstein}), we derive that for all $f\in S_{\ell}$,
$$\P\left(\eta_{f}\geq t\right )\leq \exp\left(\dfrac{-t^{2}/2}{4F^{2}\pi\nu\alpha^{\ell}+4F^{2}t/3}\right).$$
For any ${\cT}\geq \pi\nu\alpha^{\ell}/5$, the union bound gives	
\begin{align*}
\E[nZ_{\ell}] & \leq\int_{0}^{\infty} \P(nZ_{\ell} \geq t) \, d t\\
&\leq {\cT}+\int_{{\cT}}^{\infty} \P(nZ_{\ell}\geq t) \, dt\\
& \leq {\cT}+|\cG|\int_{{\cT}}^{\infty} \exp\left(\dfrac{-t^{2}/2}{4F^{2}\pi\nu\alpha^{\ell}+4F^{2}t/3}\right)dt\\
&\leq {\cT}+|\cG|\int_{{\cT}}^{\infty} \exp\left(\dfrac{-t}{44F^{2}}\right)dt\\
&={\cT}+44|\cG|F^{2}e^{-{\cT}/(44F^{2})}.
\end{align*}
Taking ${\cT}=\pi\nu \alpha^{\ell}/5$ and using the facts that $\nu=3600F^{2}\log|\cG|/\pi$ and $\log|\cG|\geq1$, we can deduce that
\begin{equation}\label{expect-bound}
\E[Z_{\ell}] \leq \dfrac{5\pi\nu \alpha^{\ell}}{24n},\quad\mbox{for any\ }\ell\in\mathbb{N}.    
\end{equation}

Next, we show that $Z_{\ell}$ concentrates around its expectation by applying Talagrand's concentration inequality (Theorem~\ref{Talagrand}). For each $\ell$, we apply Theorem~\ref{Talagrand} with $$\cZ=\sup_{f\in S_{\ell}}\eta_f=nZ_{\ell},$$ which implies $K=4F^{2}$. Combining \eqref{bound_variance_GineN} and \eqref{expect-bound}, we also know that
$$V_{n}=8F^{2}\E\cro{\cZ}+4F^{2}\pi\nu\alpha^{\ell}\leq\frac{5F^{2}\pi\nu \alpha^{\ell}}{3}+4F^{2}\pi\nu\alpha^{\ell}\leq 6F^{2}\pi\nu\alpha^{\ell}.$$
Hence, applying Theorem~\ref{Talagrand} with $t=\dfrac{5\pi\nu \alpha^{\ell}}{24}$, we obtain
\begin{align*}
\mathbb{P}\left (nZ_{\ell} > \frac{\pi\nu \alpha^{\ell-1}}{2}\right)&\leq\P\left ( nZ_{\ell}-n\E[Z_{\ell}]>\dfrac{5\pi\nu \alpha^{\ell}}{24}\right)\\
&\leq \exp\bigg(-\frac{25\pi\nu\alpha^{\ell}}{24(24^2+90)F^2}\bigg)\\
&\leq \exp \left (-5.6\alpha^{\ell}\log|\cG|\right ),
\end{align*}
which completes the proof.    
\end{proof}

\section{Proof of main theorems}\label{main-proof}
\subsection{Auxiliary results}
To prepare for the proof of Theorem~\ref{main}, we first state two preliminary lemmas.
\begin{lemma}\label{general-holder}
Let $U_1, \ldots, U_r$ be nonnegative random variables. Then,
\[
\mathbb{E}\left[ \prod_{k=1}^r U_k \right] \leq \prod_{k=1}^r \left( \mathbb{E}[U_k^{r}] \right)^{1/r}.
\]
\end{lemma}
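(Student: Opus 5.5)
This is the generalized Hölder inequality for $r$ factors. The plan is to prove it by induction on $r$, using the ordinary two-factor Hölder inequality with conjugate exponents $p = r$ and $q = r/(r-1)$. If some $\mathbb{E}[U_k^r]$ is infinite, the right-hand side is $+\infty$, so the bound is trivial unless the other moments vanish. If some $\mathbb{E}[U_k^r] = 0$, then $U_k = 0$ almost surely, so both sides vanish. We may therefore assume all $r$-th moments are finite and positive. The base case $r=1$ is an equality.

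For the induction step with $r \ge 2$, we write $\prod_{k=1}^r U_k = U_r \cdot W$, where $W = \prod_{k=1}^{r-1} U_k$. Hölder's inequality with exponents $r$ and $r/(r-1)$ gives
\[
\mathbb{E}[U_r W] \le \bigl(\mathbb{E}[U_r^r]\bigr)^{1/r}\bigl(\mathbb{E}[W^{r/(r-1)}]\bigr)^{(r-1)/r}.
\]

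Next we set $V_k = U_k^{r/(r-1)}$ for $k \le r-1$, so that $W^{r/(r-1)} = \prod_{k=1}^{r-1} V_k$. Applying the induction hypothesis with $r-1$ factors to the $V_k$ gives
\[
\mathbb{E}\Bigl[\prod_{k=1}^{r-1} V_k\Bigr] \le \prod_{k=1}^{r-1}\bigl(\mathbb{E}[V_k^{r-1}]\bigr)^{1/(r-1)} = \prod_{k=1}^{r-1}\bigl(\mathbb{E}[U_k^{r}]\bigr)^{1/(r-1)}.
\]
Raising this to the power $(r-1)/r$ yields $\prod_{k=1}^{r-1}(\mathbb{E}[U_k^r])^{1/r}$. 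Combining it with the first display completes the induction.

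An alternative route is to normalize $\tilde U_k = U_k/(\mathbb{E}[U_k^r])^{1/r}$. The AM–GM inequality gives $\prod_k \tilde U_k \le \frac1r\sum_k \tilde U_k^r$, and taking expectations bounds $\mathbb{E}\bigl[\prod_k \tilde U_k\bigr]$ by $1$. I would present whichever is shorter, probably the AM–GM argument.

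No step is a real obstacle. The only care needed is the exponent bookkeeping, and the degenerate zero or infinite moments, which are handled up front.
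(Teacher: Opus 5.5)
Your proposal is correct and follows the paper's approach: the paper invokes the multi-factor H\"older inequality with $p_1=\cdots=p_r=r$ (so that $\sum_k 1/p_k=1$). Your induction from two-factor H\"older, or the normalized AM--GM argument, is a self-contained proof of exactly that equal-exponent case, and you handle the zero and infinite moment cases correctly.
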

\begin{proof}
The statement is a consequence of the extension of H\"older's inequality to several functions. To verify this, one should choose the indices in H\"older's inequality to be $p_1=\ldots=p_r=r$ which then gives $1/p_1+\ldots +1/p_r=1.$
\end{proof}
\begin{lemma}\label{lem:bounded-mgf}
Let $V_1,\ldots,V_k$ be independent, real-valued random variables with $\mathbb E[V_i]=0$ and $|V_i|\leq M$ almost surely for each $i$. Then, for $0\leq \lambda<3/M$,
\[
\mathbb E\!\left[\exp\!\left(\lambda\sum_{i=1}^k V_i\right)\right]
\ \leq\ \exp\!\left\{\frac{\lambda^2}{2(1-\lambda M/3)}\sum_{i=1}^k \mathbb E[V_i^2]\right\}.
\]
\end{lemma}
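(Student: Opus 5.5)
The plan is to bound the moment generating function of each centred summand by a power series, then use independence to multiply the bounds.

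\emph{Single summand.} Fix $i$ and $0\le\lambda<3/M$, and expand
\[
\mathbb E[e^{\lambda V_i}]=1+\lambda\mathbb E[V_i]+\sum_{p\ge2}\frac{\lambda^p\mathbb E[V_i^p]}{p!}.
\]
The linear term vanishes because $\mathbb E[V_i]=0$. Since $|V_i|\le M$, we have $|\mathbb E[V_i^p]|\le M^{p-2}\mathbb E[V_i^2]$ for every $p\ge2$. Substituting this into the series gives
\[
\mathbb E[e^{\lambda V_i}]\le 1+\frac{\lambda^2\mathbb E[V_i^2]}{2}\sum_{p\ge2}\frac{2}{p!}(\lambda M)^{p-2}.
\]

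\emph{The combinatorial inequality.} Next we use $p!\ge 2\cdot 3^{p-2}$ for $p\ge2$. This holds with equality at $p=2$ and $p=3$. For $p\ge 4$ it follows by induction, since each step from $p-1$ to $p$ multiplies the left side by $p\ge3$ and the right side by $3$. Hence $2/p!\le 3^{-(p-2)}$, and the remaining series is dominated by a geometric series:
\[
\sum_{p\ge2}\left(\frac{\lambda M}{3}\right)^{p-2}=\frac{1}{1-\lambda M/3}.
\]
This series converges precisely because $\lambda<3/M$. We therefore obtain
\[
\mathbb E[e^{\lambda V_i}]\le 1+\frac{\lambda^2\mathbb E[V_i^2]}{2(1-\lambda M/3)}\le\exp\!\left(\frac{\lambda^2\mathbb E[V_i^2]}{2(1-\lambda M/3)}\right),
\]
where the last step uses $1+x\le e^x$.

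\emph{Combining.} By independence, $\mathbb E[e^{\lambda\sum_i V_i}]=\prod_i\mathbb E[e^{\lambda V_i}]$. Multiplying the single-summand bounds turns the product of exponentials into the exponential of the sum of the variance terms, which is exactly the claimed inequality. The case $\lambda=0$ is trivial.

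The only step needing care is the bound on the tail of the series, namely $p!\ge2\cdot3^{p-2}$. Everything else is routine, including interchanging expectation and the series, which is justified by dominated convergence since $|V_i|\le M$ makes $e^{\lambda|V_i|}$ integrable.
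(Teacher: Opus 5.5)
Your proof is correct and follows essentially the same route as the paper: both use $p!\ge 2\cdot 3^{p-2}$ to dominate the exponential series by the geometric series $\sum_{p\ge2}(\lambda M/3)^{p-2}$, then use $1+u\le e^u$ and independence. The only cosmetic difference is the order of operations: the paper proves the pointwise bound $e^{t}\le 1+t+t^2/[2(1-|t|/3)]$ for $|t|<3$ and then takes expectations, whereas you take expectations first and bound the moments via $|\mathbb E[V_i^p]|\le M^{p-2}\,\mathbb E[V_i^2]$.
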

\begin{proof}
Using $j!\geq 2\cdot 3^{\,j-2}$ for $j\geq 2$, we have for $|t|<3$,
$e^t\leq 1+t+t^2\big/ [2(1-|t|/3)]$. This implies that for any $\lambda$ satisfying $0\leq \lambda<3/M$,
\[
e^{\lambda V_i}\leq 1+\lambda V_i+\frac{\lambda^2 V_i^2}{2(1-\lambda M/3)}.
\]
Taking expectations on both sides and using $\mathbb E[V_i]=0$ gives
\begin{align*}
\mathbb E\!\left[e^{\lambda V_i}\right]
\leq1+\frac{\lambda^2}{2(1-\lambda M/3)}\mathbb E[V_i^2]\leq \exp\!\left\{\frac{\lambda^2}{2(1-\lambda M/3)}\mathbb E[V_i^2]\right\},
\end{align*}
where the last inequality uses $1+u\leq e^u$ for all $u\in\mathbb R$. By independence, this bound extends to the sum $\sum_{i=1}^kV_i$.
\end{proof}
\subsection{Proof of Theorem~\ref{main}}\label{thm-1-proof}
For any estimator $\widetilde f\in\cF$, define 
\begin{equation}\label{define-delta-tilde}
\Delta_{n}^{\cF}\left(\widetilde f,f^*\big|\bX\right)=\E_{{\bs\varepsilon},{\bs\omega}}\bigg[\frac{1}{n} \sum_{i\in \Omega}\Big(Y_{i}-\widetilde{f}_i(\mathbf X)\Big)^{2}-\inf_{f \in \mathcal{F}} \frac{1}{n} \sum_{i\in\Omega}\Big(Y_{i}-{f}_i(\mathbf X)\Big)^{2}\ \big|\ \bX\bigg],   
\end{equation}
and consequently, $$\Delta_{n}^{\cF}\left(\widetilde f,f^*\right)=\E_{\bX}\cro{\Delta_{n}^{\cF}\left(\widetilde f,f^*\big|\bX\right)}.$$ 
\begin{proof}[Proof of Theorem~\ref{main}]
We may restrict to the case $\log \mathcal{N}_{\delta} \leq n$. Since $\cR(\widetilde{f},f^*)\leq 4F^{2}$, the upper bound holds trivially when $\log \mathcal{N}_{\delta} \geq n$. To verify that the lower bound is also valid in this case, let 
$$\bar{f}\in\argmin_{f\in \mathcal{F}}\frac{1}{n}\sum_{i=1}^{n}\omega_i\big(Y_i-{f}_i(\bX)\big)^2$$ be an empirical risk minimizer over $\cF$. Observe that
\begin{align}
&\E_{{\bs{\varepsilon}},{\bs\omega}}\cro{\|\widetilde{f}-f^*\|^2_{n,{\bs\omega}}}-\E_{{\bs{\varepsilon}},{\bs\omega}}\cro{\|\bar{f}-f^*\|^2_{n,{\bs\omega}}}\nonumber\\
&=\Delta_{n}^{\cF}\left(\widetilde f,f^*\big|\bX\right)+\mathbb E_{{\bs{\varepsilon}},{\bs\omega}}\left[\frac{2}{n}\sum_{i=1}^{n} \varepsilon_{i}\omega_{i} \widetilde{f}_i\left(\bX\right)\right]-\mathbb E_{{\bs{\varepsilon}},{\bs\omega}}\left[\frac{2}{n} \sum_{i=1}^{n} \varepsilon_{i}\omega_{i} \bar{f}_i\left(\bX\right)\right],\label{eq:trivial_lb}
\end{align}
which implies that almost surely
\begin{align*}
\Delta_{n}^{\cF}\left(\widetilde f,f^*\big|\bX\right)&\leq\E_{{\bs{\varepsilon}},{\bs\omega}}\cro{\|\widetilde{f}-f^*\|^2_{n,{\bs\omega}}}+\left|\mathbb E_{{\bs{\varepsilon}},{\bs\omega}}\left[\frac{2}{n}\sum_{i=1}^{n} \varepsilon_{i}\omega_{i} \widetilde{f}_i\left(\bX\right)\right]\right|+\left|\mathbb E_{{\bs{\varepsilon}},{\bs\omega}}\left[\frac{2}{n} \sum_{i=1}^{n} \varepsilon_{i}\omega_{i} \bar{f}_i\left(\bX\right)\right]\right|\\
&\leq4F^2+4F\E_{\bs\varepsilon}\left(\frac{1}{n}\sum_{i=1}^n|\varepsilon_i|\right)\\
&\leq8F^2.
\end{align*}
Thus $\Delta_{n}^{\cF}(\widetilde f,f^*)\leq8F^2$. Since $m,F\geq1$, the lower bound also holds for $\log \mathcal{N}_{\delta} \geq n$.

In the following, we consider the case when $1\leq\log \mathcal{N}_{\delta} \leq n$. The proof proceeds in five steps that are denoted by (I)-(V).
\begin{itemize}
\item Step (I): Conditionally on $\bX$, for any estimator $\widetilde f\in\cF$, we bound 
$$\left| \mathbb{E}_{{\bs{\varepsilon}},{\bs\omega}} \left[ \frac{2}{n} \sum_{i=1}^{n} \varepsilon_{i} \omega_{i}\widetilde f_i(\mathbf{X})\right] \right|\leq6\sqrt{\pi}\delta+4\sqrt{\frac{ \E_{{\bs{\varepsilon}},{\bs\omega}}\cro{\|\widetilde{f}-f^*\|^2_{n,{\bs\omega}}}\log \mathcal{N}_{\delta}}{n}}.$$
\item Step (II): Conditionally on $\bX$, we show for any $\varepsilon\in(0,1]$, 
$$\E_{{\bs{\varepsilon}},{\bs\omega}}\cro{\|\widetilde{f}-f^*\|^2_{n,{\bs\omega}}}\hspace{-2pt}\leq \hspace{-2pt}(1+\varepsilon)\hspace{-3pt} \cro{\inf_{f \in \mathcal{F}} \pi \|f - f^{*}\|_{n}^{2} + 6\delta \sqrt{\pi} +\hspace{-2pt} \frac{4(1+\varepsilon)\log \mathcal{N}_{\delta}}{n \varepsilon}\hspace{-2pt} + \Delta_{n}^{\cF}\left(\widetilde f,f^*\big|\bX\right)}.$$ 
\item Step (III): Conditionally on $\bX$, we relate $\E_{{\bs{\varepsilon}},{\bs\omega}}[\|\widetilde{f}-f^*\|^2_{n,{\bs\omega}}]$ to $\E_{{\bs{\varepsilon}},{\bs\omega}}[\|\widetilde{f}-f^*\|^2_{n}]$. Via isometry, we prove that for any $\varepsilon>0$, 
\begin{align*}
&\E_{{\bs{\varepsilon}},{\bs\omega}}\cro{\|\widetilde{f}-f^*\|^2_{n}}\\&\leq2(1+\varepsilon)\hspace{-3pt}\cro{\inf_{f\in \mathcal{F}}\|f-f^*\|_n^2+\hspace{-3pt}\frac{1+\varepsilon}{\varepsilon}\frac{1804F^2\log \mathcal{N}_{\delta}}{n\pi}\hspace{-2pt} + \hspace{-2pt}\dfrac{4F^{2}}{\mathcal{N}_{\delta}}\hspace{-1pt}+\hspace{-2pt}\frac{6\delta}{\sqrt{\pi}}\hspace{-1pt}+\hspace{-2pt}12F\delta+\hspace{-2pt}\frac{\Delta_{n}^{\cF}\left(\widetilde f,f^*\big|\bX\right)}{\pi}}.   \end{align*}
\item Step (IV): We connect $\mathcal{R}(\widetilde{f}, f^*)$ and $\E_{{\bs{\varepsilon}},{\bs\omega},\bX}[\|\widetilde{f}-f^*\|^2_{n}]$ via
\begin{align*}
(1-\varepsilon)\E_{{\bs{\varepsilon}},{\bs\omega},\bX}&\cro{\|\widetilde{f}-f^*\|^2_{n}}-\frac{13m^2F^2\log \mathcal{N}_{\delta}}{n\varepsilon}-16\delta F\leq\mathcal{R}\left(\widetilde{f}, f^*\right)\\ &\leq (1+\varepsilon) \left(\E_{{\bs{\varepsilon}},{\bs\omega},\bX}\cro{\|\widetilde{f}-f^*\|^2_{n}}+\frac{10(1+\varepsilon)m^2 F^{2}}{\varepsilon} \frac{\log \mathcal{N}_{\delta}}{n}+12\delta F\right).
\end{align*}
\item Step (V): For any $\varepsilon\in(0,1]$, we prove the lower bound $$\E_{{\bs{\varepsilon}},{\bs\omega}}\cro{\|\widetilde{f}-f^*\|^2_{n}}\geq (1-\varepsilon) \left( \Delta_{n}^{\cF}\left(\widetilde f,f^*\big|\bX\right) - \frac{4\log \mathcal{N}_{\delta}}{n \varepsilon} - 12\delta \sqrt{\pi}\right).$$
\end{itemize}
We then obtain the asserted lower bound of the theorem by taking $\mathbb{E}_{\bX}$ of Step (V) and combining this with Step (IV), and the corresponding upper bound by taking $\mathbb{E}_{\bX}$ of Step (III) and combining this with Step (IV).

\textbf{Step (I): Bounding the expectation of noise terms.}
For any estimator $\widetilde f\in\cF$ whose $i$-th component is denoted by $\widetilde f_{i}$, we first show that
\begin{equation}\label{eq:I}
\left| \mathbb{E}_{{\bs{\varepsilon}},{\bs\omega}} \left[ \frac{2}{n} \sum_{i=1}^{n} \varepsilon_{i} \omega_{i}\widetilde f_{i}(\mathbf{X})\right] \right|\leq6\sqrt{\pi}\delta+4\sqrt{\frac{ \E_{{\bs{\varepsilon}},{\bs\omega}}\cro{\|\widetilde{f}-f^*\|^2_{n,{\bs\omega}}}\log \mathcal{N}_{\delta}}{n}}.
\end{equation}
		
\textit{ Step 1.1: Using a covering argument.}
By the definition of $\cF_{\delta}$ as a $\delta$-covering of $\mathcal{F}$, for any (random) estimator $\widetilde f\in\cF$, there exists a (random) function $f'\in \cF_{\delta}$ such that for every index $i \in \{1,\ldots,n\}$,
\begin{equation}\label{cover-delta}
\|\widetilde f_i - f'_i\|_{\infty}\leq \delta.
\end{equation}
This implies that 
$$\left| \mathbb{E}_{{\bs{\varepsilon}},{\bs\omega}}\cro{\sum_{i=1}^{n} \varepsilon_{i} \omega_{i} \left(\widetilde f_i(\mathbf{X})- f'_i(\mathbf{X})\right)}\right| \leq \delta \mathbb{E}_{{\bs{\varepsilon}},{\bs\omega}}\cro{\sum_{i=1}^{n} \omega_i |\varepsilon_i|}.$$
Since $\mathbb{E}_{{\bs{\varepsilon}},{\bs\omega}}(\omega_{i}|\varepsilon_i|) \leq\pi$, for each $i$, we obtain
\begin{equation}\label{cover-diff}
\left| \mathbb{E}_{{\bs{\varepsilon}},{\bs\omega}}\cro{\sum_{i=1}^{n} \varepsilon_{i} \omega_{i} \left(\widetilde f_i(\mathbf{X})- f'_i(\mathbf{X})\right)}\right|\leq n \pi\delta.    
\end{equation}

\textit{Step 1.2: Using  Gaussian concentration.} 
Recall that ${\bs{\omega}}=(\omega_1,\ldots,\omega_n)^{\top}$. For any fixed $f\in\cF_{\delta}$, we define the random variable  
$$\xi_{f}= \frac{\sum_{i=1}^{n} \varepsilon_i \omega_i \cro{f_i(\mathbf{X}) - f_i^*(\mathbf{X})}}{\sqrt{n}\|f^*-f\|_{n,{\bs\omega}}}.$$
Conditionally on $\bs{\omega}$ and with $\mathbf{X}$ fixed, since the $\varepsilon_i \sim \mathcal{N}(0,1)$ are mutually independent, it follows that $\xi_{f}\sim \mathcal{N}(0,1)$. Applying Lemma C.1 in \cite{Schmidt-Hieber} to the second inequality, we can derive that 
\begin{equation}\label{bound-2-by-cover}
\E_{{\bs{\varepsilon}},{\bs\omega}}\cro{\xi^{2}_{f'}}=\E_{\bs{\omega}}\cro{\E_{\bs{\varepsilon}}\left(\xi^{2}_{f'}\mid\bs{\omega}\right)
}\leq\E_{\bs\omega}\cro{\E_{\bs{\varepsilon}}\left(\max_{f\in\cF_{\delta}}\xi^{2}_f\mid\bs\omega\right)}\leq3 \log \mathcal{N}_{\delta}+1.    
\end{equation}
Using the Cauchy-Schwarz inequality and \eqref{bound-2-by-cover} yields
\begin{align}
\mathbb{E}_{{\bs{\varepsilon}},{\bs\omega}}\left[\|f'-f^*\|_{n,{\bs\omega}}|\xi_{f'}| \right]&\leq\sqrt{\mathbb{E}_{{\bs{\varepsilon}},{\bs\omega}}\left[\|f'-f^*\|_{n,{\bs\omega}}^2\right]} \sqrt{\E_{{\bs{\varepsilon}},{\bs\omega}}\cro{\xi^{2}_{f'}}}\nonumber\\
&\leq\sqrt{\mathbb{E}_{{\bs{\varepsilon}},{\bs\omega}}\left[\|f'-f^*\|_{n,{\bs\omega}}^2\right]} \sqrt{3\log \mathcal{N}_{\delta} + 1}\nonumber\\
&\leq \left (\sqrt{\mathbb{E}_{{\bs{\varepsilon}},{\bs\omega}}\left[\|\widetilde f-f^*\|_{n,{\bs\omega}}^2\right]}+\sqrt{\pi}\delta\right ) \sqrt{3\log \mathcal{N}_{\delta}+1}.\label{step1-b1}
\end{align}
Observe that since $f^*$ is fixed, then for each $i=1,\ldots,n,$
\begin{align}\label{exp-zero}
\mathbb{E}_{{\bs{\varepsilon}},{\bs\omega}}\left[\varepsilon_{i} \omega_{i} f^{*}_i(\mathbf{X})\right]=\pi\mathbb{E}_{{\bs\varepsilon}}\left[\varepsilon_{i}f^{*}_i(\mathbf{X})\right]=0.
\end{align}
Thus, following from \eqref{cover-diff} and \eqref{exp-zero}, we have for any estimator $\widetilde f$,
\begin{align}
\left|\mathbb{E}_{{\bs{\varepsilon}},{\bs\omega}}\left[ \frac{2}{n} \sum_{i=1}^{n}\varepsilon_{i} \omega_{i}\widetilde f_i(\mathbf{X})\right] \right|
&=\left|\mathbb{E}_{{\bs{\varepsilon}},{\bs\omega}}\left[ \frac{2}{n} \sum_{i=1}^{n} \varepsilon_{i} \omega_{i}\cro{\widetilde f_i(\mathbf{X})-f_i^*(\mathbf{X})\big)}\right] \right|\nonumber\\
&\leq2\pi \delta +\left| \mathbb{E}_{{\bs{\varepsilon}},{\bs\omega}}\left[ \frac{2}{n} \sum_{i=1}^{n} \varepsilon_{i} \omega_{i}\cro{f'_i(\mathbf{X})-f_i^*(\mathbf{X})}\right] \right|\nonumber\\
&\leq2\pi \delta +\frac{2}{\sqrt{n}}\mathbb{E}_{{\bs{\varepsilon}},{\bs\omega}}\left[\|f'-f^*\|_{n,{\bs\omega}}|\xi_{f'}| \right].\label{step1-b2}  \end{align}
Plugging \eqref{step1-b1} into \eqref{step1-b2} yields
\begin{align*}
\left| \mathbb{E}_{{\bs{\varepsilon}},{\bs\omega}}\left[ \frac{2}{n} \sum_{i=1}^{n} \varepsilon_{i} \omega_{i}\widetilde f_i(\mathbf{X})\right] \right|&\leq2\pi\delta+2\left (\sqrt{\mathbb{E}_{{\bs{\varepsilon}},{\bs\omega}}\left[\|\widetilde f-f^*\|_{n,{\bs\omega}}^2\right]}+\sqrt{\pi}\delta\right ) \sqrt{\frac{3\log \mathcal{N}_{\delta}+1}{n}}\\
&\leq2\pi\delta+4\sqrt{\frac{ \mathbb{E}_{{\bs{\varepsilon}},{\bs\omega}}\left[\|\widetilde f-f^*\|_{n,{\bs\omega}}^2\right]\log \mathcal{N}_{\delta}}{n}}+4\sqrt{\pi}\delta\\
&\leq6\sqrt{\pi}\delta+4\sqrt{\frac{ \mathbb{E}_{{\bs{\varepsilon}},{\bs\omega}}\left[\|\widetilde f-f^*\|_{n,{\bs\omega}}^2\right]\log \mathcal{N}_{\delta}}{n}},    
\end{align*}
where the second inequality follows from $1 \leq \log \mathcal{N}_{\delta} \leq n$, and the last from $0<\pi\leq1$. This completes Step (I).

\textbf{ Step (II): Bounding the term $\mathbb{E}_{{\bs{\varepsilon}},{\bs\omega}}[\|\widetilde f-f^*\|_{n,{\bs\omega}}^2]$.}
The goal of this step is to show that for any (random) estimator $\widetilde{f} \in \mathcal{F}$ and any $\varepsilon\in(0,1]$, 
$$\mathbb{E}_{{\bs{\varepsilon}},{\bs\omega}}\left[\|\widetilde f-f^*\|_{n,{\bs\omega}}^2\right] \leq (1+\varepsilon) \cro{\inf_{f \in \mathcal{F}} \pi\|f - f^{*}\|_{n}^{2} + 6\delta \sqrt{\pi} + \frac{4(1+\varepsilon)\log \mathcal{N}_{\delta}}{n \varepsilon} + \Delta_{n}^{\cF}\left(\widetilde f,f^*\big|\bX\right)}.$$

\textit{Step 2.1: Decomposing the empirical risk difference.}
By the definition of ${\Delta}_{n}^{\mathcal{F}}(\widetilde{f}, f^*|\bX)$ in \eqref{define-delta-tilde}, for any deterministic function $f \in \mathcal{F}$, we have
\begin{align*}
\E_{{\bs{\varepsilon}},{\bs\omega}}\cro{\frac{1}{n} \sum_{i=1}^{n}\omega_{i}\big(Y_{i}-\widetilde{f}_i(\mathbf X)\big)^{2}}&\leq\E_{{\bs{\varepsilon}},{\bs\omega}}\cro{\frac{1}{n} \sum_{i=1}^{n}\omega_{i}\big(Y_{i}-f_i(\mathbf X)\big)^{2}}+\Delta_{n}^{\cF}\left(\widetilde f,f^*\big|\bX\right) 
\end{align*}
and therefore
\begin{align}
&\mathbb{E}_{{\bs{\varepsilon}},{\bs\omega}}\left[\|\widetilde f-f^*\|_{n,{\bs\omega}}^2\right]\nonumber\\
&=\mathbb{E}_{{\bs{\varepsilon}},{\bs\omega}}\cro{\frac{1}{n}\sum_{i=1}^n\omega_i\big(\widetilde f_i(\bX)-f^*_i(\bX)\big)^2}\nonumber\\
&=\mathbb{E}_{{\bs{\varepsilon}},{\bs\omega}}\cro{\frac{1}{n}\sum_{i=1}^n\omega_i\big(\widetilde f_i(\bX)-Y_i+\varepsilon_i\big)^2}\nonumber\\
&=\mathbb{E}_{{\bs{\varepsilon}},{\bs\omega}}\cro{\frac{1}{n}\sum_{i=1}^n\omega_i\left(\big(\widetilde f_i(\bX)-Y_i\big)^2+2\varepsilon_i\big(\widetilde f_i(\bX)-Y_i\big)+\varepsilon_i^2\right)}\nonumber\\
&\leq\mathbb{E}_{{\bs{\varepsilon}},{\bs\omega}}\cro{\frac{1}{n}\sum_{i=1}^n\omega_i\left(\big(f_i(\bX)-Y_i\big)^2+2\varepsilon_i\big(\widetilde f_i(\bX)-Y_i\big)+\varepsilon_i^2\right)}+\Delta_{n}^{\cF}\left(\widetilde f,f^*\big|\bX\right)\nonumber\\
&\leq\mathbb{E}_{{\bs{\varepsilon}},{\bs\omega}}\cro{\frac{1}{n}\sum_{i=1}^n\omega_i\left(\left(f_i(\mathbf X)-f_i^*(\mathbf X)-\varepsilon_i\right)^2+2\varepsilon_i\widetilde{f}_i(\mathbf X)-\varepsilon_i^2\right)}+\Delta_{n}^{\cF}\left(\widetilde f,f^*\big|\bX\right)\nonumber\\
&\leq\pi\|f - f^*\|_{n}^{2}+\E_{{\bs{\varepsilon}},{\bs\omega}}\left[\frac{2}{n} \sum_{i=1}^{n} \varepsilon_{i}\omega_{i} \widetilde{f}_i(\mathbf X)\right]+\Delta_{n}^{\cF}\left(\widetilde f,f^*\big|\bX\right).\label{step-21-b}
\end{align}

\textit{Step 2.2: Applying noise expectation bound.} 
For any real numbers $a,b\geq0$ and any parameter $\varepsilon>0$,
$$2 \sqrt{ab} \leq \frac{\varepsilon}{1+\varepsilon} a + \frac{1+\varepsilon}{\varepsilon} b,$$
which implies that
\begin{align*}
4\sqrt{\frac{\mathbb{E}_{{\bs{\varepsilon}},{\bs\omega}}\left[\|\widetilde f-f^*\|_{n,{\bs\omega}}^2\right]\log \mathcal{N}_{\delta}}{n}}&\leq \frac{\varepsilon}{1+\varepsilon} \mathbb{E}_{{\bs{\varepsilon}},{\bs\omega}}\left[\|\widetilde f-f^*\|_{n,{\bs\omega}}^2\right] + \frac{1+\varepsilon}{\varepsilon} \frac{4\log \mathcal{N}_{\delta}}{n}. \end{align*}
Using the result \eqref{eq:I} from Step (I), we derive for any $\varepsilon>0$,
$$\left|\mathbb{E}_{{\bs\omega},{\bs\varepsilon}} \left[ \frac{2}{n} \sum_{i=1}^{n}\varepsilon_i\omega_i \widetilde f_{i}(\mathbf X)\right] \right|\leq\frac{\varepsilon}{1+\varepsilon} \mathbb{E}_{{\bs{\varepsilon}},{\bs\omega}}\left[\|\widetilde f-f^*\|_{n,{\bs\omega}}^2\right]+\frac{1+\varepsilon}{\varepsilon} \frac{4\log \mathcal{N}_{\delta}}{n}+ 6\sqrt{\pi}\delta.$$
It therefore follows from \eqref{step-21-b} that for any $\varepsilon>0,$
\begin{align*}
&\mathbb{E}_{{\bs{\varepsilon}},{\bs\omega}}\left[\|\widetilde f-f^*\|_{n,{\bs\omega}}^2\right]\leq (1+\varepsilon) \left[ \inf_{f\in \mathcal{F}}\pi\|f-f^*\|_n^2+ \frac{1+\varepsilon}{\varepsilon}\frac{4\log \mathcal{N}_{\delta}}{n} + 6\sqrt{\pi}\delta+\Delta_{n}^{\cF}\left(\widetilde f,f^*\big|\bX\right)\right].
\end{align*}

\textbf{Step (III): Bounding the estimation risk via isometry.} In this step, we relate $\mathbb{E}_{{\bs{\varepsilon}},{\bs\omega}}[\|\widetilde f-f^*\|_{n,{\bs\omega}}^2]$ to $\mathbb{E}_{{\bs{\varepsilon}},{\bs\omega}}[\|\widetilde f-f^*\|_{n}^2]$ and show for any $\varepsilon>0,$
\begin{align*}
&\mathbb{E}_{{\bs{\varepsilon}},{\bs\omega}}\cro{\|\widetilde f-f^*\|_{n}^2}\\
&\leq2(1+\varepsilon)\cro{\inf_{f\in \mathcal{F}}\|f-f^*\|_n^2+ 1804\frac{(1+\varepsilon)F^2\log \mathcal{N}_{\delta}}{\varepsilon n\pi} + \dfrac{4F^{2}}{\mathcal{N}_{\delta}}+\frac{6\delta}{\sqrt{\pi}}+12F\delta+\frac{\Delta_{n}^{\cF}\left(\widetilde f,f^*\big|\bX\right)}{\pi}}.    
\end{align*}

Define $\cC_1$ as the event that
$$\pi n\|\widetilde{f}-f^*\|_n^2 >3600F^2\log\mathcal{N}_\delta+6F\pi n\delta.$$ Observe that
\begin{align}
\bE_{\bs\varepsilon,\bs\omega}\cro{\|\widetilde f-f^*\|^{2}_{n}}=\bE_{\bs\varepsilon,\bs\omega}\left [\|\widetilde f-f^*\|^{2}_{n}\1(\mathcal{C}_1)\right]+\bE_{\bs\varepsilon,\bs\omega}\left [\|\widetilde f-f^*\|^{2}_{n}\1\left(\mathcal{C}_1^{c}\right)\right]\label{n-risk-b}    
\end{align}
and
\begin{equation}\label{step3-c-1-comple}
\bE_{\bs\varepsilon,\bs\omega}\left [\|\widetilde f-f^*\|^{2}_{n}\1\left(\mathcal{C}_1^{c}\right)\right]\leq\frac{3600F^2\log\mathcal{N}_\delta}{\pi n}+6F\delta.    
\end{equation}
To bound $\bE_{\bs\varepsilon,\bs\omega}\left [\|\widetilde f-f^*\|^{2}_{n}\1(\mathcal{C}_1)\right]$, we apply Lemma~\ref{lm:isometry}. Recall that $f'\in\cF_{\delta}$ is the (random) function closest to $\widetilde{f}$, meaning that for all $i=1,\ldots,n$,
$$\|f'_i - \widetilde f_{i}\|_{\infty} \leq \delta.$$
This yields
\begin{align}
\|\widetilde f-f^*\|^{2}_{n}&=\frac{1}{n}\sum_{i=1}^n\big(\widetilde f_i(\bX)-f^*_i(\bX)\big)^2\nonumber\\
&=\frac{1}{n}\sum_{i=1}^n\big(\widetilde f_i(\bX)-f'_i(\bX)+f'_i(\bX)-f^*_i(\bX)\big)^2\nonumber\\
&\leq\| f'-f^*\|^{2}_{n}+6F\delta.\label{cover-delta-b}
\end{align}
Let $\cC_2$ denote the event that
$$\pi n\|f'-f^*\|_n^2 >3600F^2\log\mathcal{N}_\delta.$$ By \eqref{cover-delta-b}, event $\mathcal{C}_1$ implies $\mathcal{C}_2$. Combining this with \eqref{cover-delta-b}, we deduce
\begin{align}
\bE_{\bs\varepsilon,\bs\omega}\cro{\|\widetilde f-f^*\|^{2}_{n}\1(\mathcal{C}_1)}&\leq \bE_{\bs\varepsilon,\bs\omega}\cro{\| f'-f^*\|^{2}_{n}\1(\mathcal{C}_1)}+6F\delta\nonumber\\
&\leq\bE_{\bs\varepsilon,\bs\omega}\cro{\| f'-f^*\|^{2}_{n}\1(\mathcal{C}_2)}+6F\delta.\label{step-3-transfer-delta}
\end{align}
Next, we bound $\bE_{\bs\varepsilon,\bs\omega}\cro{\| f'-f^*\|^{2}_{n}\1(\mathcal{C}_2)}$. Defining $\mathcal{E}$ as the event that
$$\|f'-f^*\|^{2}_{n,{\bs\omega}}\geq \frac{\pi\|f'-f^*\|^{2}_{n}}{2},$$ we can rewrite
\begin{align*}
\bE_{\bs\varepsilon,\bs\omega}\left [\|f'-f^*\|^{2}_{n}\1(\mathcal{C}_2)\right]
&=\bE_{\bs\varepsilon,\bs\omega}\left [\|f'-f^*\|^{2}_{n}\1(\cE\cap\mathcal{C}_2)\right]+\bE_{\bs\varepsilon,\bs\omega}\left [\|f'-f^*\|^{2}_{n}\1(\cE^{c}\cap\mathcal{C}_2)\right].
\end{align*}
Applying Lemma \ref{lm:isometry} with $\cG=\cF_{\delta}$, we know that $$\P(\cE^c\cap
\cC_2)\leq\frac{2}{\mathcal{N}_{\delta}},$$ which further implies that
\begin{align}
\bE_{\bs\varepsilon,\bs\omega}\left [\|f'-f^*\|^{2}_{n}\1(\mathcal{C}_2)\right]
&\leq \dfrac{2}{\pi}\,\bE_{\bs\varepsilon,\bs\omega}\cro{\|f'-f^*\|^{2}_{n,\bs\omega}} + \dfrac{8F^{2}}{\mathcal{N}_{\delta}}.\label{step3-b}  
\end{align}
Combining \eqref{step-3-transfer-delta} and \eqref{step3-b}, we obtain
\begin{equation}\label{step3-c-1}
\bE_{\bs\varepsilon,\bs\omega}\cro{\|\widetilde f-f^*\|^{2}_{n}\1(\mathcal{C}_1)}\leq\dfrac{2}{\pi}\,\bE_{\bs\varepsilon,\bs\omega}\cro{\|f'-f^*\|^{2}_{n,\bs\omega}} + \dfrac{8F^{2}}{\mathcal{N}_{\delta}}+6F\delta. \end{equation}
Substituting equations \eqref{step3-c-1-comple} and \eqref{step3-c-1} into \eqref{n-risk-b} gives
\begin{equation}\label{step-final-b}
\bE_{\bs\varepsilon,\bs\omega}\cro{\|\widetilde f-f^*\|^{2}_{n}}\leq\dfrac{2}{\pi}\bE_{\bs\varepsilon,\bs\omega}\cro{\|f'-f^*\|^{2}_{n,\bs\omega}}+\dfrac{8F^{2}}{\mathcal{N}_{\delta}}+\frac{3600F^2\log\mathcal{N}_\delta}{\pi n}+12F\delta.
\end{equation}
Moreover, observe that with $F\geq1,$
\begin{align}\label{delta-conne}
&\Delta_{n}^{\cF}\left(f',f^*\big|\bX\right)-\Delta_{n}^{\cF}\left(\widetilde f,f^*\big|\bX\right)\nonumber\\
&=\E_{\bs\varepsilon,\bs\omega}\cro{\frac{1}{n}\sum_{i=1}^n\omega_i\Big(\big(Y_i-f'_i(\bX)\big)^2-\big(Y_i-\widetilde f_i(\bX)\big)^2\Big)}\nonumber\\
&=\E_{\bs\varepsilon,\bs\omega}\cro{\frac{1}{n}\sum_{i=1}^n\omega_i\Big(2Y_i\big(\widetilde f_i(\bX)-f'_i(\bX)\big)+\big(f'_i(\bX)\big)^2-\big(\widetilde f_i(\bX)\big)^2\Big)}\nonumber\\
&=\E_{\bs\varepsilon,\bs\omega}\cro{\frac{1}{n}\sum_{i=1}^n\omega_i\Big(2\big(f_i^*(\bX)+\varepsilon_i\big)\big(\widetilde f_i(\bX)-f'_i(\bX)\big)+\big(f'_i(\bX)\big)^2-\big(\widetilde f_i(\bX)\big)^2\Big)}\nonumber\\
&\leq4\pi F\delta+2\delta\E_{\bs\varepsilon,\bs\omega}\cro{\frac{1}{n}\sum_{i=1}^n\omega_i|\varepsilon_i|}\nonumber\\
&\leq6\pi F\delta.
\end{align}
By applying the result from Step (II) with $\widetilde f=f'$ and using \eqref{delta-conne}, we obtain from \eqref{step-final-b} that
\begin{align*}
&\bE_{\bs\varepsilon,\bs\omega}\cro{\|\widetilde f-f^*\|^{2}_{n}}\\
&\leq2(1+\varepsilon)\cro{\inf_{f\in \mathcal{F}}\|f-f^*\|_n^2+ \frac{1+\varepsilon}{\varepsilon}\frac{4\log \mathcal{N}_{\delta}}{n\pi} + \frac{6\delta}{\sqrt{\pi}}+\frac{\Delta_{n}^{\cF}(\widetilde f,f^*\big|\bX)}{\pi}+6F\delta}\\
&\quad+\dfrac{8F^{2}}{\mathcal{N}_{\delta}}+\frac{3600F^2\log\mathcal{N}_\delta}{\pi n}+12F\delta\\
&\leq2(1+\varepsilon)\cro{\inf_{f\in \mathcal{F}}\|f-f^*\|_n^2+ \frac{1+\varepsilon}{\varepsilon}\frac{1804F^2\log \mathcal{N}_{\delta}}{n\pi} + \dfrac{4F^{2}}{\mathcal{N}_{\delta}}+\frac{6\delta}{\sqrt{\pi}}+12F\delta+\frac{\Delta_{n}^{\cF}(\widetilde f,f^*\big|\bX)}{\pi}}.
\end{align*}

\textbf{Step (IV): Relating the prediction error to the population risk.} 
This step yields the following two-sided bound for the prediction error of any (random) estimator $\widetilde f$. More precisely, we show that for any $\varepsilon \in (0,1]$,
\begin{align*}
(1-&\varepsilon)\E_{{\bs{\varepsilon}},{\bs\omega},\bX}\cro{\|\widetilde{f}-f^*\|^2_{n}}-\frac{20F^2m\log \mathcal{N}_{\delta}}{n\varepsilon}-\frac{15m^2F^2(\log\mathcal{N}_\delta)^{3/4}}{n^{3/4}}-16\delta F\leq\mathcal{R}\left(\widetilde{f}, f^*\right)\\  
&\leq (1+\varepsilon) \left(\E_{{\bs{\varepsilon}},{\bs\omega},\bX}\cro{\|\widetilde{f}-f^*\|^2_{n}}+\frac{15(1+\varepsilon)m F^{2}}{\varepsilon} \frac{\log \mathcal{N}_{\delta}}{n}+ \frac{15m^2 F^{2}(\log \mathcal{N}_{\delta})^{3/4}}{n^{3/4}}+12\delta F\right).
\end{align*}
Recall that all results from the previous three steps are established conditionally on the fixed design points $X_1,\ldots,X_n$.
		
\textit{Step 4.1:} 
Generate random vectors ${X}_1^{\prime},\ldots,{X}_n^{\prime}$ that have the same joint distribution as $X_1,\ldots,X_n$ and are independent of the original sample. We denote the resulting feature matrix by $\mathbf{X}'$, with rows $(X'_1)^\top,\ldots, (X'_n)^\top$. 

For any $f\in\cF\cup\cF_{\delta}$ and all $i=1,\ldots,n$, define 
$$s_{i}^{(f)}(\bX)=\big(f_i(\mathbf{X})-f^{*}_i(\mathbf{X})\big)^{2}\in \cro{0,4F^2},$$
(since $\|f_i\|_{\infty},\|f_i^*\|_{\infty}\leq F$) and
$$Z_i^{(f)}=\E_{\bX'}[s_{i}^{(f)}(\bX')]-s_{i}^{(f)}(\bX)\in \cro{-4F^2,4F^2},\quad D_f=\frac{1}{n}\sum_{i=1}^nZ_i^{(f)}.$$ For an estimator $\widetilde f$, recall that the prediction error was defined as $$\mathcal{R}\left(\widetilde f,f^*\right)=\mathbb{E}_{{\bs{\varepsilon}}, {\bs{\omega}},\bX, \bX'}\cro{\frac{1}{n}\sum_{i=1}^{n} s_{i}^{(\widetilde f\hspace{2pt})}(\bX')}.$$ Define the deviation term
$$D=\E_{{\bs{\varepsilon}}, {\bs{\omega}},\bX}[D_{\widetilde f}\hspace{2pt}]=\mathcal R(\widetilde{f}, f^*)-\E_{{\bs{\varepsilon}},{\bs\omega},\bX}[\|\widetilde{f}-f^*\|^2_{n}].$$

\textit{Step 4.2: Bounding $|D|.$} We claim that for any given $f$, under Assumption~\ref{ass-m}, there exists an integer $r\leq m(m-1)+1$ and a (disjoint) partition $\cP_1\cup \cdots\cup \cP_r =\{1,\ldots,n\}$ such that $i,j\in\cP_\ell$ if $s_i^{(f)}(\bX)$ and $s_j^{(f)}(\bX)$ depend on different rows of $\bX,$ that is, they do not share a single covariate vector. 

To prove the claim, we build a new graph with vertices $1,\ldots,n,$ where vertex $i$ and $j$ are connected by an edge if and only if $s_i^{(f)}(\bX)$ and $s_j^{(f)}(\bX)$ depend on at least one shared covariate vector. 

Every vertex in this new graph has edge degree bounded by $\leq m(m-1).$ To see this, we consider node $i$ corresponding to $s_i^{(f)}(\bX)$. By condition (i) of Assumption~\ref{ass-m}, $s_i^{(f)}(\mathbf{X})$ can depend on at most $m$ covariate vectors in $\mathbf{X}$. According to condition (ii), each of those $m$ covariate vectors can itself depend on at most $m-1$ of $s_j^{(f)}(\mathbf{X})$ with $j \neq i$. Therefore, the degree of any node $i$ in the new graph is at most $m(m-1)$. Since node $i$ was chosen arbitrarily, this bound holds for every node.

We now examine a vertex coloring of the new graph. This is an assignment of a color to each vertex with the constraint that neighboring vertices must receive different colors. Using a greedy sequential coloring scheme guarantees that at most $r \leq \Delta_{\max} + 1$ colors are sufficient, where $\Delta_{\max}$ denotes the maximum vertex degree. In our case, this yields the bound $r \leq m(m-1) + 1$. Grouping all vertices of the same color into one set yields the disjoint partition $\mathcal{P}_1 \cup \cdots \cup \mathcal{P}_r$ of $\{1,\ldots,n\}$ with the properties stated above, thereby proving the claim.

In particular, the claim implies that $D_f$ can be rewritten as
\begin{equation}\label{sepa-Df}
D_f=\frac{1}{n}\sum_{\ell=1}^{r}V_{\ell}^{(f)}\quad\mbox{where}\quad V_{\ell}^{(f)}=\sum_{i\in\cP_\ell}Z_i^{(f)},
\end{equation}
and each $V_{\ell}^{(f)}$ is a sum of independent random variables.

For each block $\ell=1, \dots, r$, the random variables $\{Z_i^{(f)} : i \in \mathcal{P}_\ell\}$ satisfy $|Z_i^{(f)}| \leq 4F^2$ almost surely. Applying Lemma~\ref{lem:bounded-mgf} to the sum $V_{\ell}^{(f)} = \sum_{i \in \mathcal{P}_\ell} Z_i^{(f)}$ yields, for any $\lambda$ such that $0 \leq 4F^2 \lambda r < 3$,
\begin{equation}\label{each-block-b}
\E\cro{\exp\left(\lambda rV_{\ell}^{(f)}\right)}\leq\exp\cro{\frac{\lambda^2 r^2}{2(1-4F^2\lambda r/3)}\sum_{i\in\cP_{\ell}}\E\cro{(Z_i^{(f)})^2}}.    
\end{equation}
Substituting the blockwise bound \eqref{each-block-b} into Lemma~\ref{general-holder}, we bound, for any $\lambda$ satisfying $0\leq4F^2\lambda r<3$,
\begin{align}
&\E\cro{\exp\left(\lambda\sum_{\ell=1}^rV_{\ell}^{(f)}\right)}\nonumber\\
&\leq \prod_{\ell=1}^r \left( \E\cro{\exp\left(\lambda rV_{\ell}^{(f)}\right)} \right)^{\frac{1}{r}}\nonumber\\
&\leq\prod_{\ell=1}^r \left( \exp\cro{\frac{\lambda^2 r}{2(1-4F^2\lambda r/3)}\sum_{i\in\cP_{\ell}}\E\cro{(Z_i^{(f)})^2}} \right)\nonumber\\
&\leq \exp\cro{\frac{\lambda^2 r}{2(1-4F^2\lambda r/3)}\sum_{i=1}^n\E\cro{(Z_i^{(f)})^2}}.\label{exp-bb}
\end{align}
Observe that 
\begin{equation}\label{b-n-exp-s}
\sum_{i=1}^n\E\cro{(Z_i^{(f)})^2}\leq\sum_{i=1}^n\E\cro{\left(s_{i}^{(f)}(\bX)\right)^2}\leq4F^2n\cR\left(f, f^*\right).   \end{equation}
Plugging \eqref{b-n-exp-s} into \eqref{exp-bb} and using Markov's Inequality, we deduce that for any $t > 0$ and any $\lambda$ satisfying $0\leq\lambda < 3/(4rF^2)$,
\begin{equation}\label{exp-b-lambda}
\P\left(nD_f\geq t\right) \leq e^{-\lambda t}\hspace{2pt}\E\cro{\exp\left(\lambda\sum_{\ell=1}^rV_{\ell}^{(f)}\right)}\leq \exp\cro{-\lambda t + \frac{4F^2\lambda^2rn\cR\left(f, f^*\right)}{2(1 - 4rF^2\lambda/3)}}.    
\end{equation}
Taking $$\lambda=\frac{t}{4F^2rn\cR\left(f, f^*\right) + 4rF^2t/3}$$ in the right-hand side of \eqref{exp-b-lambda} yields the one-sided tail bound
\begin{equation}\label{n-times-b}
\P\left(nD_f\geq t\right)\leq\exp\cro{ -\frac{t^2}{8F^2rn\cR\left(f, f^*\right)+8F^2rt/3}}.
\end{equation}
Since the same argument applies to $-D_f$, substituting $t\to nt$ yields the two-sided inequality
\begin{equation}\label{final-exp-b}
\P\left(|D_f| \geq t\right) \leq 2\exp\cro{-\frac{n t^2}{8F^2 r \cR(f, f^*) + 8F^2 r t/3}}.
\end{equation}
Using $\sqrt{a+b}\leq\sqrt{a}+\sqrt{b}$, for all $a,b\geq0$, we deduce from \eqref{final-exp-b} that for all $u>0$, 
\begin{equation}\label{two-sided-s-b}
\mathbb{P}\left( |D_f|\geq\sqrt{\frac{8F^2ru\cR(f, f^*)}{n}} + \frac{8F^2ru}{3n} \right) \leq 2e^{-u}.    
\end{equation}
Taking a union bound over $\cF_{\delta}$ and using the fact that $f' \in \mathcal{F}_{\delta}$, we obtain from \eqref{two-sided-s-b} that for all $u>0$,
$$\P\left(|D_{f'}|\geq\sqrt{\frac{8F^2 ru\cR(f', f^*)}{n}} + \frac{8F^2ru}{3n}\right)\leq2\cN_{\delta}e^{-u}.$$
Set $$B_1=\sqrt{\frac{8rF^2\cR\left(f',f^*\right)}{n}}\quad\mbox{and}\quad B_2=\frac{8F^2r}{3n},$$ and define
$$G(u)=B_1u^{1/2}+B_2u.$$ Taking $u_0=\log(2\cN_{\delta})$, it then follows from the integration that
\begin{align}
\E\cro{|D_{f'}|}&=\int_{0}^{\infty}\P\left\{|D_{f'}|>t\right\}dt\nonumber\\  
&=\int_{0}^{\infty}\P\left\{|D_{f'}|>G(u)\right\}G'(u) \, du\nonumber\\
&\leq\int_{0}^{u_0}G'(u) \, du+\int_{u_0}^{\infty}2\cN_{\delta}e^{-u}G'(u) \, du\nonumber\\
&\leq G(u_0)+2\cN_{\delta}\int_{u_0}^{\infty}e^{-u}\left(\frac{B_1}{2\sqrt{u}}+B_2\right)du\nonumber\\
&\leq1.7B_1 \sqrt{\log\mathcal{N}_\delta} +2.7B_2 \log\mathcal{N}_\delta,\label{exp-df'-b}
\end{align}
where we use $1\leq\log\cN_{\delta}\leq n$ in the last inequality. Using the fact that $\|f'_i-\widetilde f_i\|_{\infty}\leq\delta$ for all $i$, we obtain from \eqref{exp-df'-b} that
\begin{align}
|D|&\leq\E\cro{|D_{\widetilde f}\hspace{2pt}|}\nonumber\\
&\leq\E\cro{|D_{f'}|}+8\delta F\nonumber\\
&\leq1.7B_1 \sqrt{\log\mathcal{N}_\delta} +2.7B_2 \log\mathcal{N}_\delta+8\delta F\nonumber\\
&\leq\frac{7.2rF^2\log\mathcal{N}_\delta}{n}+4.81F\sqrt{\frac{r\cR(f',f^*)\log\mathcal{N}_\delta}{n}}+8\delta F\nonumber\\
&\leq\frac{7.2rF^2\log\mathcal{N}_\delta}{n}+4.81F\sqrt{\frac{r\cro{\cR(\widetilde f,f^*)+4\delta F}\log\mathcal{N}_\delta}{n}}+8\delta F.\label{bound-absolute-d}
\end{align}
\textit{Step 4.3: Applying a quadratic bound.} Let $a,b,c,d$ be positive real numbers satisfying $|a-b|\leq2c\sqrt{a}+d.$ Then, for any $\varepsilon\in(0,1],$ the following inequality holds
\begin{equation}\label{aux-inequ}
(1-\varepsilon) b - d - \frac{c^2}{\varepsilon} \leq a \leq (1+\varepsilon) (b + d) + \frac{(1+\varepsilon)^2}{\varepsilon} c^2.    
\end{equation}
Applying \eqref{aux-inequ} with $$a =\mathcal{R}\left(\widetilde{f},f^*\right)+4\delta F, \  b = \E_{{\bs{\varepsilon}},{\bs\omega},\bX}\cro{\|\widetilde{f}-f^*\|^2_{n}},\  c = \frac{4.81F\sqrt{r\log \mathcal{N}_{\delta}}}{2\sqrt{n}}, \  d = \frac{7.2rF^2\log\mathcal{N}_\delta}{n}+12\delta F,$$ using the fact that $r\leq m^2$, we derive from \eqref{bound-absolute-d} that
\begin{align*}
(1-\varepsilon)\E_{{\bs{\varepsilon}},{\bs\omega},\bX}\cro{\|\widetilde{f}-f^*\|^2_{n}}&-\frac{13m^2F^2\log \mathcal{N}_{\delta}}{n\varepsilon}-16\delta F\leq\mathcal{R}\left(\widetilde{f}, f^*\right)\\ &\leq (1+\varepsilon) \left(\E_{{\bs{\varepsilon}},{\bs\omega},\bX}\cro{\|\widetilde{f}-f^*\|^2_{n}}+\frac{10(1+\varepsilon)m^2 F^{2}}{\varepsilon} \frac{\log \mathcal{N}_{\delta}}{n}+12\delta F\right),
\end{align*}
where the upper bound employs the inequality $2 \leq(1+\varepsilon) / \varepsilon$.

\textbf{Step (V): Lower bound for $\E_{{\bs{\varepsilon}},{\bs\omega}}[\|\widetilde{f}-f^*\|^2_{n}]$.} In this step, we will show that for any $\varepsilon\in(0,1]$, $$\E_{{\bs{\varepsilon}},{\bs\omega}}\cro{\|\widetilde{f}-f^*\|^2_{n}}\geq (1-\varepsilon) \left( \Delta_{n}^{\cF}\left(\widetilde f,f^*\big|\bX\right) - \frac{4\log \mathcal{N}_{\delta}}{n \varepsilon} - 12\delta \sqrt{\pi} \right).$$

Let $\bar{f}=(\bar{f}_1,\ldots,\bar{f}_n)^{\top}$ be any global empirical risk minimizer over $\cF$; that is,
$$\bar{f}\in\argmin_{f \in \mathcal{F}}\sum_{i=1}^{n}\omega_i\big(Y_i-{f}_i(\bX)\big)^2.$$
Using \eqref{eq:trivial_lb} and the result from Step (I), we can deduce that
\begin{align}
&\E_{{\bs{\varepsilon}},{\bs\omega}}\cro{\|\widetilde{f}-f^*\|^2_{n,{\bs\omega}}} - \E_{{\bs{\varepsilon}},{\bs\omega}}\cro{\|\bar{f}-f^*\|^2_{n,{\bs\omega}}}\nonumber\\
&\geq\Delta_{n}^{\cF}\left(\widetilde f,f^*\big|\bX\right)-4\sqrt{\frac{ \left(\E_{{\bs{\varepsilon}},{\bs\omega}}\cro{\|\widetilde{f}-f^*\|^2_{n,{\bs\omega}}}+\E_{{\bs{\varepsilon}},{\bs\omega}}\cro{\|\bar{f}-f^*\|^2_{n,{\bs\omega}}}\right) \log \mathcal{N}_{\delta}}{n}} - 12\sqrt{\pi}\delta.\label{step-5-ineq} 
\end{align}
For any $\varepsilon\in(0,1)$ and $a,b\geq0$, using the inequality
\begin{equation}\label{pre-tool}
2 \sqrt{ab} \leq \frac{\varepsilon}{1-\varepsilon} a + \frac{1-\varepsilon}{\varepsilon} b,    
\end{equation}
we bound
\begin{equation}\label{s5-ine2}
4\sqrt{\frac{\E_{{\bs{\varepsilon}},{\bs\omega}}\cro{\|\widetilde{f}-f^*\|^2_{n,{\bs\omega}}} \log \mathcal{N}_{\delta}}{n}} \leq \frac{\varepsilon}{1-\varepsilon} \E_{{\bs{\varepsilon}},{\bs\omega}}\cro{\|\widetilde{f}-f^*\|^2_{n,{\bs\omega}}}+\frac{4(1-\varepsilon) \log \mathcal{N}_{\delta}}{n \varepsilon}. \end{equation}
Moreover, taking $\varepsilon=1/2$ in \eqref{pre-tool} yields the bound
\begin{equation}\label{s5-ine3}
4\sqrt{\frac{\E_{{\bs{\varepsilon}},{\bs\omega}}\cro{\|\bar{f}-f^*\|^2_{n,{\bs\omega}}} \log \mathcal{N}_{\delta}}{n}}\leq\E_{{\bs{\varepsilon}},{\bs\omega}}\cro{\|\bar{f}-f^*\|^2_{n,{\bs\omega}}} + \frac{4\log\mathcal{N}_{\delta}}{n}. \end{equation}
By definition, $$\E_{{\bs{\varepsilon}},{\bs\omega}}[\|\widetilde{f}-f^*\|^2_{n}]\geq\E_{{\bs{\varepsilon}},{\bs\omega}}[\|\widetilde{f}-f^*\|^2_{n,{\bs\omega}}].$$ Substituting \eqref{s5-ine2} and \eqref{s5-ine3} into \eqref{step-5-ineq}, we obtain
\begin{equation}\label{omega-n-risk-bound}
\E_{{\bs{\varepsilon}},{\bs\omega}}[\|\widetilde{f}-f^*\|^2_{n}]\geq 
\E_{{\bs{\varepsilon}},{\bs\omega}}\cro{\|\widetilde{f}-f^*\|^2_{n,{\bs\omega}}}\geq(1-\varepsilon) \left(\Delta_{n}^{\cF}\left(\widetilde f,f^*\big|\bX\right)- \frac{4\log\mathcal{N}_{\delta}}{n \varepsilon}-12\sqrt{\pi} \delta\right).    
\end{equation}
When $\varepsilon=1$, the lower bound holds trivially.

In conclusion, the lower bound follows by combining Step (IV) with the expectation over $\bX$ of Step (V)'s result, and the upper bound follows by combining Step (IV) with the expectation over $\bX$ of Step (III)'s result.
\end{proof}
\subsection{Proof of Corollary~\ref{kappa-b}}
\begin{proof}
Consider the feedforward part of the network class that is denoted by $\cF(L_2,\bs{p},s,\infty)$. Using the identity (19) in \cite{Schmidt-Hieber} to remove inactive nodes, we deduce that
\begin{equation}\label{refor-deep-relu}
\cF(L_2,\bs{p},s,\infty)=\mathcal{F}\big(L_2, (p_0, p_1 \wedge s, p_2 \wedge s, \ldots, p_{L_2} \wedge s, p_{L_2+1}), s,\infty\big).
\end{equation}
Using this identity and choosing $\delta = 1/n$ in Proposition~\ref{cover-whole} yields a metric entropy bound for the whole class
\begin{align}
&\log\mathcal{N}\big(1/n,\mathcal{F}(\bT, L_1, L_2, {\bs p},s,F),\|\cdot\|_{\infty}\big)\nonumber\\
&\leq\log\mathcal{N}\big(1/n,\mathcal{F}(\bT, L_1, L_2, {\bs p}, s,\infty),\|\cdot\|_{\infty}\big)\nonumber\\
&=\log\mathcal{N}\big(1/n,\mathcal{F}(\bT, L_1, L_2, (p_0, p_1 \wedge s, p_2 \wedge s, \ldots, p_{L_2} \wedge s, p_{L_2+1}), s,\infty\big),\|\cdot\|_{\infty})\nonumber\\
&\leq(d^2L_1+L_1+s+1)\log\left[2^{2L_2+5}nL_1(L_1+L_2+2)(\|\bT\|_{1,\infty}\vee1)^{L_1}d^{L_1+2}s^{2L_2}\right]
\nonumber\\
&\leq18 (d^2L_1+s)\log\left[nL_1(L_1+L_2+2) (\|\bT\|_{1,\infty}\vee d)^{L_1+1}s^{L_2}\right],\label{cl-bound}   
\end{align}
where the last inequality is due to $s\geq2$ and $L_2\geq1$. Under the condition $$\cN\big(1/n, \cF(\bT, L_1, L_2, \bs{p}, s,F),\|\cdot\|_{\infty}\big) \geq Cn,$$ with some numerical constant $C>0$, and setting $\delta=1/n$, it follows from $F\geq1$ and $\pi\in(0,1]$ that 
\begin{equation*}
\frac{24F\delta}{\sqrt{\pi}}+\frac{4F^2}{\cN_{\delta}}\leq \frac{24F}{n\pi}+\frac{4F^2}{Cn}\leq\left(24+\frac{4}{C}\right)\frac{F^2}{n\pi}.
\end{equation*}
Together with \eqref{cl-bound} and Theorem~\ref{main} (and the conditions $m,F \geq 1$ and $0 < \pi \leq 1$), this proves the assertion.
\end{proof}

\subsection{Proof of Theorem~\ref{converge-rate}}
\begin{proof}
The proof is based on applying Corollary~\ref{kappa-b}. We first verify that 
\begin{equation}\label{cover-lower-condition}
\cN\Big(\hspace{2pt}\frac 1n,\cF(\bS_{\mathbf{A}},L_1,L_2,{\bs{p}}_n,s_n,F),\|\cdot\|_{\infty}\Big)\gtrsim n.    
\end{equation}
Observe that the function class includes constant functions; that is, for any fixed $v\in[0,1]$, there exists a function in $\cF(\bS_{\mathbf{A}},L_1,L_2,{\bs{p}}_n,s_n,F)$ that outputs $v$ at all nodes, given any input. This can be realized by setting all parameters except the bias in the final layer of the ReLU network to zero and taking ${\bs b}_{L_2} = v$. In particular, let $\Gamma= \lceil n/3 \rceil$ and define the set of functions $\cM=\{f^{(0)}, \dots, f^{(\Gamma-1)}\}$ by  
\[
f^{(j)} \equiv \frac{3j}{n}, \quad \text{for all } j = 0,\dots,\Gamma-1.
\]
With $F\geq1$ and $s_n\geq2$, we have $\cM \subseteq \cF(\bS_{\mathbf{A}},L_1,L_2,{\bs{p}}_n,s_n,F)$ and the functions satisfy
$$\left\|f^{(i)}-f^{(j)}\right\|_{\infty}\geq\frac{3}{n}>\frac{2}{n},\quad\mbox{for\ all\ }i\not=j.$$ Then any sup-norm ball of radius $1/n$ can contain at most one function from $\cM$. We obtain
$$\cN\Big(\frac 1n,\cF(\bS_{\mathbf{A}},L_1,L_2,{\bs{p}}_n,s_n,F),\|\cdot\|_{\infty}\Big)\geq\frac{n}{3},$$ which verifies \eqref{cover-lower-condition}.

Taking $$N_i=\left\lceil n^{\frac{t_i}{2\alpha_i^*+t_i}}\right\rceil,\quad N=\max_{i=0,\ldots,q}N_i$$ and the network parameters $L_1, L_2,\bs{p}_n,s_n, F$ satisfying Theorem~\ref{converge-rate}, for sufficiently large $n$, Lemma~\ref{overall-approx} yields
\begin{equation}\label{appro-error-rate}
\inf_{f\in\cF(\bS_{\mathbf{A}},L_1,L_2,{\bs{p}}_n,s_n,F)}\|f-f^*\|_{\infty}\leq C_{q,\beta,{\bs{d}},{\bs{t}},{\bs{\alpha}},M,F}\max_{i=0,\ldots,q} n^{-\frac{\alpha_i^*}{2\alpha_i^*+t_i}}.     
\end{equation}
Additionally, under the given conditions, for all sufficiently large $n$, we can bound
\begin{align}
&(d^2 L_1 + s)\cro{\log\big( n L_1(L_1 + L_2)\big)+(L_1+1)\log\big(\|\bS_{\mathbf{A}}\|_{1,\infty}\vee d\big)+L_2\log s}\nonumber\\ 
&\leq C_{d,L_1,\|\bS_{\mathbf{A}}\|_{1,\infty}}sL_2\log(nsL_2),\label{variance-rate}
\end{align}
where $C_{d,L_1,\|\bS_{\mathbf{A}}\|_{1,\infty}}$ is a positive constant depending only on $d$, $L_1$, and $\|\bS_{\mathbf{A}}\|_{1,\infty}$.

Substituting \eqref{appro-error-rate} and \eqref{variance-rate} into Corollary~\ref{kappa-b} with $\varepsilon=1$ gives that for sufficiently large $n$,
\begin{align*}
\mathcal{R}(\widehat{f}, f^*)&\leq C_{q,\beta,{\bs{d}},{\bs{t}},{\bs{\alpha}},M,F}\max_{i=0,\ldots,q}n^{-\frac{2\alpha_i^*}{2\alpha_i^*+t_i}}+C_{d,L_1,\|\bS_{\mathbf{A}}\|_{1,\infty}}\frac{m^2F^2}{\pi}\frac{sL_2\log(nsL_2)}{n}\\
&\leq C_{q,\beta,{\bs{d}},{\bs{t}},{\bs{\alpha}},M,F}\max_{i=0,\ldots,q}n^{-\frac{2\alpha_i^*}{2\alpha_i^*+t_i}}+C_{q,\beta,{\bs d},{\bs t},{\bs{\alpha}},M,L_1,\|\bS_{\mathbf{A}}\|_{1,\infty}}\frac{m^2F^2}{\pi}\frac{N}{n}\log^3 n\\
&\leq C_{q,\beta,{\bs d},{\bs{t}},{\bs{\alpha}},M,F,L_1,\|\bS_{\mathbf{A}}\|_{1,\infty}}\frac{m^2\log^3 n}{\pi}\max_{i=0,\ldots,q}n^{-\frac{2\alpha_i^*}{2\alpha_i^*+t_i}},   
\end{align*}
which completes the proof.
\end{proof}

\section{Proof of the covering number bound}\label{proof-covering}
To prove Proposition~\ref{cover-whole}, we first show that for any two GCNs with matrix parameters and reweighting coefficients differing by at most $\varepsilon$, their outputs, for any given input, differ by at most a value proportional to $\varepsilon$.

Recall that for a matrix $\mathbf{M}=(M_{i,j})$, its row-sum norm is given by $$\|\mathbf{M}\|_{1,\infty}=\max_i \sum_j |M_{i,j}|.$$

\begin{lemma}\label{cover-gcn}
Let $\mathcal{G}{(L_1,\bT)}$ denote the class of functions defined in \eqref{gcn-l-t}. If $g, h \in \mathcal{G}{(L_1,\bT)}$ are two GCNs with corresponding weight matrices $\bW_{\ell}^g, \bW_{\ell}^h$ and reweighting coefficients $\gamma_{\ell}^g,\gamma_{\ell}^h$ satisfying
$$\left\|\bW_{\ell}^g - \bW_{\ell}^h\right\|_{\infty} \leq \varepsilon,\quad |\gamma_{\ell}^g-\gamma_{\ell}^h|\leq \varepsilon,\quad\mbox{for all\ }\ell \in\{1, \ldots, L_1\},$$ 
then for any ${\bs{x}}\in[0,1]^{n\times d}$,
$$\|g({\bs{x}})-h({\bs{x}})\|_{\infty}\leq (L_1^2+L_1)\big(\|\bT\|_{1,\infty}\vee1 \big)^{L_1}d^{L_1}\varepsilon.$$
\end{lemma}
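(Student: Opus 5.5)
The plan is to expand both GCNs termwise and use a telescoping argument on the products of weight matrices. Write
\[
g(\bs{x})-h(\bs{x})=\sum_{\ell=1}^{L_1}\Big(\gamma_\ell^g\,\bT^\ell\bs{x}\bW_1^g\cdots\bW_\ell^g-\gamma_\ell^h\,\bT^\ell\bs{x}\bW_1^h\cdots\bW_\ell^h\Big).
\]
Each summand splits as $(\gamma_\ell^g-\gamma_\ell^h)\bT^\ell\bs{x}\bW_1^g\cdots\bW_\ell^g+\gamma_\ell^h\bT^\ell\bs{x}\big(\bW_1^g\cdots\bW_\ell^g-\bW_1^h\cdots\bW_\ell^h\big)$. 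The whole argument only needs elementary norm inequalities, namely submultiplicativity of $\|\cdot\|_{1,\infty}$ and the bound $\|\mathbf{A}\mathbf{B}\|_\infty\le\|\mathbf{A}\|_{1,\infty}\|\mathbf{B}\|_\infty$.

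\textbf{Step 1 (entry sizes).} Since $\bs{x}\in[0,1]^{n\times d}$, we have $\|\bT^\ell\bs{x}\|_\infty\le\|\bT\|_{1,\infty}^\ell$. Right-multiplying a matrix by $\bW\in[-1,1]^{d\times d}$ multiplies its entry-sup norm by at most $d$, because $\|\mathbf{M}\bW\|_\infty\le\|\mathbf{M}\|_\infty\, d\,\|\bW\|_\infty$. Hence
\[
\|\bT^\ell\bs{x}\bW_1\cdots\bW_j\|_\infty\le(\|\bT\|_{1,\infty}\vee1)^{L_1}d^{j}.
\]
This immediately bounds the $\gamma$-difference term by $\varepsilon(\|\bT\|_{1,\infty}\vee1)^{L_1}d^{\ell}$.

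\textbf{Step 2 (telescoping).} Write
\[
\bW_1^g\cdots\bW_\ell^g-\bW_1^h\cdots\bW_\ell^h=\sum_{j=1}^{\ell}\bW_1^h\cdots\bW_{j-1}^h(\bW_j^g-\bW_j^h)\bW_{j+1}^g\cdots\bW_\ell^g.
\]
Left-multiply each of these $\ell$ terms by $\bT^\ell\bs{x}$ and apply the right-multiplication bound factor by factor. The factor $\bW_j^g-\bW_j^h$ contributes $d\varepsilon$ instead of $d$, so each term is at most $(\|\bT\|_{1,\infty}\vee1)^{L_1}d^{\ell}\varepsilon$. Since $|\gamma_\ell^h|\le1$, the second part of the $\ell$-th summand is bounded by $\ell(\|\bT\|_{1,\infty}\vee1)^{L_1}d^{\ell}\varepsilon$.

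\textbf{Step 3 (summing).} Adding the two parts gives at most $(\ell+1)(\|\bT\|_{1,\infty}\vee1)^{L_1}d^{L_1}\varepsilon$ for the $\ell$-th summand. Summing over $\ell=1,\dots,L_1$ yields
\[
\sum_{\ell=1}^{L_1}(\ell+1)\le L_1^2+L_1,
\]
which is the claimed constant.

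The only delicate point is bookkeeping the powers of $d$ correctly when bounding matrix products in the entrywise sup norm: right multiplication costs a factor $d$, while left multiplication by $\bT$ costs $\|\bT\|_{1,\infty}$. Beyond that, the computation is routine.
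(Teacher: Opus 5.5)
Your proof is correct and follows essentially the paper's route. The paper proves $\|H^{(\ell)}_{\bT,g}(\bs{x})-H^{(\ell)}_{\bT,h}(\bs{x})\|_\infty\le \ell\|\bT\|_{1,\infty}^{\ell}d^{\ell}\varepsilon$ by induction over layers, which is the recursive form of your telescoping sum, and then splits off the $\gamma$-differences exactly as you do. Your final count $\sum_{\ell=1}^{L_1}(\ell+1)=(L_1^2+3L_1)/2$ is in fact slightly sharper than the stated $L_1^2+L_1$.
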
	
\begin{proof}
We denote by $H_{\bT,g}^{(\ell)}$ and $H_{\bT,h}^{(\ell)}$ the functions corresponding to the $\ell$-th layer of networks $g$ and $h$, respectively. First, we show by induction that for each $\ell=1,\ldots,L_1$,
\begin{equation}\label{induc-b}
\left\|H_{\bT,g}^{(\ell)}({\bs{x}})-H_{\bT,h}^{(\ell)}({\bs{x}})\right\|_{\infty}\leq\ell \|\bT\|_{1,\infty}^{\ell}d^{\ell}\varepsilon.
\end{equation}
For $\ell=1$, we can deduce from \eqref{gcn-def} that 
\begin{align*}
\left\|H_{\bT,g}^{(1)}({\bs{x}})-H_{\bT,h}^{(1)}({\bs{x}})\right\|_{\infty}&=\left\|\bT{\bs{x}}\big(\gW_1^g-\gW_1^h\big)\right\|_{\infty}\leq\|\bT\|_{1,\infty}d\varepsilon.
\end{align*}
Assuming the claim holds for $\ell-1\in\{1,\ldots,L_1-1\}$, we prove it now for $\ell$. To this end,
\begin{align*}
&\left\|H_{\bT,g}^{(\ell)}({\bs{x}})-H_{\bT,h}^{(\ell)}({\bs{x}})\right\|_{\infty}\\
&=\left\|\bT H_{\bT,g}^{(\ell-1)}({\bs{x}})\bW_{\ell}^g-\bT H_{\bT,h}^{(\ell-1)}({\bs{x}}) \bW_{\ell}^h\right\|_{\infty}\\
&\leq\left\|\bT\big(H_{\bT,g}^{(\ell-1)}({\bs{x}})-H_{\bT,h}^{(\ell-1)}({\bs{x}})\big) \bW_{\ell}^g\right\|_{\infty}+\left\|\bT H_{\bT,h}^{(\ell-1)}({\bs{x}})\big(\bW_{\ell}^g-\bW_{\ell}^h\big)\right\|_{\infty}\\
&\leq\|\bT\|_{1,\infty}\cro{(\ell-1)\|\bT\|_{1,\infty}^{\ell-1}d^{\ell-1}\varepsilon}d+\|\bT\|_{1,\infty}\big(\|\bT\|_{1,\infty}^{\ell-1}d^{\ell-1}\big)d\varepsilon\\
&\leq\ell\|\bT\|_{1,\infty}^{\ell}d^{\ell}\varepsilon.
\end{align*}
Hence, by definition \eqref{gcn-l-t} and the fact that  $|\gamma_{\ell}^g|,|\gamma_{\ell}^h|\leq1$, for $\ell=1\ldots,L_1$,
\begin{align*}
&\|g({\bs{x}})-h({\bs{x}})\|_{\infty}\\
&=\left\|\sum_{\ell=1}^{L_1}\gamma_{\ell}^gH_{\bT,g}^{(\ell)}({\bs{x}})-\sum_{\ell=1}^{L_1}\gamma_{\ell}^hH_{\bT,h}^{(\ell)}({\bs{x}})\right\|_{\infty}\\
&\leq\left\|\sum_{\ell=1}^{L_1}\gamma_{\ell}^gH_{\bT,g}^{(\ell)}({\bs{x}})-\sum_{\ell=1}^{L_1}\gamma_{\ell}^gH_{\bT,h}^{(\ell)}({\bs{x}})\right\|_{\infty}+\left\|\sum_{\ell=1}^{L_1}\gamma_{\ell}^gH_{\bT,h}^{(\ell)}({\bs{x}})-\sum_{\ell=1}^{L_1}\gamma_{\ell}^hH_{\bT,h}^{(\ell)}({\bs{x}})\right\|_{\infty}\\
&\leq\sum_{\ell=1}^{L_1}\left\|H_{\bT,g}^{(\ell)}({\bs{x}})-H_{\bT,h}^{(\ell)}({\bs{x}})\right\|_{\infty}+\left|\gamma_{\ell}^g-\gamma_{\ell}^h\right|\cdot\left\|\sum_{\ell=1}^{L_1}H_{\bT,h}^{(\ell)}({\bs{x}})\right\|_{\infty}\\
&\leq L_1^2\big(\|\bT\|_{1,\infty}\vee1 \big)^{L_1}d^{L_1}\varepsilon+L_1\big(\|\bT\|_{1,\infty}\vee1 \big)^{L_1}d^{L_1}\varepsilon\\
&\leq L_1(L_1+1)\big(\|\bT\|_{1,\infty}\vee1 \big)^{L_1}d^{L_1}\varepsilon.
\end{align*}
\end{proof}

\begin{proof}[Proof of Proposition~\ref{cover-whole}]
Let $g, h \in \mathcal{F}(\bT, L_1, L_2,{\bs{p}}, s,\infty)$ be two networks such that their weight matrices and bias vectors differ by at most \(\varepsilon\) in each entry. By definition, for each $j\in[n]$ we have
$$g_j=g_0\circ g_{1,j}\quad\mbox{and}\quad h_j=h_0\circ h_{1,j},$$ where $g_0,h_0\in\cF(L_2,{\bs{p}},s,\infty)$ and $g_1,h_1\in\cG(L_1,\bT)$. We first show that for any $j\in[n]$ and any ${\bs x}\in[0,1]^{n\times d}$,
$$\big|g_0\circ g_{1,j}({\bs x})-h_0\circ h_{1,j}({\bs x})\big|\leq L_1(L_1+L_2+2)\cro{\prod_{k=0}^{L_2+1}(p_{k}+1)}\big(\|\bT\|_{1,\infty}\vee1 \big)^{L_1}d^{L_1}\varepsilon.$$ 

For any ${\bs x}\in[0,1]^{n\times d}$, the triangle inequality yields
\begin{equation}\label{decompo-inequa}
|g_0\circ g_{1,j}({\bs x})-h_0\circ h_{1,j}({\bs x})|\leq|g_0\circ g_{1,j}({\bs x})-h_0\circ g_{1,j}({\bs x})|+|h_0\circ g_{1,j}({\bs x})-h_0\circ h_{1,j}({\bs x})|.    
\end{equation}
Let $g^i_{1,j}({\bs x})$ denote the $i$-th component of $g_{1,j}({\bs x})\in\R^{1\times d}$. Observe that by the definition of $g_{1,j}$, for all $i,$
\begin{equation}\label{bound-gij}
\left|g_{1,j}^i({\bs x})\right|\leq\sum_{\ell=1}^{L_1}\|\bT\|_{1,\infty}^{\ell}d^{\ell}.
\end{equation}
Since all parameters in the weight matrices and shift vectors of $\cF(L_2,{\bs{p}},s,\infty)$ lie within $[-1,1]$, it follows from the proof of Lemma~5 in \cite{Schmidt-Hieber} that the function $h_0$ is Lipschitz with a Lipschitz constant bounded by $\prod_{k=0}^{L_2}p_k$. Together with Lemma~\ref{cover-gcn}, we obtain
\begin{align}
|h_0\circ g_{1,j}({\bs x})-h_0\circ h_{1,j}({\bs x})|&\leq\left(\prod_{k=0}^{L_2}p_k\right)\left|g_{1,j}({\bs x})-h_{1,j}({\bs x})\right|_{\infty}\nonumber\\
&\leq\left(\prod_{k=0}^{L_2}p_k\right)\cro{L_1(L_1+1)\big(\|\bT\|_{1,\infty}\vee1\big)^{L_1}d^{L_1}\varepsilon}.\label{sum-p-2}
\end{align}
Applying the argument from the proof of Lemma 5 in \cite{Schmidt-Hieber} (specifically, the last step in the chain of inequalities on page 15) together with \eqref{bound-gij}, we obtain
\begin{align}
|g_0\circ g_{1,j}({\bs x})-h_0\circ g_{1,j}({\bs x})|\leq\varepsilon(L_2+1)\cro{\prod_{k=0}^{L_2+1}(p_{k}+1)}\left(\sum_{\ell=1}^{L_1}\|\bT\|_{1,\infty}^{\ell}d^{\ell}\right).\label{sum-p-1}
\end{align}
Thus, substituting \eqref{sum-p-2} and \eqref{sum-p-1} into \eqref{decompo-inequa}, it follows that
\begin{align*}
&\big|g_0\circ g_{1,j}({\bs x})-h_0\circ h_{1,j}({\bs x})\big|\\
&\leq\varepsilon(L_2+1)\cro{\prod_{k=0}^{L_2+1}(p_{k}+1)}\left(\sum_{\ell=1}^{L_1}\|\bT\|_{1,\infty}^{\ell}d^{\ell}\right)+ \left(\prod_{k=0}^{L_2}p_k\right)\cro{L_1(L_1+1)\big(\|\bT\|_{1,\infty}\vee1\big)^{L_1}d^{L_1}\varepsilon}\\
&\leq L_1(L_1+L_2+2)\cro{\prod_{k=0}^{L_2+1}(p_{k}+1)}\big(\|\bT\|_{1,\infty}\vee1\big)^{L_1}d^{L_1}\varepsilon.
\end{align*}
This implies that for any $\delta>0$, constructing a $\delta$-covering of $\cF(\bT, L_1, L_2, \bs{p}, s,\infty)$ only requires discretizing the network parameters with grid size $$\rho_{\delta}=\frac{\delta}{L_1(L_1+L_2+2)\cro{\prod_{k=0}^{L_2+1}(p_{k}+1)}\big(\|\bT\|_{1,\infty}\vee1\big)^{L_1}d^{L_1}}.$$ When $\delta\leq1$ and $L_1\geq1$, it follows that $\rho_{\delta}\leq1$. The GCN part has $(d^2+1)L_1$ parameters in total and all parameters take values in $[-1,1]$. For the class $\cF(L_2,{\bs p},s,\infty)$, the total number of parameters is bounded by $$\sum_{k=0}^{L_2}(p_{k}+1)p_{k+1}\leq\prod_{k=0}^{L_2+1}(p_{k}+1).$$ To pick $s$ non-zero parameters, there are at most $[\prod_{k=0}^{L_2+1}(p_{k}+1)]^s$ combinations. Therefore, we obtain that
\begin{align*}
&\mathcal{N}\big(\delta,\cF(\bT, L_1, L_2, \bs{p}, s,\infty\big),\|\cdot\|_{\infty})\\
&\leq\cro{\sum_{s^*\leq s}\left(\frac{2}{\rho_{\delta}}\cro{\prod_{k=0}^{L_2+1}(p_{k}+1)}\right)^{s^*}}\left(\frac{2}{\rho_{\delta}}\right)^{(d^2+1)L_1}\\
&\leq\left(\frac{2}{\rho_{\delta}}\cro{\prod_{k=0}^{L_2+1}(p_{k}+1)}\right)^{s+1}\left(\frac{2}{\rho_{\delta}}\right)^{(d^2+1)L_1}\\
&\leq\left(\frac{2L_1(L_1+L_2+2)(\|\bT\|_{1,\infty}\vee1)^{L_1}d^{L_1}}{\delta}\cro{\prod_{k=0}^{L_2+1}(p_{k}+1)}^2\right)^{(d^2+1)L_1+s+1}.    
\end{align*}
Taking logarithms yields the result.
\end{proof}

\section{Proofs for the approximation theory}\label{appro-proofs}
\subsection{Proof of Lemma~\ref{gcn-approx}}
\begin{proof}
It suffices to show that, under the given condition, for every $f\in\cF_0(\beta,k,\bT)$, one can find a function $g\in\cG(L_1,\bT)$ such that $f\equiv g$.
Recall that for any $g\in\mathcal{G}{(L_1,\bT)}$ and any input ${\bs x}\in\cro{0,1}^{n\times d}$, the output takes the form
\begin{equation*}
g({\bs x})=\sum_{\ell=1}^{L_1}\gamma_{\ell}H^{(\ell)}_{\bT,g}({\bs x}),
\end{equation*}
where for $\ell\in\{1,\ldots,L_1\}$,
\begin{equation*}
H^{(\ell)}_{\bT,g}({\bs x})=\bT H^{(\ell-1)}_{\bT,g}({\bs x})\bW_{\ell}^g,   
\end{equation*}
with $H^{(0)}_{\bT,g}({\bs x})={\bs x}$. Here, $\bW_{\ell}^g\in\R^{d\times d}$ are the parameter matrices associated with the function $g$, with each of its entries lying in $[-1,1]$. Under the given conditions, every target function $f\in\cF_0(\beta,k,\bT)$ admits the representation
$$f({\bs x})= \sum_{i=1}^k \theta_i \bT^i{\bs x},\quad\mbox{where}\quad|\theta_i|\leq\beta\leq1.$$
For $k\leq L_1$, the parameter choice $\bW_{\ell}^{g}=\mathbf{I}_d$, for $1\leq\ell\leq k$, and $\bW_{\ell}^{g}={\bf 0}$, for $\ell>k$ yields for $1\leq\ell\leq k$,
\begin{align*}
H^{(\ell)}_{\bT,g}({\bs x})&=\bT H^{(\ell-1)}_{\bT,g}({\bs x})\mathbf{I}_d\\
&=\bT\bT^{\ell-1}{\bs x}\mathbf{I}_d\\
&=\bT^{\ell}{\bs x} 
\end{align*}    
and 
$$H^{(\ell)}_{\bT,g}({\bs x})={\bf 0},\quad\mbox{for}\quad \ell>k.$$ Let $\gamma_{\ell} = \theta_{\ell}$ for $1 \leq \ell \leq k$ and $\theta_{\ell} = 0$ otherwise. Summing the outputs from $\ell=1$ to $L_1$ then completes the proof.
\end{proof}

\subsection{Proof of Lemma~\ref{overall-approx}}
To prove Lemma~\ref{overall-approx}, we apply a direct consequence of Theorem 1 in \cite{Schmidt-Hieber} (page 1891), incorporating the necessary adjustments detailed in the correction note \cite{Schmidt-Hieber_Vu_2024_Correction}. It shows that deep neural networks can effectively approximate finite compositions of H\"older smooth functions.
\begin{lemma}\label{dnn-approx-lemma}
Let $t_i \in \mathbb{N}$ and $\alpha_i > 0$, for $i = 0, \ldots, q$ and let $\alpha_i^*$ be defined as in \eqref{effect-smooth}. Set $m_i=\lceil (\alpha_i+t_i)\log_2 n/(2\alpha_i^*+t_i) \rceil$, $L'_i = 8 + (m_i+5)(1 + \lceil \log_2(t_i \vee \alpha_i) \rceil)$, and $s_i= 141(t_i + \alpha_i + 1)^{3+t_i}(m_i + 6).$ Let $\varphi^{*} \in \cG(q, \bs{d}, \bs{t}, \bm{\alpha}, K)$ and let $Q_0=1$, $Q_i=(2K)^{\alpha_i}$ for $i \in [q-1]$, and $Q_q=K(2K)^{\alpha_q}$.
For any $N_i \in \mathbb{N}$ such that $N_i \geq (\alpha_i+1)^{t_i} \vee (Q_i+1)e^{t_i}$, there exists $h \in \cF\big(\overline{L}, (d, 6rN, \ldots, 6rN, 1), s,\infty\big)$ with 
$$\overline L=3q + \sum_{i=0}^q L_i',\quad r= \max_{i=0,\ldots,q} d_{i+1}(t_i + \lceil \alpha_i \rceil),\quad N=\max_{i=0,\ldots,q}N_i,\quad s\leq\sum_{i=0}^qd_{i+1}(s_iN+4),$$
and a positive constant $C$ depending only on $q,{\bs d},{\bs t},{\bs\alpha},K$, such that
\[
\sup_{{\bs x}\in[0,1]^d}|\varphi^{*}({\bs{x}}) - h({\bs{x}})|\leq C\cro{\sum_{i=0}^{q}\left(N_i^{-\frac{\alpha_i}{t_i}}+N_in^{-\frac{\alpha_i+t_i}{2\alpha_i^*+t_i}}\right)^{\prod_{\ell=i+1}^{q}(\alpha_{\ell}\wedge1)}}.
\]
\end{lemma}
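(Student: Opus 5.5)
The plan is to follow the proof of Theorem~1 in \cite{Schmidt-Hieber}, using the single-function approximation result (Theorem~5 there, in the form corrected by \cite{Schmidt-Hieber_Vu_2024_Correction}) as a black box. There are three stages: rescale each layer of the composition to the unit cube, approximate each rescaled component by a sparse ReLU network, and then assemble these networks and propagate the errors through the composition.

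\textbf{Rescaling.} Write $\varphi^*=g_q\circ\cdots\circ g_0$ with $g_{i,j}\in\cH^{\alpha_i}_{t_i}([a_i,b_i]^{t_i},K)$. Define affine rescalings
\[
h_0=g_0/(2K)+1/2,\qquad h_i=g_i\big((2K)\,\cdot-K\big)/(2K)+1/2 \ \ (1\le i\le q-1),\qquad h_q=g_q\big((2K)\,\cdot-K\big).
\]
Then each $h_{i,j}$ maps $[0,1]^{t_i}$ into $[0,1]$ (for $i<q$), and $\varphi^*=h_q\circ\cdots\circ h_0$. A chain-rule computation shows $h_{i,j}\in\cH^{\alpha_i}_{t_i}([0,1]^{t_i},Q_i)$ with exactly the constants $Q_0=1$, $Q_i=(2K)^{\alpha_i}$ and $Q_q=K(2K)^{\alpha_q}$ from the statement.

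\textbf{Approximating each component.} Apply the corrected Theorem~5 to each $h_{i,j}$ with $m=m_i$ and $N=N_i$; the condition $N_i\ge(\alpha_i+1)^{t_i}\vee(Q_i+1)e^{t_i}$ is exactly its hypothesis. This yields a network $\widetilde h_{i,j}$ of depth $L_i'$, width $6(t_i+\lceil\alpha_i\rceil)N_i$ and sparsity at most $s_iN_i$, with
\[
\|\widetilde h_{i,j}-h_{i,j}\|_\infty\lesssim N_i2^{-m_i}+N_i^{-\alpha_i/t_i}.
\]
Since $2^{-m_i}\le n^{-(\alpha_i+t_i)/(2\alpha_i^*+t_i)}$, this error is of order $N_in^{-(\alpha_i+t_i)/(2\alpha_i^*+t_i)}+N_i^{-\alpha_i/t_i}$.

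\textbf{Assembly.} For $i<q$, post-compose each $\widetilde h_{i,j}$ with the clipping map $x\mapsto 1-(1-x)_+$ applied to $x_+$. This costs two extra layers and a few extra parameters, and keeps outputs in $[0,1]$ without increasing the error. Next, place the $d_{i+1}$ networks of layer $i$ in parallel, with widths summed, which gives width at most $6rN$. Pad depths with identity layers where needed, and compose the $q+1$ blocks. This produces the stated depth $\overline L=3q+\sum_iL_i'$ and sparsity $s\le\sum_id_{i+1}(s_iN+4)$.

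\textbf{Error propagation.} Here I follow Lemma~3 of \cite{Schmidt-Hieber}: for compositions of Hölder maps,
\[
\Big\|h_q\circ\cdots\circ h_0-\widetilde h_q\circ\cdots\circ\widetilde h_0\Big\|_\infty\le Q\prod_\ell\sqrt{d_\ell}\sum_{i=0}^q\|h_i-\widetilde h_i\|_\infty^{\prod_{\ell>i}(\alpha_\ell\wedge1)}.
\]
The proof is by induction, using that a $\cH^{\alpha}$ function is $(\alpha\wedge1)$-Hölder on the cube. Inserting the componentwise bounds gives the displayed estimate with $C=C(q,\bs d,\bs t,\bs\alpha,K)$.

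\textbf{Main obstacle.} I expect the main difficulty to be the bookkeeping of the corrected Theorem~5. The correction note changes the width and sparsity constants and the admissibility condition on $N$, so I would recheck that its constants match $141(t_i+\alpha_i+1)^{3+t_i}(m_i+6)$ and $6(t_i+\lceil\alpha_i\rceil)N$. I would also check that the parameter bound $[-1,1]$ survives the affine rescalings, which requires absorbing the factors $2K$ into the Hölder constants $Q_i$ rather than into the network weights.
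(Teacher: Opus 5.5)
Your proposal is correct and follows the same route as the paper. The paper gives no independent proof: it takes the approximation step from the proof of Theorem~1 in \cite{Schmidt-Hieber}, with the depth parameter enlarged to $m_i=\lceil(\alpha_i+t_i)\log_2 n/(2\alpha_i^*+t_i)\rceil$ per \cite{Schmidt-Hieber_Vu_2024_Correction}. That argument is exactly your rescaling, per-component use of Theorem~5, clipping with parallel and serial composition, and Lemma~3 for error propagation.

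The bookkeeping you flag checks out. $L_i'$, the sparsity $s_iN_i$, the width $6(t_i+\lceil\alpha_i\rceil)N_i$ and the condition on $N_i$ are those of Theorem~5 with $(r,\beta,K,m)$ replaced by $(t_i,\alpha_i,Q_i,m_i)$. The choice of $m_i$ is what makes $(N_i2^{-m_i})^{\prod_{\ell>i}(\alpha_\ell\wedge1)}$ of the required order.
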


The proof of Lemma~\ref{overall-approx} relies moreover on the following stability lemma for compositions of H\"older smooth functions. It quantifies how input perturbations propagate through the compositional structure.
\begin{lemma}\label{stability-com}
For $0 \leq i \leq q$, let $h_i = (h_{i,1},\dots,h_{i,d_{i+1}})^{\top}$ be defined on $[0,1]^{d_i}$, where each $h_{i,j} \in \mathcal{H}_{t_i}^{\alpha_i}([0,1]^{t_i}, Q_i)$, $Q_i\geq1$ and $d_{q+1}=1$. Then, for any functions $u,v:\;\cD\to[0,1]^{d_0}$,
$$\big\|h_q\circ\ldots\circ h_0\circ u-h_q\circ\ldots\circ h_0\circ v\big\|_{\infty}\leq Q_{q}\left(\prod_{\ell=0}^{q-1}Q_{\ell}^{(\alpha_{\ell+1}\wedge1)}\right)\|u-v\|_{\infty}^{\prod_{\ell=0}^{q}(\alpha_{\ell}\wedge1)}.$$
\end{lemma}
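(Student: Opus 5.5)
We argue by induction on $q$, writing $H_k=h_k\circ\cdots\circ h_0$ and $\gamma_k=\prod_{\ell=0}^{k}(\alpha_\ell\wedge1)$, and tracking the constant $C_k$ in a bound of the form $\|H_k\circ u-H_k\circ v\|_\infty\leq C_k\|u-v\|_\infty^{\gamma_k}$. The basic ingredient is a single-layer H\"older-continuity estimate. If $g\in\mathcal{H}^{\alpha}_{t}([0,1]^t,Q)$, then for all $x,y\in[0,1]^t$,
\[
|g(x)-g(y)|\leq Q\,|x-y|_\infty^{\alpha\wedge1}.
\]
We check this in two cases. For $\alpha\leq1$ it is immediate from the definition of the H\"older ball, since the H\"older seminorm part of the norm is at most $Q$. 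For $\alpha>1$, the first-order partial derivatives exist and are bounded. The mean value theorem then gives $|g(x)-g(y)|\leq\sum_j\|\partial_jg\|_\infty|x-y|_\infty\leq Q|x-y|_\infty$, because the derivative sup-norms are part of the sum bounded by $Q$. Since $|x-y|_\infty\leq1$ on the unit cube and $Q\geq1$, this Lipschitz bound also implies the $\alpha\wedge1$ form. The function $h_{i,j}$ only uses $t_i$ of its $d_i$ coordinates, so the same estimate holds with the sup-norm over all $d_i$ coordinates on the right-hand side.

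\textbf{Base case and inductive step.} Applying the single-layer estimate to $h_0$ coordinatewise, at each point of $\cD$, gives
\[
\|h_0\circ u-h_0\circ v\|_\infty\leq Q_0\|u-v\|_\infty^{\alpha_0\wedge1}.
\]
For the inductive step, we apply the estimate to $h_k$ with inputs $H_{k-1}\circ u$ and $H_{k-1}\circ v$, which take values in $[0,1]^{d_k}$ as the lemma implicitly requires. This yields
\[
\|H_k\circ u-H_k\circ v\|_\infty\leq Q_k\,\|H_{k-1}\circ u-H_{k-1}\circ v\|_\infty^{\alpha_k\wedge1}\leq Q_k\,C_{k-1}^{\alpha_k\wedge1}\,\|u-v\|_\infty^{\gamma_{k-1}(\alpha_k\wedge1)}.
\]
Hence $C_k=Q_kC_{k-1}^{\alpha_k\wedge1}$ and $\gamma_k=\gamma_{k-1}(\alpha_k\wedge1)$.

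\textbf{Unrolling the constants.} Iterating the recursion gives
\[
C_q=Q_q\prod_{\ell=0}^{q-1}Q_\ell^{\prod_{j=\ell+1}^{q}(\alpha_j\wedge1)}.
\]
Each exponent $\prod_{j=\ell+1}^{q}(\alpha_j\wedge1)$ is at most $\alpha_{\ell+1}\wedge1$, since the remaining factors lie in $(0,1]$. Because $Q_\ell\geq1$, lowering the exponent can only decrease $Q_\ell$ to that power, so $Q_\ell^{\prod_{j=\ell+1}^{q}(\alpha_j\wedge1)}\leq Q_\ell^{\alpha_{\ell+1}\wedge1}$. This gives exactly the constant stated in the lemma. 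The exponent on $\|u-v\|_\infty$ is $\gamma_q=\prod_{\ell=0}^{q}(\alpha_\ell\wedge1)$, as claimed.

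\textbf{Main obstacle.} The only delicate point is the single-layer estimate for $\alpha>1$. There one must confirm that the norm in the definition of $\mathcal{H}^\alpha_t$ controls the sum of the first-derivative sup-norms, so that the constant is $Q$ rather than $tQ$. It does, because for $\alpha>1$ the multi-indices of order one are included in the sum over $|\bs\beta|<\alpha$. The argument also relies on the range assumption $H_{k-1}\circ u\in[0,1]^{d_k}$, which is what allows $h_k$ to be evaluated on both inputs.
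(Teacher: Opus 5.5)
Your proposal is correct and follows essentially the same route as the paper. The paper iterates the one-layer bound $\|H_i\circ u-H_i\circ v\|_\infty\le Q_i\|H_{i-1}\circ u-H_{i-1}\circ v\|_\infty^{\alpha_i\wedge1}$ and uses $Q_\ell\ge1$ to replace the exponents $\prod_{j=\ell+1}^{q}(\alpha_j\wedge1)$ by $\alpha_{\ell+1}\wedge1$. You do the same, and additionally spell out the single-layer H\"older/Lipschitz estimate and the unrolled constant that the paper leaves implicit.
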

\begin{proof}
The proof is based on a slight modification of the proof of Lemma~3 in \cite{Schmidt-Hieber}. Define $H_i=h_i\circ\ldots\circ h_0$. It follows that
\begin{align}
\|H_i\circ u-H_i\circ v\|_{\infty}&= \|h_i\circ H_{i-1}\circ u-h_i\circ H_{i-1}\circ v\|_{\infty}\nonumber\\
&\leq Q_i\left(\|H_{i-1}\circ u-H_{i-1}\circ v\|_{\infty}\right)^{\alpha_i\wedge1}.\label{com-b-i}
\end{align}
Applying \eqref{com-b-i} repeatedly and using the fact that $Q_i\geq1$ yields the result.
\end{proof}

\begin{proof}[Proof of Lemma~\ref{overall-approx}]
For any vector or matrix $\bf v$ and constants $\cK_1, \cK_2$, we write $\cK_1{\bf v} + \cK_2$ to denote the entrywise affine transformation $v_j \mapsto \cK_1 v_j + \cK_2$ for all entries $v_j$. For any function $f^*$ whose $j$-th component   $f^*_j = \varphi^{*} \circ \psi^*_{\mathbf{A},j}$ with $\psi^*_{\mathbf{A}}\in \cF_{\rho}(\beta,k,S_{\mathbf{A}})$, we can rewrite it as $$f_j^*=\varphi^{*}_{\beta}\circ\psi^*_{\beta,j},\quad\mbox{where}\ \varphi^{*}_\beta(\bs{z})=\varphi^{*}(\beta\bs{z}),\ \psi^*_{\beta,j}(\bs{x})=\frac{\psi^*_{\mathbf{A},j}(\bs{x})}{\beta}.$$
Consequently, $\psi^*_{\beta}\in\cF_0(1,k,S_{\mathbf{A}})$ if $\psi^*_{\mathbf{A}}\in\cF_0(\beta,k,S_{\mathbf{A}})$.
Lemma~\ref{gcn-approx} implies that there exists $g \in \cG(L_1,S_{\mathbf{A}})$ with $L_1\geq k$ such that for all $j=1,\ldots,n$,
\begin{equation}\label{gvn-part-error}
\|g_j-\psi^*_{\beta,j}\|_{\infty}\leq\rho<1.
\end{equation}
Given that $\psi^*_{\mathbf{A},j}(\bs{x}) \in [-M,M]^{d}$ for any $\bs{x} \in [0,1]^{n \times d}$, the $(j,i)$-th entry of $g(\bs{x}) \in \R^{n \times d}$, denoted by $g_j^i(\bs{x})$, satisfies $|g_j^i(\bs{x})| \leq M/\beta+\rho$. In the sequel, we denote $\mathcal{J} = M/\beta + \rho$ to simplify the notation. Observe that both $\bar\psi_{\beta,j}^* = \psi_{\beta,j}^*/(2\cJ) + 1/2$ and $\bar g = g/(2\cJ) + 1/2$ are functions on $[0,1]^{n\times d}$ with outputs in $[0,1]^{n\times d}$. 

Define $g^{*\beta}_{0}({\bs z})=g^*_{0}(\beta{\bs z})$. Then, under the representation $\varphi^{*} = g_q^* \circ \cdots \circ g_0^*$, where each $g^*_{i,j} \in \cH_{t_i}^{\alpha_i}([a_i,b_i]^{t_i},K)$ with $K\geq1$, we have $$\varphi^{*}_{\beta} = g_q^* \circ \cdots \circ g^{*\beta}_{0}.$$ Define 
$$h_0^{*\beta} = \frac{g_0^{*\beta}(2\cJ\cdot-\cJ)}{2K}+\frac{1}{2}=\frac{g_0^{*}(2\beta\cJ\cdot-\beta\cJ)}{2K}+\frac{1}{2},$$ and for $i=1,\ldots,q-1,$ $$h_i^* = \frac{g_i^*(2K\cdot - K)}{2K}+\frac{1}{2},\quad h_q^* = g_q^*(2K\cdot - K).$$ 
As a consequence, for $\phi=\psi_{\beta,j}^*$ or $g_j$, we have
\begin{equation}\label{equi-represent}
\varphi^{*}_{\beta}\circ\phi = g_q^* \circ \cdots \circ g_0^{*\beta}\circ\phi = h_q^* \circ \cdots \circ h_0^{*\beta}\circ\left(\frac{\phi}{2\cJ}+\frac{1}{2}\right).
\end{equation}
By definition, we have $h_{0,j}^{*\beta} \in\cH^{\alpha_0}_{t_0}([0,1]^{t_0},(2\beta\cJ)^{\alpha_0})$, $h_{i,j}^* \in\cH^{\alpha_i}_{t_i}([0,1]^{t_i},(2K)^{\alpha_i})$ for $i=1,\ldots,q-1$, and $h_{q,j}^* \in\cH^{\alpha_q}_{t_q}([0,1]^{t_q},K(2K)^{\alpha_q})$. Observe that $\beta\cJ\geq M\geq 1$ and $K\geq1$. Then, combining \eqref{gvn-part-error} with Lemma~\ref{stability-com} yields
\begin{align}
\|\varphi^{*}_{\beta}\circ g_j-\varphi^{*}_{\beta}\circ \psi^*_{\beta,j}\|_{\infty}&=\left\|g_q^* \circ \cdots \circ g_0^{*\beta}\circ g_j-g_q^* \circ \cdots \circ g_0^{*\beta}\circ \psi^*_{\beta,j}\right\|_{\infty}\nonumber\\
&=\left\|h_q^* \circ \cdots \circ h_0^{*\beta}\circ\bar g_j-h_q^* \circ \cdots \circ h_0^{*\beta}\circ \bar \psi^*_{\beta,j}\right\|_{\infty}\nonumber\\
&\leq(2\beta\cJ)^{\alpha_0}K\prod_{\ell=1}^{q} (2K)^{\alpha_{\ell}}\left\|\bar g_j-\bar \psi^*_{\beta,j}\right\|^{\prod_{\ell=0}^{q}(\alpha_\ell \wedge 1)}_{\infty}\nonumber\\
&\leq C\cdot\rho^{\prod_{\ell=0}^{q}(\alpha_\ell \wedge 1)},\label{appr-error-1}
\end{align}
where $C> 0$ depends only on $q$, $\beta$, $\bs{\alpha}$, $M$ and $K$.

Define $\tilde\varphi^{*}_{\beta} =\varphi^{*}_{\beta}(2\cJ\cdot - \cJ)$. Then $\tilde \varphi^{*}_{\beta} \in \cG(q, \bs{d}, \bs{t}, \bm{\alpha}, K')$ where $K'$ depends only on $\beta$, $M$, $K$ and $\bm{\alpha}$. According to Lemma~\ref{dnn-approx-lemma}, for any $N_i \in \mathbb{N}$ such that $N_i \geq (\alpha_i+1)^{t_i} \vee (Q_i+1)e^{t_i}$, there exists $\tilde h \in \cF\big(\overline{L}, (d, 6rN, \ldots, 6rN, 1), s\big)$ with
\begin{equation}\label{parameter-condition}
\overline{L} \asymp C_{q,\bs{t},\bs{\alpha}} \log_2 n, \quad 
r = C_{\bs{d},\bs{t},\bs{\alpha}}, \quad N =\max_{i=0\ldots,q}N_i, \quad 
s \leq C_{q,\bs{d},\bs{t},\bs{\alpha}}N \log_2 n,\end{equation}
such that
\begin{equation}\label{direct-result}
\sup_{\bs{x} \in [0,1]^d} |\tilde \varphi^{*}_{\beta}(\bs{x}) - \tilde h(\bs{x})| \leq C_{q,\bs{d},\bs{t},\bs{\alpha},K'}\cro{\sum_{i=0}^{q}\left(N_i^{-\frac{\alpha_i}{t_i}}+N_in^{-\frac{\alpha_i+t_i}{2\alpha_i^*+t_i}}\right)^{\prod_{\ell=i+1}^{q}(\alpha_{\ell}\wedge1)}},
\end{equation}
where all constants depend only on the parameters appearing in their subscripts. The bound \eqref{direct-result} further implies that if we define $h = \tilde h(\cdot/(2\cJ)+1/2)$, then
\begin{align*}
\|h\circ g_j-\varphi^{*}_{\beta}\circ  g_j\|_{\infty}&=\|\tilde h\circ\bar g_j-\tilde \varphi^{*}_{\beta}\circ\bar  g_j\|_{\infty}\\
&\leq C_{q,\bs{d},{\bs t},{\bs\alpha},K'}\cro{\sum_{i=0}^{q}\left(N_i^{-\frac{\alpha_i}{t_i}}+N_in^{-\frac{\alpha_i+t_i}{2\alpha_i^*+t_i}}\right)^{\prod_{\ell=i+1}^{q}(\alpha_{\ell}\wedge1)}}.
\end{align*}
Observe that the function $\zeta(\bs{z})=\bs{z}/(2\cJ)+1/2$ for $\bs{z}=(z_1,\ldots,z_d)^{\top}\in\R^d$ can be realized via a ReLU network where all parameters are bounded in $[-1,1]$. The construction proceeds coordinate-wise. For each $z_i$ with $i = 1, \ldots, d$, we implement the transformation using two layers. In the first layer, we use $2\lceil1/\cJ\rceil+1$ neurons to compute
\begin{align*}
a_1^{(i)}=\cdots=a_{\lceil1/\cJ\rceil}^{(i)} = \operatorname{ReLU}(1 \cdot z_i + 0),\quad a_{\lceil1/\cJ\rceil+1}^{(i)}=\cdots=a_{2\lceil1/\cJ\rceil}^{(i)} = \operatorname{ReLU}(-1 \cdot z_i + 0), \end{align*}
and $$a_{2\lceil1/\cJ\rceil+1}^{(i)} = \operatorname{ReLU}(0 \cdot z_i + 0.5).$$
In the first layer, we use in total 
$\cM_1=2d(2\lceil1/\cJ\rceil+1)$ parameters. In the second layer, we proceed
\begin{align*}
\zeta_i &= \frac{1}{2}\sum_{k=1}^{\lceil1/\cJ\rceil-1}a_{k}^{(i)}+\left(\frac{1}{2\cJ}-\frac{\lceil1/\cJ\rceil-1}{2}\right)a_{\lceil1/\cJ\rceil}^{(i)}\\
&+\left(-\frac{1}{2}\right) \sum_{k=\lceil1/\cJ\rceil+1}^{2\lceil1/\cJ\rceil-1}a_{k}^{(i)}+\left(-\frac{1}{2\cJ}+\frac{\lceil1/\cJ\rceil-1}{2}\right) \cdot a_{2\lceil1/\cJ\rceil}^{(i)} + 1 \cdot a_{2\lceil1/\cJ\rceil+1}^{(i)}.    
\end{align*}
This yields $\zeta_i = z_i/(2\cJ) + 1/2$ for all $z_i \in \R$, with the networks parameters all lying within $[-1,1]$. Observe that the output layer from the preceding construction can be combined with the input layer of the network $\cF\big(\overline{L}, (d, 6rN, \ldots, 6rN, 1), s\big)$. This yields 
\begin{equation}\label{construct-network}
h \in \cF\big(\overline{L}+1, (d,(2\lceil1/\cJ\rceil+1)d, 6rN, \ldots, 6rN, 1), \overline s\big),    
\end{equation}
where $$\overline s=s +\cM_1 + ((2\lceil1/\cJ\rceil+1)d+1)6rN,$$ $\overline{L}$, $r$, and $s$ are as specified in \eqref{parameter-condition}. 

Consequently,
\begin{align}
\|h \circ g_j - \varphi^{*}_{\beta} \circ g_j\|_{\infty} \leq C_{q,\bs{d},\bs{t},\bs{\alpha},K'}\cro{\sum_{i=0}^{q}\left(N_i^{-\frac{\alpha_i}{t_i}}+N_in^{-\frac{\alpha_i+t_i}{2\alpha_i^*+t_i}}\right)^{\prod_{\ell=i+1}^{q}(\alpha_{\ell}\wedge1)}}. \label{appr-error-2}
\end{align}
Combining \eqref{appr-error-1} with \eqref{appr-error-2}, we find that there exists a network $h$ in the class \eqref{construct-network} such that
\begin{align*}
\|h\circ g_j-\varphi^{*}_{\beta}\circ\psi^*_{\beta,j}\|_{\infty}&\leq\|h\circ g_j-\varphi^{*}_{\beta}\circ  g_j\|_{\infty}+\|\varphi^{*}_{\beta}\circ g_j-\varphi^{*}_{\beta}\circ \psi^*_{\beta,j}\|_{\infty}\\
&\leq C_{q,\beta,\bs{d},{\bs t},{\bs\alpha},M,K}\cro{\sum_{i=0}^{q}\left(N_i^{-\frac{\alpha_i}{t_i}}+N_in^{-\frac{\alpha_i+t_i}{2\alpha_i^*+t_i}}\right)^{\prod_{\ell=i+1}^{q}(\alpha_{\ell}\wedge1)}\hspace{-3pt}+\rho^{\prod_{i=0}^{q}(\alpha_{i}\wedge1)}}.
\end{align*}
Define $h_F=(h\vee{-F})\wedge F$ truncating the network output to $[-F,F]$. Observe that with $F\geq K$, 
\begin{align*}
\|h_F\circ g_j-\varphi^{*}_{\beta}\circ\psi^*_{\beta,j}\|_{\infty}&\leq\|h\circ g_j-\varphi^{*}_{\beta}\circ\psi^*_{\beta,j}\|_{\infty}\\
&\leq C_{q,\beta,\bs{d},{\bs t},{\bs\alpha},M,K}\cro{\sum_{i=0}^{q}\left(N_i^{-\frac{\alpha_i}{t_i}}+N_in^{-\frac{\alpha_i+t_i}{2\alpha_i^*+t_i}}\right)^{\prod_{\ell=i+1}^{q}(\alpha_{\ell}\wedge1)}\hspace{-10pt}+\rho^{\prod_{i=0}^{q}(\alpha_{i}\wedge1)}}.
\end{align*}
This completes the proof.
\end{proof}

\subsection{Proof of Lemma~\ref{t-s-app-error}}
\begin{proof}
Let $d_{\max} = d_{\bT}+d_{\bS_{\mathbf{A}}}$ and $\cA=\|\bT\|_{1,\infty}\vee\|\bS_{\mathbf{A}}\|_{1,\infty}$. Given that $\|\bT - \bS_{\mathbf{A}}\|_{\operatorname{F}} \leq \tau$ and each row has at most $d_{\max}$ nonzero entries, for any $i = 1, \ldots, n$, the Cauchy-Schwarz inequality gives
\begin{equation}\label{row-b-t-s}
\|\bT - \bS_{\mathbf{A}}\|_{1,\infty} \leq \max_{i\in [n]} \, \sum_{j=1}^n |(\bT - \bS_{\mathbf{A}})_{i,j}| \leq \sqrt{d_{\max}} \max_{i\in [n]} \,  \sqrt{\sum_{j=1}^n |(\bT - \bS_{\mathbf{A}})_{i,j}|^2} \leq \sqrt{d_{\max}}\tau. \end{equation}
We now prove that for any positive integer $i$ and row index $j \in\{1,\ldots,n\}$,
\[
\left|\big((\bT^i-\bS_{\mathbf{A}}^i){\bs x}\big)_{j,\cdot}\right|_{\infty} \leq i\sqrt{d_{\max}}\tau\cA^{i-1}.
\]
Decomposing
\(
\bT^i - \bS_{\mathbf{A}}^i = \sum_{\ell=0}^{i-1}\bT^{\ell}(\bT-\bS_{\mathbf{A}})\bS_{\mathbf{A}}^{i-1-\ell},
\)
yields
\begin{align}
\left|\big((\bT^i-\bS_{\mathbf{A}}^i){\bs x}\big)_{j,\cdot}\right|_{\infty}=\left|{\bs e}_j^{\top}(\bT^i-\bS_{\mathbf{A}}^i){\bs x}\right|_{\infty}\leq\sum_{\ell=0}^{i-1}\left| {\bs e}_j^\top\bT^{\ell}(\bT-\bS_{\mathbf{A}})\bS_{\mathbf{A}}^{i-1-\ell}{\bs x}\right|_{\infty},
    \label{eq.94fbwd}
\end{align}
where ${\bs e}_j$ represents the $j$-th standard basis vector. For each $\ell$, let ${\bs v}^{(\ell)} ={\bs e}_j^\top \bT^{\ell}\in\R^{1\times n}$. It follows that for each term
\begin{align}
\left|{\bs e}_j^\top \bT^{\ell}(\bT-\bS_{\mathbf{A}})\bS_{\mathbf{A}}^{i-1-\ell}{\bs x}\right|_{\infty}&\quad \leq \left|{\bs v}^{(\ell)}(\bT-\bS_{\mathbf{A}})\bS_{\mathbf{A}}^{i-1-\ell}\right|_1 \cdot \|{\bs x}\|_{\infty} \nonumber\\
&\quad \leq\left|{\bs v}^{(\ell)}(\bT-\bS_{\mathbf{A}})\bS_{\mathbf{A}}^{i-1-\ell}\right|_1.\label{1-norm-b}
\end{align}
From \eqref{1-norm-b}, we can further bound the right-hand side 
\begin{align}
\left|{\bs v}^{(\ell)}(\bT-\bS_{\mathbf{A}})\bS_{\mathbf{A}}^{i-1-\ell}\right|_1 
&= \sum_{m=1}^n \left|\sum_{h=1}^n  {\bs v}_h^{(\ell)}\cro{\sum_{p=1}^n (\bT-\bS_{\mathbf{A}})_{h,p}(\bS_{\mathbf{A}}^{i-1-\ell})_{p,m}}\right|\nonumber\\
&\leq \sum_{m=1}^n \sum_{h=1}^n\left|{\bs v}_h^{(\ell)}\right|\cro{ \sum_{p=1}^n \left|(\bT-\bS_{\mathbf{A}})_{h,p}\right| \cdot \left|(\bS_{\mathbf{A}}^{i-1-\ell})_{p,m}\right|}\nonumber\\
&\leq\left|{\bs v}^{(\ell)}\right|_1\cdot\|\bT-\bS_{\mathbf{A}}\|_{1,\infty}\cdot\|\bS_{\mathbf{A}}^{i-1-\ell}\|_{1,\infty}\nonumber\\
&\leq \sqrt{d_{\max}}\tau\cdot\left|{\bs e}_j^\top \bT^{\ell}\right|_1\cdot\|\bS_{\mathbf{A}}^{i-1-\ell}\|_{1,\infty},\label{bound-ej}
\end{align}
where the last inequality follows from \eqref{row-b-t-s}. We claim that $|{\bs e}_j^{\top} \bT^{\ell}|_1 \leq \|\bT\|_{1,\infty}^{\ell}$ for all $\ell \geq 0$. When $\ell = 0$, the bound is immediate since $|{\bs e}_j^{\top}|_1 = 1 \leq \|\bT\|_{1,\infty}^0 = 1$. Now assume the claim holds for some $\ell - 1 \geq 0$, i.e.,
\[
\left|{\bs e}_j^{\top}\bT^{\ell-1} \right|_1=\sum_{h=1}^{n}|{\bs v}_h^{(\ell-1)}| \leq \|\bT\|_{1,\infty}^{\ell-1}.
\]
We show that it also holds for $\ell$. Observe that
\[
\left|{\bs e}_j^{\top}\bT^{\ell} \right|_1 = \left| \left({\bs e}_j^{\top}\bT^{\ell-1} \right)\bT \right|_1=\sum_{m=1}^n\left|\sum_{h=1}^n{\bs v}_h^{(\ell-1)} \bT_{h,m}\right| 
\leq\sum_{h=1}^n\left|{\bs v}_h^{(\ell-1)}\right|\left(\sum_{m=1}^n\left|\bT_{h,m}\right|\right)\leq \|\bT\|_{1,\infty}^{\ell}.
\]
This completes the induction and proves the claim. Similarly, we can show that $\|\bS_{\mathbf{A}}^{i-1-\ell}\|_{1,\infty}\leq\|\bS_{\mathbf{A}}\|_{1,\infty}^{i-1-\ell}$, for $0\leq\ell\leq i-1$. Plugging these two bounds into \eqref{bound-ej} yields
\begin{align*}
\left|{\bs e}_j^{\top}\bT^{\ell}(\bT-\bS_{\mathbf{A}})\bS_{\mathbf{A}}^{i-1-\ell}{\bs x}\right|_{\infty}\leq\sqrt{d_{\max}}\tau\cA^{i-1}.
\end{align*}
Together with \eqref{eq.94fbwd}, this gives
$$\left|\big((\bT^i-\bS_{\mathbf{A}}^i){\bs x}\big)_{j,\cdot}\right|_{\infty}\leq i\cA^{i-1}\sqrt{d_{\max}}\tau.$$
Hence, we conclude for any $j\in[n]$,
\begin{align*}
\left|\Big(\sum_{i=1}^{L_1}\theta_i\big(\bT^i-\bS_{\mathbf{A}}^i\big){\bs x}\Big)_{j,\cdot}\right|_{\infty}&\leq\sum_{i=1}^{L_1}|\theta_i|\left|\big((\bT^i-\bS_{\mathbf{A}}^i){\bs x}\big)_{j,\cdot}\right|_{\infty}\\ 
&\leq\sqrt{d_{\max}}\tau\left(\sum_{i=1}^{L_1}|\theta_i|\cA^{i-1}i\right).
\end{align*}
\end{proof}

\bibliographystyle{plain}
\bibliography{bib}  

\end{document}